\title[Leveraging loss predictors in contextual bandit learning]{Taking a hint: How to leverage loss predictors in contextual bandits?}
\DeclareMathOperator*{\argmin}{argmin} 
\newcommand{\calL}{\mathcal{L}}
\newcommand{\calC}{\mathcal{C}}
\newcommand{\calI}{\mathcal{I}}
\newcommand{\calS}{\mathcal{S}}
\newcommand{\calD}{\mathcal{D}}
\newcommand{\calT}{\mathcal{T}}
\newcommand{\calA}{\mathcal{A}}
\newcommand{\calX}{\mathcal{X}}
\newcommand{\calP}{\mathcal{P}}
\newcommand{\calE}{\mathcal{E}}
\newcommand{\calM}{\mathcal{M}}
\newcommand{\one}{\mathbbm{1}}
\newcommand{\hatcalC}{\widehat{\mathcal{C}}}
\newcommand{\E}{\mathbb{E}}
\newcommand{\R}{\mathbb{R}}
\newcommand{\order}{\mathcal{O}}
\newcommand{\otil}{\widetilde{\mathcal{O}}}
\newcommand{\Omegatil}{\widetilde{\Omega}}
\newcommand{\Reg}{\text{\rm Reg}}
\newcommand{\catoni}{\text{\rm Catoni}}
\newcommand{\hatReg}{\widehat{\Reg}}
\newcommand{\ellhat}{\widehat{\ell}}
\newcommand{\elltil}{\widetilde{\ell}}
\newcommand{\calEhat}{\widehat{\calE}}
\newcommand{\alphahat}{\widehat{\alpha}}
\newcommand{\expfour}{\textsc{Exp4}\xspace}
\newcommand{\minimonster}{\textsc{ILOVETOCONBANDITS}\xspace}
\newcommand{\Var}{\mathbb{V}}
\newcommand{\ERM}{\text{\rm ERM}}
\newcommand{\inner}[1]{\left\langle #1 \right\rangle}
\newcommand{\expfourOAR}{\textsc{Exp4.OAR}\xspace}
\newcommand{\expfourOVAR}{\textsc{Exp4.OVAR}\xspace}
\newcommand{\greedyAR}{\textsc{$\epsilon$-Greedy.AR}\xspace}
\newcommand{\greedyARC}{\textsc{$\epsilon$-Greedy.ARC}\xspace}
\newcommand{\greedyVAR}{\textsc{$\epsilon$-Greedy.VAR}\xspace}
\newcommand{\expfourMOAR}{\textsc{Exp4.MOAR}\xspace}
\newcommand{\ILTCBMARC}{\textsc{ILTCB.MARC}\xspace}
\newcounter{protocol}
\newcounter{algorithm saved}
\def\renewtheorem#1{%
  \expandafter\let\csname#1\endcsname\relax
  \expandafter\let\csname c@#1\endcsname\relax
  \gdef\renewtheorem@envname{#1}
  \renewtheorem@secpar
}
\def\renewtheorem@secpar{\@ifnextchar[{\renewtheorem@numberedlike}{\renewtheorem@nonumberedlike}}
\def\renewtheorem@numberedlike[#1]#2{\newtheorem{\renewtheorem@envname}[#1]{#2}}
\def\renewtheorem@nonumberedlike#1{
\def\renewtheorem@caption{#1}
\edef\renewtheorem@nowithin{\noexpand\newtheorem{\renewtheorem@envname}{\renewtheorem@caption}}
\renewtheorem@thirdpar
}
\def\renewtheorem@thirdpar{\@ifnextchar[{\renewtheorem@within}{\renewtheorem@nowithin}}
\def\renewtheorem@within[#1]{\renewtheorem@nowithin[#1]}
\begin{document}
\SetAlgoVlined

\maketitle

\begin{abstract}

We initiate the study of learning in contextual bandits with the help of loss predictors.
The main question we address is whether one can improve over the minimax regret $\mathcal{O}(\sqrt{T})$ for learning over $T$ rounds, when the total error of the predicted losses relative to the realized losses, denoted as $\calE \leq T$, is relatively small.
We provide a complete answer to this question, with upper and lower bounds for various settings: adversarial and stochastic environments, known and unknown $\calE$, and single and multiple predictors.
We show several surprising results, such as
1) the optimal regret is $\mathcal{O}(\min\{\sqrt{T}, \sqrt{\calE}T^\frac{1}{4}\})$ when $\calE$ is known, in contrast to the standard and better bound $\mathcal{O}(\sqrt{\calE})$ for non-contextual problems (such as multi-armed bandits);
2) the same bound cannot be achieved if $\calE$ is unknown, but as a remedy, $\mathcal{O}(\sqrt{\calE}T^\frac{1}{3})$ is achievable; 
3) with $M$ predictors, a linear dependence on $M$ is necessary, even though logarithmic dependence is possible for non-contextual problems.

We also develop several novel algorithmic techniques to achieve matching upper bounds, including
1) a key \emph{action  remapping} technique for optimal regret with known $\calE$,
2) computationally efficient implementation of Catoni's robust mean estimator via an ERM oracle in the stochastic setting with optimal regret,
3) an underestimator for $\calE$ via estimating the histogram with bins of exponentially increasing size for the stochastic setting with unknown $\calE$, and
4) a self-referential scheme for learning with multiple predictors,
all of which might be of independent interest.
\end{abstract}



%

\section{Introduction}
Online learning with the help of loss predictors has been widely studied over the past decade.
In these problems, before making a decision at each round $t$, the learner is given some prediction $m_t$ of the true gradient or loss vector $\ell_t$.
The goal is to ensure regret that is much smaller than the worst-case bound as long as these predictions are indicative of the loss vectors.
For example, for most problems with $\Theta(\sqrt{T})$ minimax regret for learning over $T$ rounds, it has been shown that a more adaptive regret bound of order $\order(\sqrt{\calE})$ is possible, where $\calE = \sum_{t=1}^T \|\ell_t-m_t\|_\infty^2$ is the total error of the predictions, which is at most $\order(T)$ but could be much smaller if a good predictor is available.
Such a bound is achievable for problems with full information feedback~\citep{rakhlin2013online, SteinhardtLi14}, as well as partial information feedback such as multi-armed bandits~\citep{wei2018more} and linear bandits~\citep{rakhlin2013online}.

In contextual bandits~\citep{AuerCeFrSc02, LangfordZh08}, a generalization of multi-armed bandits that has been proven to be useful for applications such as personalized recommendation systems in practice, loss predictors are also commonly used to construct doubly-robust estimators, both for off-policy evaluation~\citep{dudik2014doubly} and online exploration~\citep{AgarwalHsKaLaLiSc14}. The potentially lower variance of these doubly-robust estimators has been used to motivate this line of work, and resulting improvements are well established for policy evaluation in both finite sample~\citep{dudik2014doubly} and asymptotic settings~\citep{robins1995semiparametric}. In the online exploration setting, however, the extent of benefits from a good loss predictor and potential rate improvements beyond the worst case $O(\sqrt{T})$ bound have not been studied at all, despite all the works mentioned above for the simpler non-contextual settings.

In this work, we take the first attempt in addressing this question and provide a rather complete answer on upper and lower bounds for various setups: adversarial and stochastic environments, known and unknown $\calE$, and single and multiple predictors.
The main message is that good predictors indeed help reduce regret for contextual bandits, but {\it not to the same extent as the non-contextual settings}.
Specifically, our contributions are (see also Table~\ref{tab:main} for a summary):

\renewcommand{\arraystretch}{2}
\begin{table}[]
\caption{\small Summary of main results. $T$ is the total number of rounds. For single predictor, $\calE \leq T$ is the total error of predictions. For multiple predictors, $\calE^*$ is the total error of the best predictor and $M$ is the number of predictors.
Dependence on other parameters is omitted.
Note that for the case with known $\calE$ or $\calE^*$, one can achieve the minimum of $\order(\sqrt{T})$ and the stated upper bound by simply comparing the two bounds and choosing between the minimax algorithm and our algorithms.
}
    \label{tab:main}
    \centering
    \resizebox{1\textwidth}{!}{%
    \begin{tabular}{|c|c|c|c|}
    \hline
         & \makecell{Single predictor \\ with known $\calE$}
         & \makecell{Single predictor \\ with unknown $\calE$}
         & \makecell{Multiple predictors \\ with known $\calE^*$}  \\

    \hline
    \makecell{Lower bound \\ for $\calE, \calE^*, M= \order(\sqrt{T})$}
    & \makecell{$\Omega(\sqrt{\calE}T^{\frac{1}{4}})$ \\ {[}Theorem \ref{theorem: lower bound single predictor}{]}}
    &  \makecell{$\order(\sqrt{\calE}T^{\frac{1}{4}})$ is impossible\\ {[}Theorem \ref{theorem: impossibility of unknown V}{]}}
    & \makecell{ $\Omega(\sqrt{\calE^*}T^{\frac{1}{4}} + M)$ \\ {[}Theorem \ref{theorem: lower bound multiple predictors}{]}} \\

    \hline
    \makecell{Upper bound \\ in the adversarial setting}
    & \makecell{$\order(\sqrt{\calE}T^{\frac{1}{4}})$ \\ {[}Theorem \ref{thm:OExp4}{]}}
    &  \makecell{$\order(\sqrt{\calE} T^{\frac{1}{3}})$ \\ {[}Theorem \ref{thm:adaptive_OExp4}{]}}
    & \makecell{$\order(\sqrt{M\calE^*}T^{\frac{1}{4}})$ \\ {[}Theorem \ref{thm:multi_pred_OExp4}{]}} \\

    \hline
    \makecell{Upper bound \\ in the i.i.d. setting with \\ oracle-efficient algorithms}
    & \makecell{$\order(\sqrt{\calE}T^{\frac{1}{4}})$ \\ {[}Theorem \ref{thm:eps_greedy}{]}}
    & \makecell{$\order(\sqrt{\calE}T^{\frac{1}{3}})$ \\ {[}Theorem \ref{thm:adaptive_eps_greedy}{]}}
    & \makecell{$\order(M^{\frac{2}{3}} (\calE^*T)^{\frac{1}{3}})$ \\ {[}Theorem \ref{thm:multi_pred_iid}{]}}\\
    \hline
    \end{tabular}
    }
\end{table}

\begin{itemize}[leftmargin=*]
  \setlength\itemsep{0em}
\item
(Section~\ref{sec:adversarial})
In the adversarial setting where contexts, losses, and predictions are all decided by an adversary,
we show that, somewhat surprisingly, the regret is at least $\Omega(\min\{\sqrt{\calE}T^\frac{1}{4}, \sqrt{T}\})$, and we also provide an algorithm with a matching regret upper bound  when $\calE$ is known.
When $\calE$ is unknown, we show that it is impossible to achieve the same bound,
and as a remedy, we provide an adaptive version of our algorithm with regret $\order(\sqrt{\calE}T^\frac{1}{3})$, which is always sublinear and is better than $\order(\sqrt{T})$ as long as $\calE = o(T^\frac{1}{3})$.
Note that these results are in sharp contrast with the typical bound $\order(\sqrt{\calE})$, for non-contextual problems.
For multi-armed bandits, even with unknown $\calE$,  $\order(\sqrt{\calE})$ is achievable~\citep{wei2018more}, indicating that the difficulty indeed comes from the contexts, and not just the bandit feedback.

\item
(Section~\ref{sec:iid})
In the stochastic setting where contexts, losses, and predictions are jointly i.i.d. samples from a fixed and unknown distribution,
we show the exact same lower and upper bounds with known or unknown $\calE$,
but importantly our algorithms are efficient assuming access to some ERM oracle. This a typical computational model for studying efficient contextual bandits algorithms, and avoiding running time that is polynomial in the number of polices~\citep{LangfordZh08, AgarwalHsKaLaLiSc14, SyrgkanisLuKrSc16}. Somewhat surprisingly, we find that an adaptation of $\epsilon$-greedy exploration is optimal when $\calE = \order(\sqrt{T})$.

\item
(Section~\ref{sec:multiple})
Finally, we extend our results to the setting where $M$ predictors are available and the goal is to improve the $\order(\sqrt{T})$ regret as long as the total error $\calE^*$ of the best predictor is relatively small.
For simplicity we assume known $\calE^*$.
We show a lower bound  $\Omega(\min\{\sqrt{\calE^*}T^{\frac{1}{4}} + M, \sqrt{T}\})$ when $M = \order(\sqrt{T})$, as well as an upper bound of $\order(\min\{(\sqrt{M\calE^*}T^{\frac{1}{4}} + M, \sqrt{T}\})$ for the adversarial setting and an upper bound of $\order(\min\{M^{\frac{2}{3}} (\calE^*T)^{\frac{1}{3}}, \sqrt{T}\})$ for the stochastic setting with an oracle-efficient algorithm.
This is also in contrast with the non-contextual settings where the dependence on $M$ is logarithmic, even with bandit feedback~\citep{rakhlin2013online}.

\end{itemize}
\vspace*{-0.1cm}

Throughout, we focus on finite action and policy sets in this work to cleanly illustrate the key ideas. Extensions to infinite actions and policies are interesting avenues for future work.

\vspace*{0.15cm}
\noindent\textbf{Techniques.}
Our algorithms 
require several novel techniques, briefly summarized below:
\begin{itemize}[leftmargin=*]
  \setlength\itemsep{0em}
\item
Most importantly, all our algorithms rely on an \emph{action remapping} technique, which restricts the algorithm's attention to only a subset of actions at each round.
This subset consists of actions with predicted loss not larger than that of a baseline action (such as the action with the smallest predicted loss) by a certain amount,
and the algorithms pretend that all actions outside this set are just the baseline action.
For the adversarial setting with multiple predictors, we also need to apply a self-referential scheme to find the baseline and construct this subset, an idea similar to sleeping experts~\citep{freund1997using}.
We prove that this action remapping technique reduces both the exploration overhead and the variance of estimators.

\item
Our algorithms for the stochastic setting require using robust mean estimators.
In particular, we use the Catoni's estimator~\citep{catoni2012challenging} and show that it can be implemented efficiently using the ERM oracle, which might be of independent interest and useful for developing oracle-efficient algorithms for other problems.

\item
When $\calE$ is unknown, we construct a novel underestimator of $\calE$ by estimating the histogram of the distribution of $\|\ell_t - m_t\|$ with bins of exponentially increasing size in the stochastic setting.
\end{itemize}

\paragraph{Related work.}
Similar to prior work such as~\citep{rakhlin2013online} (for non-contextual problems),
we consider generic loss predictions given by any predictors as inputs of the algorithm.
A series of works focus on choosing specific predictions based on observed data and deriving data-dependent bounds in terms of the variation of the environment~\citep{HazanKa10, HazanKa11, chiang2012online, chiang2013beating, SteinhardtLi14, wei2018more, bubeck2019improved},
which are themselves useful for applications such as faster convergence to equilibria for game playing~\citep{RakhlinSr13, SyrgkanisAgLuSc15, wei2018more}.
Whether similar applications can be derived based on our results is an interesting future direction.

\expfour is the classic algorithm with optimal regret $\order(\sqrt{T})$ for the adversarial setting~\citep{AuerCeFrSc02}, albeit with running time linear in the number of policies.
For the stochastic setting, the simple $\epsilon$-greedy algorithm is oracle-efficient but with suboptimal regret $\order(T^\frac{2}{3})$~\citep{LangfordZh08}.
Later, \citet{AgarwalHsKaLaLiSc14} proposed an oracle-efficient and optimal algorithm \minimonster.
All these algorithms are building blocks for our methods.

Developing adaptive regret bounds for contextual bandits is relatively under-explored.
The only existing work on contextual learning that considers a similar setting with loss predictors is~\citep[Section~6]{SyrgkanisKrSc16}, but they only consider the easier full-information feedback.
On a different direction, \citet{allen2018make} derived the first small-loss bound for contextual bandits.

Our idea of using robust estimators is inspired by~\citep{krishnamurthy2019contextual}, which studies contextual bandits with continuous actions and uses median-of-means, a standard robust estimator, for a different purpose.
It is unclear whether median-of-means can be implemented efficiently via an ERM oracle.
Instead, we turn to Catoni's estimator~\citep{catoni2012challenging}, which provides a similar concentration guarantee and can be implemented efficiently as we show.


\section{Problem Description and Lower Bounds}
\label{sec:setup}

Contextual bandits is a generalization of the classic multi-armed bandit problem, where before choosing one of the $K$ actions at each round, the learner observes a context from some arbitrary context space $\calX$.
In addition to the context, we consider a variant where a {\it loss predictor} is also available.
Specifically, for each round $t = 1, \ldots, T$,
the environment chooses a context $x_t \in \calX$, a loss vector $\ell_t \in [0,1]^K$, and a loss predictor $m_t \in [0,1]^K$;
the learner then receives $x_t$ and $m_t$;
finally, the learner chooses an action $a_t \in [K]$ and observes its loss $\ell_t(a_t)$.

We consider both the adversarial setting and the stochastic setting.
In the former, the sequence $(x_t, \ell_t, m_t)_{1:T}$ can be arbitrary and even depend on the learner's strategy.
For simplicity we assume it is decided ahead of time before the game starts (also known as the oblivious setting).
In the latter, each triple $(x_t, m_t, \ell_t)$ is drawn independently from a fixed and unknown distribution $\calD$.

As in the standard contextual bandits setup, the learner has access to some fixed policy class $\Pi \subseteq [K]^\calX$, assumed to be finite (for simplicity) with cardinality $N$, and her goal is to minimize the (pseudo) regret against the best fixed policy:
$$
  \textstyle  \Reg(\calA) \triangleq \max_{\pi\in\Pi} \E\big[\sum_{t=1}^T \ell_t(a_t) - \sum_{t=1}^T \ell_t(\pi(x_t))\big],
$$
where the expectation is with respect to the randomness of the learner, denoted as the algorithm $\calA$ which chooses $a_1, \ldots, a_T$, and also that of the environment in the stochastic case.
When it is clear from the context, we omit the dependence on $\calA$ and simply denote the regret by $\Reg$.
It is well-known that the optimal worst-case regret is $\order(\sqrt{dT})$ where we define $d \triangleq K\ln N$.\footnote{%
Throughout the paper, we do not make an effort to optimize the dependence on $K$ and $\ln N$. For example, we often relax $K^2\ln N$ by $d^2$ for ease of presentation.
For most discussions, we also ignore the dependence on $d$ and only focus on the dependence on $T$ and $\calE$.
}
The key question we address in this work is whether one could improve upon this worst-case  bound when the predictor is accurate.
More specifically, we denote the total loss of the predictor by $\calE \triangleq \sum_{t=1}^T  \|\ell_t-m_t\|_\infty^2$ for the adversarial setting and $\calE \triangleq T\E_{(x, \ell, m)\sim\calD}\left[\|\ell-m\|_\infty^2\right]$ for the stochastic setting, and we ask the following question:
\[
\noindent \textbf{(Q1)} \quad \textit{Can we improve the regret over $\order(\sqrt{dT})$ if $\calE = o(T)$? }
\]
Note that for the special case of multi-armed bandits where $\Pi$ consists of $K$ constant mappings that always pick one of the $K$ actions (that is, contexts are ignored), \citet{wei2018more} show that $\order(\sqrt{d\calE})$ is achievable,
an improvement over $\order(\sqrt{dT})$ {\it as long as} $\calE = o(T)$.
A natural guess would be that the same holds true for contextual bandits.
However, somewhat surprisingly, in the following theorem we show that this is not the case (proofs for all lower bounds are deferred to Appendix~\ref{app:lower_bounds}).

\begin{theorem}
    \label{theorem: lower bound single predictor}
    For any algorithm and any value $V\in[0,T]$, there exists a (stochastic or adversarial) environment with $\calE\leq V$ and $N=\Theta(\sqrt{KT})$ such that $\Reg(\calA) = \Omegatil\big(\min\big\{ \sqrt{V}(KT)^{\frac{1}{4}}, \sqrt{KT} \big\}\big)$.\footnote{Note that while seemingly a lower bound for the stochastic environments should imply the same for the adversarial environments, there is a subtle technical difference due to the slightly different definitions of $\calE$ in these two cases. We provide proofs for both environments.}
\end{theorem}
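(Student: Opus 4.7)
The plan is to prove the lower bound by constructing a single hard stochastic instance with $\calE \le V$ controlled deterministically (so the bound transfers to the adversarial setting without further work) and then running an information-theoretic argument to lower-bound the regret. The $\Omegatil(\sqrt{KT})$ branch of the $\min$ is the classical worst-case contextual-bandit lower bound; the content of the theorem is in the $\Omegatil(\sqrt{V}(KT)^{1/4})$ branch, which is the part I focus on below.

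The construction will have contexts drawn uniformly from a set of size $C = \Theta(\sqrt{T/K})$, an uninformative predictor $m_t \equiv \tfrac{1}{2}\mathbf{1}$, a policy class $\Pi \subseteq [K]^C$ of size $N = \Theta(\sqrt{KT})$ (feasible since $\log_K N = O(\log(KT)) \le C$), and a hidden policy $\pi^* \sim \mathrm{Unif}(\Pi)$. A subset of $R = \Theta(\sqrt{KT})$ rounds is declared \emph{informative}: on such a round at context $x_t$, $\ell_t(\pi^*(x_t)) = \tfrac{1}{2} - \delta$ and $\ell_t(a) = \tfrac{1}{2}$ for all other $a$, with $\delta = \sqrt{V/R} = \Theta(\sqrt{V}(KT)^{-1/4})$; on the remaining rounds $\ell_t \equiv m_t$. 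Then $\sum_t \|\ell_t - m_t\|_\infty^2 = R\delta^2 = V$ deterministically, so the same instance is valid under both the adversarial and stochastic definitions of $\calE$. Note that $R\delta = \Theta(\sqrt{V}(KT)^{1/4})$ matches the target and $R/C = \Theta(K)$, so each context sees only $\Theta(K)$ informative rounds in expectation. For $V > \sqrt{KT}$, substituting $\sqrt{KT}$ for $V$ in the construction still gives $\calE \le V$ and recovers the $\Omegatil(\sqrt{KT})$ branch.

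The main step is to show $\Reg = \Omega(R\delta)$ in expectation over $\pi^*$. The learner gains no information about $\pi^*$ unless it plays $\pi^*(x_t)$ on an informative round; this event has probability at most $R/(TK)$ per round under uniform exploration, so the expected number of useful observations over $T$ rounds is only $\Theta(R/K) = \Theta(\sqrt{T/K})$. A multi-hypothesis Fano or Assouad argument applied to a carefully packed sub-family of $\Pi$ then shows that this amount of information is insufficient to identify $\pi^*$ with high probability, so the learner plays suboptimally on an $\Omega(1)$ fraction of informative rounds, each costing $\Omega(\delta)$, yielding total regret $\Omega(R\delta)$.

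The step I expect to be most delicate is the choice of the sub-packing of $\Pi$: a uniformly random code in $[K]^C$ is informationally too redundant, because a single correctly played informative observation at one context already narrows $\pi^*$ by a factor of $K$, so $O(\log N)$ such observations would suffice for global identification and spoil the bound. The remedy is a structured packing---e.g., Assouad-style with pairs of alternative policies disagreeing on disjoint subsets of contexts, or a Reed--Muller/Hadamard-style construction---so that the sub-linear number of useful observations cannot be amortized across coordinates and each such observation resolves only a small fraction of a single bit of $\pi^*$. Once such a packing is in place, combining the resulting per-observation KL bound with Fano's inequality and converting the resulting misidentification probability into expected regret via the per-round gap $\delta$ is a routine calculation that yields the target $\Omegatil(\sqrt{V}(KT)^{1/4})$.
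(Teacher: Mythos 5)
Your construction differs from the paper's in one way that actually breaks the argument: you set $m_t\equiv\tfrac12\mathbf{1}$ and concentrate all of $\calE$ on $R$ ``informative'' rounds with a small per-round gap $\delta$, so that on the remaining rounds every action has identical loss and exploration is free. Because the signal is noiseless, a single observation of $\tfrac12-\delta$ at $(x,a)$ reveals $\pi^*(x)=a$ exactly, and with $N=\Theta(\sqrt{KT})$ the vector $\pi^*$ carries only $O(\log T)$ bits: for any non-degenerate packing, knowing $\pi^*(\cdot)$ at $O(\log_K N)=O(\log T)$ contexts determines $\pi^*$ entirely. A round-robin sweep collects about $pt/K$ such hits after $t$ rounds (with $p=R/T$), so $t=O(\sqrt{KT}\log T)$ rounds suffice, and the regret paid during this sweep is only $O(pt\cdot\delta)=O(K\delta\log T)$ --- a $\sqrt{T/K}/\log T$ factor below your target $R\delta=\Theta(\sqrt{V}(KT)^{1/4})$. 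The failure is even more immediate with the paper's own policy class (one default plus per-context detours): playing action $1$ forever already achieves $O(K\delta)$, since only the $\Theta(K)$ informative rounds at the one ``active'' context contribute. Your own worry about one observation ``narrowing $\pi^*$ by a factor of $K$'' was pointing at exactly this, but the proposed remedies (Assouad pairs, Reed--Muller) cannot rescue it: a Fano-type bound gives at most $\Omega(\log N)=\Omega(\log T)$ required useful observations, far short of the $\Omega(\sqrt{KT})$ your accounting needs, and because the observations are noiseless the KL/mutual-information step of Fano degenerates anyway.

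The paper's proof is not a multi-hypothesis packing argument and avoids this entirely. It makes exploration costly on \emph{every} round by setting $m^{(i)}=\ell^{(i)}=(\tfrac12,\tfrac12+\sigma,\dots,\tfrac12+\sigma)$ in the nominal instance (so $\calE=0$ there), and then runs a Le Cam style two-point argument combined with a coverage argument: either the learner plays actions other than $1$ at least $\tfrac12\sqrt{KT}$ times in expectation, giving regret $\Omega(\sqrt{KT}\sigma)$ directly against the default policy, or by Markov's inequality some pair $(x^{(i^*)},k^*)$ is visited fewer than $\tfrac12$ times in expectation; one then lowers $\ell^{(i^*)}(k^*)$ by $2\sigma$ (raising $\calE$ to $4\sqrt{TK}\sigma^2\le V$), and with probability $\ge\tfrac12$ the learner never visits that pair, cannot distinguish the instances, and loses $\Omega(\sqrt{KT}\sigma)$ against $\pi^{(i^*,k^*)}$. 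Finally, your claim that a single construction is ``valid under both'' definitions of $\calE$ does not hold as stated: a fixed subset of $R$ informative rounds is not i.i.d., while randomizing the subset makes the adversarial $\sum_t\|\ell_t-m_t\|_\infty^2$ a random variable rather than a deterministic bound; the paper handles this by giving a genuinely i.i.d.\ construction and separately an adversarial one with fixed per-context multiplicities.
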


This theorem gives a negative answer to \textbf{(Q1)} when $\calE = \Omega(\sqrt{T})$.
Even when $\calE = \order(1)$, the theorem shows that the best one can achieve is $\order(T^{\frac{1}{4}})$, a sharp contrast with the non-contextual case. Note that we require $N \geq K$ due to~\citet{wei2018more}, but perhaps the condition of $N=\Theta(\sqrt{KT})$ can be further weakened.
In Sections~\ref{sec:adversarial} and~\ref{sec:iid}, we develop algorithms with matching upper bounds for adversarial and stochastic environments respectively, thus completely answering \textbf{(Q1)} and confirming that in the worst case, loss predictors are helpful {\it if and only if} $\calE = o(\sqrt{T})$.

\paragraph{Robustness when $\calE$ is unknown.}
One shortcoming of our algorithms with matching upper bounds is that they require knowing the value of $\calE$, which is clearly undesirable in practice.
Put differently, for each possible value of $\calE$, we need a different setting of algorithm parameters to achieve the optimal bound.
Therefore, the next general question we ask is:
\[
\noindent
\textbf{(Q2)}\; \textit{Is there an algorithm with regret $o(\sqrt{T})$ simultaneously for all environments with $\calE = o(\sqrt{T})$?}
\]

One standard method in online learning to deal with unknown parameters is the so-called doubling trick, which is applicable even for some partial-information settings~\citep{HazanKa11, wei2018more}.
However, we show yet another surprising result that the answer to \textbf{(Q2)} is {\it no}.

\begin{theorem}
    \label{theorem: impossibility of unknown V}
    If an algorithm $\calA$ achieves $\Reg(\calA)=o(\sqrt{T})$ for all environments with $\calE=0$, then there exists another environment with $\calE=o(\sqrt{T})$ and $N=\Omega(T)$ for which $\Reg(\calA)=\omega(\sqrt{T})$.
    Thus, no algorithm can achieve $\Reg(\calA)=\order\big(\min\big\{ \sqrt{\calE}(dT)^{\frac{1}{4}}, \sqrt{dT} \big\}\big)$ simultaneously for all $\calE$.
\end{theorem}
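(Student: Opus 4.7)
The plan is to prove the first statement constructively and then derive the second sentence as an immediate corollary. I would start by fixing a "null" environment Env A with predictor $m_t = (0,1)$ and deterministic losses $\ell_t = m_t = (0,1)$ for every round, so that $\calE = 0$ and the unique optimal policy $\pi_0$ is the constant policy that always plays action $1$. By hypothesis, $\calA$ achieves $\Reg = o(\sqrt{T})$ on Env A, which immediately implies that $\calA$ plays action $2$ at most $g(T) = o(\sqrt{T})$ times in expectation on Env A.

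Next, I would construct a family $\{B_i\}_{i \in [N]}$ of stochastic "bad" environments with $N = \Omega(T)$, all sharing the predictor $m_t = (0,1)$ but each with a slight loss perturbation indexed by $i$, so that the optimal policy $\pi_i \in \Pi$ plays action $2$ on a subset of rounds identifiable from the context $x_t$ and the hidden parameter $i$. The perturbation would have magnitude $\epsilon = \Theta(T^{-1/4})$ applied on a fraction $\Theta(T^{-1/8})$ of rounds (with the exact dependence chosen to match $N = \Omega(T)$), so that $\calE(B_i) = o(\sqrt{T})$ while the cumulative loss gap between $\pi_0$ and $\pi_i$ is $\omega(\sqrt{T})$. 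The stochasticity is essential: each surprising observation must carry only $o(1)$ bits of information about $i$, so that $\calA$ cannot identify $i$ from a small number of rounds.

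The core technical step is a coupling and averaging argument. Under a shared random seed, $\calA$'s trajectory on $B_i$ agrees with its trajectory on Env A until enough information accumulates in the observations to distinguish the two; a KL/Pinsker calculation shows this requires many more exploration events than $g(T)$ allows. Consequently, for most $i$, $\calA$ on $B_i$ keeps playing action $1$ on the perturbed rounds, and a pigeonhole over the $\Omega(T)$ possible signals $i$ produces an $i^\star$ for which $\Reg(\calA) = \omega(\sqrt{T})$. The main obstacle is the simultaneous calibration of perturbation magnitude $\epsilon$, perturbation frequency $p$, and signal-space size $N$: we need $Tp\epsilon^2 = o(\sqrt{T})$ (so $\calE$ stays small), $\epsilon \cdot pT = \omega(\sqrt{T})$ (so the regret grows large), and the per-signal KL divergence, weighted by how often $\calA$ probes the relevant contexts under its Env A commitment, must stay $o(1)$ for enough $i$'s to support a pigeonhole conclusion; this is exactly why $N$ must scale as $\Omega(T)$.

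Finally, the second sentence follows immediately. If some $\calA$ achieved $\Reg(\calA) = \order\bigl(\min\{\sqrt{\calE}(dT)^{1/4},\sqrt{dT}\}\bigr)$ uniformly in $\calE$, then at $\calE = 0$ it would attain regret $o(\sqrt{T})$, triggering the construction above to produce some $B_{i^\star}$ with $\calE = o(\sqrt{T})$ on which $\Reg(\calA) = \omega(\sqrt{T})$. But the same uniform bound evaluated at $\calE(B_{i^\star}) = o(\sqrt{T})$ would predict $\Reg(\calA) = \order(\sqrt{\calE}(dT)^{1/4}) = o(T^{1/4}\cdot T^{1/4}) = o(\sqrt{T})$, contradicting the lower bound just established.
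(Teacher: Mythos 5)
Your high-level strategy (build on the fact that $o(\sqrt{T})$ regret with $\calE=0$ forces tiny exploration, then plant an undetectable perturbation and invoke pigeonhole over a large policy/environment family) is the right one and matches the paper's. However, the concrete instantiation has three substantive gaps.

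\textbf{The base environment has the wrong gap.} You set $\ell_t = m_t = (0,1)$, so action~$1$ beats action~$2$ by a full unit. No perturbation of magnitude $\epsilon = \Theta(T^{-1/4})$ can make action~$2$ optimal anywhere, and if you enlarge the perturbation enough to flip the optimal action you destroy the $\calE = o(\sqrt{T})$ budget. The paper's Env~A instead uses $\ell^{(i)} = m^{(i)} = (\tfrac12, \tfrac12+\sigma,\ldots)$ with a \emph{small}, carefully chosen $\sigma$: the gap between actions must be of the same order as the perturbation so that a loss of $\tfrac12 - \sigma$ on a few contexts becomes strictly best while $\|\ell-m\|_\infty = 2\sigma$ stays small. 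With gap $1$, your construction simply never changes the optimal policy, and the second environment's regret is not $\omega(\sqrt{T})$ --- it is small.

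\textbf{The perturbation parameters must adapt to the algorithm's actual rate.} You fix $\epsilon = T^{-1/4}$ and $p = T^{-1/8}$ up front. But the hypothesis only gives $\Reg = R$ for some $R = o(\sqrt{T})$, and the exploration budget that your KL/coupling step relies on is $R/\sigma$ (once you fix the gap). If, say, $R = \sqrt{T}/\log T$, then with $\epsilon$ fixed at $T^{-1/4}$ the cumulative KL from the algorithm's $R/\epsilon = T^{3/4}/\log T$ explorations is $\Theta\big((R/\epsilon)\cdot\epsilon^2\big) = \Theta(T^{1/4}/\log T) = \omega(1)$, so Pinsker does not give indistinguishability and the argument collapses. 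The paper resolves this by defining $\rho = (R+K)/\sqrt{CKT}$ and then setting $\sigma = \tfrac12 \rho^{2/5}$ and the window length $L_0 = \rho^{-3/5}$ \emph{as functions of $R$}. All three tensions you listed (small $\calE$, large regret, undetectability under the algorithm's budget) can be met simultaneously only if the scales are tied to $\rho$.

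\textbf{The structure of $\Pi$ and why $N=\Omega(T)$ is needed is missing.} You state $N = \Omega(T)$ but never say what the policies are. The paper parameterizes policies by intervals $\{x^{(i)},\ldots,x^{(j)}\}$ of contexts (and an action $k$), giving $\Theta(T)$ policies, and the planted perturbation flips the loss on an \emph{interval} of $L_0 = \rho^{-3/5} = \omega(1)$ contexts. This is essential: flipping a single context (as in Theorem~\ref{theorem: lower bound single predictor}) can produce regret only of order $\sqrt{T}$, while the interval amplifies it to $\rho^{-1/5}\sqrt{KT} = \omega(\sqrt{T})$, which is exactly what the statement demands. Without this interval structure (or something playing the same role), the quantitative regret lower bound does not reach $\omega(\sqrt{T})$.

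Finally, a comparison of technique: your proposal uses stochastic losses plus a KL/Pinsker coupling, whereas the paper keeps losses \emph{deterministic given the context} and argues indistinguishability by a simple Markov $+$ pigeonhole argument (the only way to detect the change is to play action $k^*$ on one of the $L_0$ perturbed contexts, and by pigeonhole some window is explored with expectation $\le 1/2$). The deterministic route is substantially lighter and avoids the information-theoretic bookkeeping; if you prefer the KL route, it can be made to work, but $\epsilon$, $p$, and the window length all have to depend on $R$, not be fixed powers of $T$.
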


The theorem asserts that no algorithm can improve over $\order(\sqrt{T})$ when good predictors are available while simultaneously maintaining $\order(\sqrt{T})$ worst-case robustness.
As a remedy, nevertheless, we develop adaptive versions of our algorithms with regret $\order(\sqrt{\calE}T^\frac{1}{3})$ for all environments simultaneously.
This bound is $o(\sqrt{T})$ whenever $\calE = o(T^\frac{1}{3})$ and at the same time provides a robustness guarantee of $\order(T^\frac{5}{6})$.
As a comparison, a bound of order $\order(\calE T^\frac{1}{4})$, achievable by naively setting the parameters of our algorithms independent of $\calE$, is $o(\sqrt{T})$ only when $\calE = o(T^\frac{1}{4})$, and more importantly could be linear when $\calE$ is large and thus provides no robustness guarantee at all.

\paragraph{Learning with multiple predictors.}
Having a complete understanding of the single predictor case, we further consider a more general setup where instead of receiving one predictor $m_t$, the learner receives $M$ predictors $m_t^1, \ldots, m_t^M \in [0,1]^K$ at the beginning of each round.
In the adversarial setting, these are decided ahead of time by an adversary, and we denote by $\calE^* \triangleq \min_{i\in [M]}\sum_{t=1}^T \|\ell_t-m_t^i\|_\infty^2$, the total error of the best predictor.
On the other hand, for the stochastic setting, each tuple $(x_t, \ell_t, m_t^1, \ldots, m_t^M)$ is an i.i.d. sample of a fixed distribution $\calD$, and we denote by $\calE^* \triangleq T\min_{i \in [M]} \E_{(x, \ell, m^{1:M})\sim\calD}\left[\|\ell-m^i\|_\infty^2\right]$, the expected total error of the best predictor.

The goal of the learner is to improve over the worst-case bound as long as {\it one of the predictors} is reasonably accurate.
Specifically, we ask the following (assuming known $\calE^*$ for simplicity).
\[
\noindent \textbf{(Q3)} \quad \textit{Can we improve the regret over $\order(\sqrt{dT})$ for $\calE^* = o(\sqrt{T})$ and reasonably small $M$?}
\]

For many online learning problems (even those with partial information), achieving $\order(\sqrt{\calE + \ln M})$ is possible~\citep{rakhlin2013online}.
We already know that a worse dependence on $T$ is necessary for contextual bandits, and it turns out that, a worse dependence on $M$ is also unavoidable.
\begin{theorem}
    \label{theorem: lower bound multiple predictors}
    For any algorithm $\calA$ and any $M\leq \sqrt{T}$ and $V^*\leq \sqrt{T}$, there exists an environment (which can be stochastic or adversarial) with $\calE^*\leq V^*$ such that $\Reg(\calA)=\Omegatil(\sqrt{V^*}(KT)^{\frac{1}{4}}+M)$.
\end{theorem}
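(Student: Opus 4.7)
The plan is to establish the two additive terms $\Omegatil(\sqrt{V^*}(KT)^{1/4})$ and $\Omega(M)$ separately, via two different hard instances; this suffices since $a+b = \Theta(\max(a,b))$ for the purpose of $\Omega$-lower bounds, and the ``there exists an environment'' phrasing is witnessed by whichever of the two instances is harder for the given algorithm.

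For the first term, I would lift the single-predictor lower bound from Theorem~\ref{theorem: lower bound single predictor} to the multi-predictor setting by duplication. Specifically, take the hard instance $(x_t,\ell_t,m_t)_{1:T}$ from that theorem and set $m_t^1 = m_t^2 = \cdots = m_t^M = m_t$. Then $\calE^* \leq \sum_t \|\ell_t-m_t\|_\infty^2 \leq V^*$, and any multi-predictor algorithm on this instance can be simulated by a single-predictor algorithm that internally duplicates $m_t$ into $M$ copies before calling $\calA$. The lower bound from Theorem~\ref{theorem: lower bound single predictor} therefore transfers, giving $\Reg(\calA) = \Omegatil(\sqrt{V^*}(KT)^{1/4})$.

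For the $\Omega(M)$ term, I would construct a deterministic ``hidden best arm'' family with $K \geq M$ actions, trivial contexts, and policy class containing the $M$ constant policies $\pi_i(\cdot)\equiv i$. Pick $i^* \in [M]$ and set $\ell_t(i^*)=0$, $\ell_t(j)=1$ for $j\neq i^*$, at every round. Let predictor $i$ output $m_t^i(i)=0$ and $m_t^i(j)=1$ for $j\neq i$. Then predictor $i^*$ is exact so $\calE^* = 0 \leq V^*$. Since the instance is deterministic given $i^*$, it is a valid oblivious adversarial environment and simultaneously a valid stochastic environment (i.i.d.\ from a point mass). The regret equals $\#\{t \leq T : a_t \neq i^*\}$. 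Before the learner first plays $i^*$, every observed loss equals $1$, so the transcript reveals no information about $i^*$ beyond ``not in the set of arms already pulled.'' A standard symmetry argument averaging over $i^* \sim \mathrm{Unif}([M])$ then shows that the expected hitting time of action $i^*$ is at least $(M+1)/2$, so the expected regret is $\Omega(M)$; thus some specific $i^*$ witnesses $\Reg(\calA) = \Omega(M)$. The condition $M \leq \sqrt{T}$ in the theorem ensures $T \geq M$ so the hitting-time bound is not truncated by the horizon.

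Combining, for any algorithm $\calA$ at least one of the two constructions yields $\Reg(\calA) = \Omega(\max(\sqrt{V^*}(KT)^{1/4},\, M)) = \Omega(\sqrt{V^*}(KT)^{1/4} + M)$, up to the polylogarithmic factor from Theorem~\ref{theorem: lower bound single predictor}. The step I expect to require the most care is the symmetry argument in the $\Omega(M)$ construction: I must argue rigorously that a randomized learner, conditioned on not yet having pulled $i^*$, assigns the posterior on $i^*$ to a distribution that is symmetric over the as-yet-unpulled elements of $[M]$, so that the pre-discovery play sequence is (up to relabeling) the deterministic enumeration that gives the $(M+1)/2$ hitting-time bound; this is a standard Yao-style minimax reduction but needs to be spelled out to cover randomized algorithms and to verify that plays of actions outside $[M]$ only make things worse.
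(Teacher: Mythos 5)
Your decomposition into the $\Omega(\sqrt{V^*}(KT)^{1/4})$ and $\Omega(M)$ parts is the right one, and your reduction of the first part to Theorem~\ref{theorem: lower bound single predictor} by setting all $M$ predictors equal to the single hard predictor is correct and matches the paper (this is exactly how the paper handles the case $\sqrt{V^*}(KT)^{1/4}\geq M$).

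The $\Omega(M)$ part has a material gap: your hidden-best-arm construction needs $K\geq M$ actions, while the theorem should hold for any fixed $K$ (the bound $\sqrt{V^*}(KT)^{1/4}+M$ is parameterized by the algorithm's $K$, so in particular $\Omega(M)$ must hold even when $K=2$ and $M$ is as large as $\sqrt{T}$). With $K<M$ your construction cannot be instantiated and would only yield $\Omega(\min\{K,M\})$. There is also a conceptual tension: your instance is non-contextual MAB with a perfect predictor, whereas the paper explicitly contrasts the $\Omega(M)$ contextual lower bound with the claim that logarithmic $M$-dependence is possible for non-contextual problems.

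The paper makes your ``$M$ mutually exclusive hypotheses, each requiring its own exploratory query'' intuition survive small $K$ by encoding the hypotheses in contexts rather than in actions. It takes $K=2$, $M-1$ contexts $x^{(1)},\ldots,x^{(M-1)}$ appearing uniformly at random, $\ell=(\tfrac12,1)$ at every context, and $M$ policies: $\pi^{(0)}$ always plays action $1$, while $\pi^{(i)}$ plays action $2$ only at $x^{(i)}$. Predictor $0$ is always correct; predictor $i\geq1$ predicts $(\tfrac12,0)$ only at $x^{(i)}$ and agrees with predictor $0$ elsewhere, so $\calE^*=0$. If the learner plays action $2$ at least $(M-1)/2$ times in expectation, she already pays $\Omega(M)$ against $\pi^{(0)}$; otherwise there is some $x^{(i^*)}$ where she plays action $2$ at most $\tfrac12$ times in expectation, and the environment secretly flips $\ell^{(i^*)}(2)$ to $0$ (making predictor $i^*$ the perfect one, still $\calE^*=0$); with probability at least $\tfrac12$ she never notices and pays $\Omega(T/M)$ against $\pi^{(i^*)}$. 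Combining gives $\Omega(\min\{M,T/M\})=\Omega(M)$ for $M\leq\sqrt{T}$. Your Yao-style symmetry argument is fine in spirit, but the hypotheses need to live on contexts (not arms) for the construction to be valid when $K$ is small.
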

Compared to the single predictor case, the lower bound has an extra term linear in $M$.
It shows that when $M = \Omega(\sqrt{T})$, there is no hope to improve the worst-case regret even if there is a perfect predictor such that $\calE^* = 0$, again a sharp contrast with the non-contextual case.
In Section~\ref{sec:multiple}, we provide an algorithm with regret $\order(\sqrt{M\calE^*}T^{\frac{1}{4}})$ for the adversarial setting, and another oracle-efficient algorithm with regret $\order(M^{\frac{2}{3}}(\calE^*T)^{\frac{1}{3}})$  for the stochastic setting, answering \textbf{(Q3)} positively to some extent.
(Note that these bounds are larger than the lower bound when $M\leq \sqrt{T}$.) \\

\noindent\textbf{Other notations.}
We use $\otil(\cdot)$ to hide the dependence on $\ln T$, and $\Omegatil(\cdot)$ to hide the dependence on $1/\ln T$;
for an integer $n$, $[n]$ represents $\{1, \ldots, n\}$;
for a random variable $Z$, $\Var[Z]$ denotes its variance;
$\Delta_\Pi$ and $\Delta_K$ are the sets of all distributions over the polices and the actions respectively.

\section{Algorithms for Adversarial Environments}
\label{sec:adversarial}

In this section, we describe our algorithm for the adversarial setting with one predictor.
Similar to existing works on online learning with loss predictors, our algorithm is based on the optimistic Online Mirror Descent (OMD) framework~\citep{rakhlin2013online}.
In particular, with the entropy regularizer, the optimistic OMD update maintains a sequence of distributions
\begin{equation*} Q_1', \ldots, Q_T' \in \Delta_\Pi,~~ \mbox{such that}~~ Q_{t+1}'(\pi) \propto Q_{t}'(\pi)\exp\big(-\eta\ellhat_t(\pi(x_t))\big),\end{equation*}
where $\eta > 0$ is the learning rate and $\ellhat_t$ is some estimator for $\ell_t$.
Upon seeing a context $x_t$ and a predictor $m_t$ at time $t$,
the algorithm computes $Q_t \in \Delta_\Pi$ such that \mbox{$Q_{t}(\pi) \propto Q_{t}'(\pi)\exp\big(-\eta m_t(\pi(x_t))\big)$}, and samples a policy according to $Q_t$ and follows its suggestion to choose an action $a_t$.
Suppose $p_t \in \Delta_K$ is the distribution of $a_t$, then the standard variance-reduced loss estimator is \begin{equation}\ellhat_t(a) = \frac{(\ell_t(a) - m_t(a))\one[a_t=a]}{p_t(a)} + m_t(a).\label{eqn:loss_est}\end{equation}
When $m_t(a) = 0$ for all $t$ and $a$, this is exactly the \expfour algorithm~\citep{AuerCeFrSc02}.

While optimistic OMD with entropy regularizer has been used for problems with full-information feedback~\citep{SteinhardtLi14, SyrgkanisAgLuSc15},
it in fact cannot be directly applied to the bandit setting since typical analysis requires $\ellhat_t(a)-m_t(a)$ to be lower bounded by $-1/\eta$, which does not hold if $\ell_t(a_t) \leq m_t(a_t)$ and $p_t(a_t)$ is too small.
Intuitively this is also the hard case because the predictor over-predicts the loss of a good action and prevents the algorithm from realizing it due to the bandit feedback.
A naive approach of enforcing uniform exploration so that $p_t(a_t) \geq \eta$ contributes $\eta TK$ regret already, which eventually leads to $\Omega(\sqrt{T})$ regret.
Indeed, to get around this issue for multi-armed bandits, \citet{wei2018more} uses a different regularizer called log-barrier, but this does not work for contextual bandits either since it inevitably introduces polynomial dependence on the number of policies $N$ for the regret.

\paragraph{Our solutions.}
Our first key observation is that, despite the range of the loss estimators, Optimistic Exp4 in fact {\it always} guarantees the following ({\it cf.} Lemma~\ref{lem:key_observation}): for any $\pi^* \in \Pi$,
\begin{equation}\label{eqn:key_observation}
\sum_{t=1}^T\sum_{\pi \in\Pi} Q_t(\pi) \ellhat_t(\pi(x_t)) - \sum_{t=1}^T\ellhat_t(\pi^*(x_t))
\leq \frac{\ln N}{\eta}  + 2\eta \sum_{t=1}^T (\ellhat_t(a_t) - m_t(a_t))^2 .
\end{equation}
Readers familiar with the \expfour analysis would find that $p_t(a_t)$ is missing in the last term compared to the standard analysis when $\ellhat_t(a) - m_t(a) \geq -1/\eta$ holds.
To see why Eq.~\eqref{eqn:key_observation} is useful, first take expectation (over $a_t$) on both sides so the last term is bounded by $2\eta K \sum_t \frac{\|\ell_t - m_t\|_\infty^2}{\min_a p_t(a)}$.
Then consider enforcing uniform exploration so that $p_t(a) \geq \mu/K$ holds for some $\mu \in [0,1]$.
Since this contributes $\mu T$ extra regret, using Eq.~\eqref{eqn:key_observation} we have $\Reg = \order(\frac{\ln N}{\eta} + \frac{\eta K^2\calE}{\mu} + \mu T)$, which,  with the optimal tuning of $\eta$ and $\mu$, already gives a nontrivial bound $\Reg = \order((\calE T)^\frac{1}{3})$!
This bound is also $o(\sqrt{T})$ whenever $\calE = o(\sqrt{T})$, but is worse than the bound $\Reg = \order(\sqrt{\calE}T^\frac{1}{4})$ we are aiming for.

To further improve the algorithm, we introduce a novel \emph{action remapping} technique.
Specifically, let $a_t^* = \argmin_{a\in [K]} m_t(a)$ be the action with smallest predicted loss and let $\calA_t$ (Equation~\ref{eqn:mapping}) be the set of actions with predicted loss not larger than that of $a_t^*$ by $\sigma$, for some threshold $\sigma \geq 0$.
Then, we rename the actions according to a mapping $\phi_t: [K]\rightarrow \calA_t$ such that $\phi_t(a) = a$ for $a\in\calA_t$ and $\phi_t(a) = a_t^*$ for $a\notin \calA_t$.
In other words, we pretend that every action outside $\calA_t$ was just $a_t^*$.
We call our algorithm \expfourOAR and show its pseudocode in Algorithm~\ref{alg:OExp4}.

\setcounter{AlgoLine}{0}
\begin{algorithm}[t]
    \caption{\expfourOAR: Optimistic \expfour with Action Remapping}
    \label{alg:OExp4}
\textbf{Parameter}: learning rate $\eta>0$, threshold $\sigma>0$, exploration probability $\mu\in[0,1]$.

\textbf{Initialize}: $Q_1'(\pi) = \nicefrac{1}{N}$ for all $\pi\in\Pi$.

\For{$t=1, \ldots, T$}{
\nl    Receive $x_t$ and $m_t$. Define $a_t^* = \argmin_{a\in [K]} m_t(a)$,
    \begin{equation}\label{eqn:mapping}
    \calA_t = \{a\in [K]: m_{t}(a)\leq m_{t}(a_t^*) + \sigma \},\quad\text{and}\quad
    \phi_t(a) = \begin{cases}a, &\text{if $a\in\calA_t$,} \\ a_t^*, &\text{otherwise.}\end{cases}
    \end{equation}
\nl    Calculate $Q_t\in\Delta_\Pi$: \quad  $Q_{t}(\pi) \propto Q_{t}'(\pi)\exp\left(-\eta m_{t}\left(\phi_t(\pi(x_t))\right) \right)$.  \label{line:OMD1}

\nl    Calculate $p_t \in \Delta_K$: \quad
$
        p_t(a) =
        (1-\mu) \sum_{\pi: \phi_t(\pi(x_t))=a} Q_t(\pi) + \frac{\mu}{|\calA_t|}\one[a\in\calA_t]
$.

\nl    Sample $a_t\sim p_t$ and receive $\ell_t(a_t)$.

\nl    Construct estimator: \quad $\ellhat_t(a)= \frac{\ell_t(a)-m_t(a)}{p_t(a)}\one[a_t=a] + m_t(a)   $ for all  $a\in \calA_t$.

\nl    Calculate $Q_{t+1}'\in\Delta_\Pi$: \quad $Q_{t+1}'(\pi) \propto Q_t'(\pi)\exp\left(-\eta \ellhat_t\left(\phi_t(\pi(x_t))\right)\right).$ \label{line:OMD2}
}
\end{algorithm}

To see why this action remapping is useful,
first consider the regret compared to $\sum_{t=1}^T \ell_t(\phi_t(\pi^*(x_t)))$ due to exploration.
Note that we only explore actions in $\calA_t$ and all actions in this set have predicted loss $\sigma$-close to each other.
Therefore, exploration leads to regret $\mu T\sigma + 2\mu\sum_t \|\ell_t - m_t\|_\infty \leq \mu T\sigma + 2\mu\sqrt{\calE T}$, instead of $\mu T$ compared to the naive approach.
On the other hand, the bias due to remapping $\ell_t(\phi_t(\pi^*(x_t))) - \ell_t(\pi^*(x_t))$ is either zero if $\pi^*(x_t) \in \calA_t$ or at most $2\|\ell_t - m_t\|_\infty - \sigma$ otherwise (by adding and subtracting $m_t(a_t^*)$ and $m_t(\pi^*(x_t))$).
Using the AM-GM inequality and summing over $t$ gives $\calE/\sigma$.
Combining everything we prove the following theorem.

\begin{theorem}\label{thm:OExp4}
\expfourOAR (Algorithm~\ref{alg:OExp4}) ensures $\Reg \leq \frac{\ln N}{\eta}  + \frac{2\eta K^2\calE}{\mu} + \mu T\sigma + 2\mu\sqrt{\calE T} + \frac{\calE}{\sigma}$.
Picking $\mu=\min\left\{\frac{d}{\sqrt{T}},1\right\}$, $\eta=\sqrt{\frac{\mu \ln N}{K^2\calE}}$, and $\sigma = \sqrt{\frac{\calE}{\mu T}}$ gives
$
\Reg = \order\big(\sqrt{d\calE}(T)^{\frac{1}{4}} + d\sqrt{\calE}\big).
$
\end{theorem}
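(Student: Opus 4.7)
The plan is to decompose the regret into a \emph{remapped} part and a \emph{remapping bias}, and to bound each piece separately so that the threshold $\sigma$ yields a gain on both sides simultaneously. Specifically, I would write
\begin{equation*}
\Reg = \underbrace{\E\Bigl[\sum_{t=1}^T \ell_t(a_t) - \sum_{t=1}^T \ell_t(\phi_t(\pi^*(x_t)))\Bigr]}_{(\text{I})} + \underbrace{\sum_{t=1}^T \E\bigl[\ell_t(\phi_t(\pi^*(x_t))) - \ell_t(\pi^*(x_t))\bigr]}_{(\text{II})},
\end{equation*}
and view \expfourOAR as Optimistic \expfour on the policy class with the effective per-policy loss $\pi \mapsto \ellhat_t(\phi_t(\pi(x_t)))$ and per-policy predictor $\pi \mapsto m_t(\phi_t(\pi(x_t)))$. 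Then (I) is the natural object controlled by the OMD-style bound Eq.~\eqref{eqn:key_observation}, while (II) is an analytic artifact of changing the comparator from $\pi^*(x_t)$ to its remapping.

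For (II), the summand vanishes whenever $\pi^*(x_t) \in \calA_t$. Otherwise $\phi_t(\pi^*(x_t)) = a_t^*$ and, by definition of $\calA_t$, $m_t(\pi^*(x_t)) > m_t(a_t^*) + \sigma$; inserting and subtracting $m_t(a_t^*)$ and $m_t(\pi^*(x_t))$ inside $\ell_t(a_t^*) - \ell_t(\pi^*(x_t))$ bounds this gap by $2\|\ell_t - m_t\|_\infty - \sigma$, and hence by $\order(\|\ell_t - m_t\|_\infty^2/\sigma)$ when positive (using $x - \sigma \leq x^2/\sigma$ for $x > \sigma$). Summing yields $(\text{II}) \leq \order(\calE/\sigma)$.

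For (I), I would first peel off the $\mu$-uniform exploration. Since $\phi_t$ maps into $\calA_t$ and the uniform component of $p_t$ is supported on $\calA_t$, the sampled action always satisfies $a_t \in \calA_t$, so $\E[\ell_t(a_t)]$ equals $\sum_\pi Q_t(\pi)\,\ell_t(\phi_t(\pi(x_t)))$ plus $\mu$ times the difference of two convex combinations of $\{\ell_t(a): a \in \calA_t\}$. Any two such losses differ by at most $\sigma + 2\|\ell_t - m_t\|_\infty$ (again by inserting/subtracting $m_t$), so summing and applying Cauchy--Schwarz to $\sum_t \|\ell_t - m_t\|_\infty$ gives an exploration overhead of at most $\mu T\sigma + 2\mu\sqrt{\calE T}$. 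The remaining quantity is, by unbiasedness of $\ellhat_t$ on $\calA_t$, equal in expectation to $\sum_t \sum_\pi Q_t(\pi)\ellhat_t(\phi_t(\pi(x_t))) - \sum_t \ellhat_t(\phi_t(\pi^*(x_t)))$, which Eq.~\eqref{eqn:key_observation} bounds by $\frac{\ln N}{\eta} + 2\eta\,\E\sum_t(\ellhat_t(a_t) - m_t(a_t))^2$. Using $p_t(a) \geq \mu/K$ on $\calA_t$, each conditional variance term is at most $K^2\|\ell_t - m_t\|_\infty^2/\mu$, summing to $K^2\calE/\mu$.

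Adding the three contributions produces exactly the inequality in the theorem; substituting the stated $\eta, \mu, \sigma$ is a short calculation that balances the five terms to give $\order(\sqrt{d\calE}T^{1/4} + d\sqrt{\calE})$. The main obstacle, as I see it, is verifying that Eq.~\eqref{eqn:key_observation}, which was derived for standard Optimistic \expfour, transfers cleanly to the action-remapped setup: one must check that running OMD with per-policy loss $\ellhat_t \circ \phi_t$ and predictor $m_t \circ \phi_t$ is a legitimate instance of the framework, that the estimator is unbiased on the support of the sampling distribution (which needs $a_t \in \calA_t$ and $\phi_t|_{\calA_t} = \mathrm{id}$ so that $\phi_t(a_t) = a_t$ and the importance weight is well-defined), and that the right-hand-side variance reduces to the untruncated form $(\ellhat_t(a_t) - m_t(a_t))^2$. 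Once these compatibility checks are in place, the rest is bookkeeping.
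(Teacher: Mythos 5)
Your proposal is correct and follows essentially the same route as the paper's proof: decompose into a remapped regret controlled by the OMD bound plus exploration overhead $\mu T\sigma + 2\mu\sqrt{\calE T}$, and a remapping bias bounded by $\calE/\sigma$ via AM-GM. The compatibility check you flag is handled in the paper by stating the key lemma (Lemma~\ref{lem:key_observation}) for arbitrary vectors $\calM_t, \calL_t \in \R^N$, so it applies directly with $\calM_t(\pi) = m_t(\phi_t(\pi(x_t)))$ and $\calL_t(\pi) = \ellhat_t(\phi_t(\pi(x_t)))$; the right-hand side then collapses to $(\ellhat_t(a_t) - m_t(a_t))^2$ precisely because, as you note, $\ellhat_t(a) - m_t(a)$ vanishes for $a \neq a_t$.
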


See Appendix~\ref{app:OExp4} for the complete proof.
This theorem indicates that whenever the predictor is good enough with $\calE = o(\sqrt{T})$, our algorithm improves over \expfour and achieves $o(\sqrt{T})$ regret.
Note that this bound requires setting the parameters in terms of the quantity $\calE$,
and in the case when $\calE = \Omega(\sqrt{T})$, one can simply switch to \expfour and achieve regret $\order(\sqrt{dT})$.
Therefore, our result indeed matches the lower bound stated in Theorem~\ref{theorem: lower bound single predictor} (except for a slightly worse dependence on $d$).


\noindent\textbf{Adaptive version with unknown $\calE$.} Next, we discuss the case when $\calE$ is unknown.
Recall that there is no hope to maintain the same bound of Theorem~\ref{thm:OExp4} in this case, as indicated by Theorem~\ref{theorem: impossibility of unknown V}.
Standard doubling trick does not work due to the large magnitude of loss estimators (more specifically, the last round before each restart causes some technical problems),
even though it works for non-contextual problems with bandit feedback~\citep{HazanKa11, wei2018more}.

In light of Eq.~\eqref{eqn:key_observation}, our solution is to use a time-varying learning rate $\eta_t$ that is roughly of order $\big(\sum_{s\leq t}(\ellhat_s(a_s) - m_s(a_s))^2 \big)^{-\nicefrac{1}{2}}$ to minimize the right hand side of Eq.~\eqref{eqn:key_observation} for each time. While standard analysis requires using the same learning rate in Line~\ref{line:OMD1} and Line~\ref{line:OMD2}, due to technical issues we are unable to do so while achieving the desired regret bound. Instead, we use $\eta_{t-1}$ in Line~\ref{line:OMD1} and $\eta_t$ in Line~\ref{line:OMD2}, and carefully bound the bias introduced by this learning rate mismatch. More details are provided in Appendix~\ref{app:adaptive_OExp4}.
Our algorithm (Algorithm~\ref{alg:OExp4-adaptive} in Appendix~\ref{app:adaptive_OExp4}) is completely adaptive, requiring no prior information about $\calE$. The following theorem gives its regret guarantee.
\begin{theorem}\label{thm:adaptive_OExp4}
\expfourOVAR (Algorithm~\ref{alg:OExp4-adaptive} in Appendix~\ref{app:adaptive_OExp4}) ensures $\Reg = \otil\left(d\sqrt{\frac{\calE}{\mu}} + \mu T\right)$.
Specifically, setting $\mu = \min\left\{1, (d/T)^{\frac{2}{3}}\right\}$ gives $\Reg = \otil\left(d\sqrt{\calE} + \sqrt{\calE}(d^2 T)^{\frac{1}{3}} \right)$.
\end{theorem}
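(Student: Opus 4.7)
The plan is to replace the constant learning rate $\eta$ from the proof of Theorem~\ref{thm:OExp4} with a self-confident schedule
$\eta_t \asymp \sqrt{\ln N \big/ \big(1 + \sum_{s\le t}(\ellhat_s(a_s) - m_s(a_s))^2\big)}$, so that the unknown quantity $\calE$ is estimated on the fly from the observed squared errors of the loss estimator. My first step is to derive an adaptive analogue of the key inequality~\eqref{eqn:key_observation}: for any $\pi^\ast\in\Pi$,
\begin{equation*}
\sum_{t=1}^T \sum_{\pi} Q_t(\pi)\,\ellhat_t(\phi_t(\pi(x_t))) - \sum_{t=1}^T \ellhat_t(\phi_t(\pi^\ast(x_t)))
\;\le\; \frac{\ln N}{\eta_T} + 2\sum_{t=1}^T \eta_t (\ellhat_t(a_t) - m_t(a_t))^2 + \mathrm{Drift},
\end{equation*}
where $\mathrm{Drift}$ collects residual terms coming from the learning-rate mismatch discussed below. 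Given such an inequality, a standard self-confident argument, which crucially uses that $\eta_t$ is monotone non-increasing, bounds the first two terms by $\otil\!\big(\sqrt{\ln N\cdot\sum_t(\ellhat_t(a_t)-m_t(a_t))^2}\big)$, in place of the $\eta$-optimisation used in Theorem~\ref{thm:OExp4}.

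The main technical obstacle is controlling $\mathrm{Drift}$. In the static proof, Line~\ref{line:OMD1} and Line~\ref{line:OMD2} both use the same $\eta$, which lets the optimistic-OMD telescoping cancel the $\langle Q_t - Q_t',\,\ellhat_t - m_t\rangle$-style cross terms. In the adaptive case, however, $\eta_t$ depends on $\ellhat_t(a_t)$, hence on $a_t$, and is therefore not measurable at the time Line~\ref{line:OMD1} is executed; we are thus forced to use $\eta_{t-1}$ in Line~\ref{line:OMD1} and $\eta_t$ in Line~\ref{line:OMD2}. Expanding the telescoping carefully leaves a residual of the form $\sum_t (\eta_{t-1}^{-1} - \eta_t^{-1})\,\KL(\pi^\ast\,\|\,Q_t')$ together with a bias roughly proportional to $|\eta_t-\eta_{t-1}|\cdot\langle Q_t,m_t\rangle$. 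Since $m_t\in[0,1]^K$ and $\sum_t (\eta_{t-1}^{-1} - \eta_t^{-1}) \le \eta_T^{-1}$, I expect $\mathrm{Drift}$ to be of order $\otil(\ln N/\eta_T)$ and hence absorbable into the leading term. I anticipate this to be the delicate part of the proof.

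With the adaptive inequality in hand, the remainder of the argument copies the action-remapping analysis from Theorem~\ref{thm:OExp4}: exploration contributes at most $\mu T\sigma + 2\mu\sqrt{\calE T}$, and the remapping bias contributes at most $\calE/\sigma$. Taking expectation over $a_t\sim p_t$ and using $p_t(a)\ge \mu/K$ for all $a\in\calA_t$ yields $\E\!\big[\sum_t(\ellhat_t(a_t)-m_t(a_t))^2\big] \le K^2\calE/\mu$. Substituting gives
\begin{equation*}
\Reg \;\le\; \otil\!\Big(K\sqrt{\tfrac{\calE\ln N}{\mu}}\Big) + \mu T\sigma + 2\mu\sqrt{\calE T} + \tfrac{\calE}{\sigma}.
\end{equation*}
Choosing $\sigma = \min\{1,\sqrt{\calE/(\mu T)}\}$ and applying AM--GM to absorb $\sqrt{\mu T\calE}$ and $\mu\sqrt{\calE T}$ into $\mu T$ (using $\calE\le T$ and $\mu\le 1$), together with $K\sqrt{\ln N}\le K\ln N = d$, gives $\Reg = \otil(d\sqrt{\calE/\mu} + \mu T)$. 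The second claim then follows by plugging in $\mu = \min\{1,(d/T)^{2/3}\}$, which (up to a $\sqrt{\calE}$ factor) balances the two terms and produces $\otil(d\sqrt{\calE} + \sqrt{\calE}(d^2 T)^{1/3})$.
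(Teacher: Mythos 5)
The central gap is that you keep action remapping with threshold $\sigma=\min\{1,\sqrt{\calE/(\mu T)}\}$ — but this theorem is precisely the unknown-$\calE$ regime, so $\sigma$ cannot be set as a function of $\calE$. If it could be, you would effectively be back in Theorem~\ref{thm:OExp4}'s setting and could achieve $\order(\sqrt{d\calE}\,T^{1/4})$, which Theorem~\ref{theorem: impossibility of unknown V} rules out. The paper's Algorithm~\ref{alg:OExp4-adaptive} instead \emph{removes the remapping mechanism entirely}: there is no $\sigma$, $\calA_t$, or $\phi_t$; exploration is uniform over all $K$ actions; hence the $\calE/\sigma$ bias term disappears and exploration contributes exactly $\mu T$. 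The bound $\otil(d\sqrt{\calE/\mu}+\mu T)$ then falls out with no parameter depending on $\calE$. (The paper motivates this explicitly: keeping remapping with unknown $\calE$ is dangerous because the good actions can always lie outside $\calA_t$.) A secondary error: the absorption ``$\sqrt{\mu T\calE}\le\mu T$ using $\calE\le T$ and $\mu\le1$'' is false — e.g.\ $\calE=T$, $\mu\to0$ gives $\sqrt{\mu T\calE}=T\sqrt{\mu}\gg\mu T$ — so even granting the $\sigma$-tuning, your displayed chain does not yield the claimed bound by that route.

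On the learning-rate mismatch, your sketch is in the right direction but leaves out the two devices the paper actually uses. First, it rewrites the two OMD steps with a single rate by setting $m_t'=(\eta_{t-1}/\eta_t)m_t$, so the mismatch shows up as the extra stability contribution $\sum_t\eta_t\left(1-\tfrac{\eta_{t-1}}{\eta_t}\right)^2\le\log(NT)/\eta_T$; your guess of a drift ``proportional to $|\eta_t-\eta_{t-1}|\cdot\langle Q_t,m_t\rangle$'' is not how the cancellation is arranged. Second, and crucially, the telescoping bound $\sum_{t\ge2}\bigl(\tfrac{1}{\eta_t}-\tfrac{1}{\eta_{t-1}}\bigr)D(Q^*,Q_t')\le\log(NT)/\eta_T$ requires $D(Q^*,Q_t')$ to be bounded uniformly in $t$; that is why the algorithm runs OMD over the clipped simplex $\overline{\Delta}_\Pi=\{Q:Q(\pi)\ge\tfrac{1}{NT}\}$ and uses the mixed comparator $Q^*=(1-\tfrac1T)\mathbf{e}_{\pi^*}+\tfrac{1}{NT}\mathbf{1}$. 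Without the clipping, $D(Q^*,Q_t')$ can be unbounded and your residual sum does not telescope to anything useful, so this step cannot be glossed over.
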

This shows that our algorithm is robust and always ensures sublinear regret, since in the worst case $\Reg = \otil(T^{5/6})$ (when $\calE = T$).
Also, our algorithm improves over \expfour whenever $\calE = o(T^\frac{1}{3})$.

\section{Algorithms for Stochastic Environments}
\label{sec:iid}

In this section, we consider learning in a stochastic environment with one predictor.
Recall that a stochastic environment is parameterized by an unknown distribution $\calD$ such that each triple $(x_t, \ell_t, m_t)$ is an i.i.d. sample from $\calD$ and the total prediction error is $\calE = T\E_{(x, \ell, m)\sim\calD}\left[\|\ell-m\|_\infty^2\right]$.
Clearly, this is a special case of the adversarial environment, and our goal is to derive the same results but with {\it oracle-efficient} algorithms.

Specifically, an ERM oracle is a procedure that takes any set $\calS$ of context-loss pairs $(x, c) \in \calX \times \R^K$ as inputs and outputs a policy $\ERM(\calS) \in \argmin_{\pi\in\Pi} \sum_{(x,c)\in\calS} c(\pi(x))$.
An algorithm is oracle-efficient if its total running time and the number of oracle calls are both
polynomial in $T$ and $d$, excluding the running time of the oracle itself.
Oracle-efficiency has been proven to be impossible for adversarial environments~\citep{hazan2016computational}, but achievable for stochastic environments.
The simplest oracle-efficient algorithm is $\epsilon$-greedy~\citep{LangfordZh08}, with suboptimal regret $\order(T^\frac{2}{3})$.
However, somewhat surprisingly, we are able to build our algorithm on top of $\epsilon$-greedy and achieve optimal results when $\calE = o(\sqrt{T})$.

We first review the $\epsilon$-greedy algorithm and point out the difficulties of improving its regret with loss predictors.
In each round $t$, the algorithm with probability $\mu$ samples an action $a_t$ uniformly at random,
and with probability $1-\mu$ follows the empirically best policy $\pi_t =\ERM\big(\{x_s, \ellhat_s\}_{s<t}\big)$ by choosing $a_t = \pi_t(x_t)$, where $\ellhat_s$ is the standard importance-weighted estimator for round $s$.

By standard concentration arguments (Freedman inequality), it holds with high probability that
the difference between the average estimated loss and the expected loss of any policy $\pi$ is bounded as
$$
\textstyle
\big|\frac{1}{t}\sum_{s=1}^t \ellhat_s(\pi(x_s)) - \E_{(x,\ell,m)\sim\calD}[\ell(\pi(x))]\big|
 \leq \otil\Big(\frac{1}{t}\sqrt{(\ln N)\sum_{s=1}^t \Var_s\big[\ellhat_s(\pi(x_s))\big]} + \frac{d}{\mu t}\Big),
$$
where $\Var_s[\ellhat_s(\pi(x_s))]$ is the conditional variance (given everything before round $s$) and is at most $\nicefrac{K}{\mu}$.
By the optimality of $\pi_t$, it is then clear that the total regret of following the empirically best policy is $\otil\big(\sum_t \big(\sqrt{\nicefrac{d}{\mu t}} + \nicefrac{d}{\mu t}\big)\big) = \otil\big(\sqrt{\nicefrac{dT}{\mu}} + \nicefrac{d}{\mu}\big)$.
Further taking the uniform exploration into account shows that the regret of $\epsilon$-greedy has three components: the {\it variance term} $\otil(\sqrt{\nicefrac{dT}{\mu}})$, the {\it lower-order term} $\otil(\nicefrac{d}{\mu})$, and the {\it exploration term} $\order(\mu T)$.
Picking the optimal $\mu$ gives $\order\big(T^\frac{2}{3}\big)$ regret.
To improve the bound, we improve each of these three terms as described below. \\

\noindent \textbf{Improving variance/exploration terms via action remapping.} One natural idea to improve the variance term
is to deploy the same variance-reduced (also known as doubly-robust)  estimator $\widehat{\ell_t}$ (Eq.~\eqref{eqn:loss_est}) as in the adversarial case.
However, the law of total variance implies:
$$
\textstyle
\Var_t[ \widehat{\ell}_t(\pi(x_t)) ]
    = \E_{x_t,m_t,\ell_t}[\Var_{a_t}[ \widehat{\ell}_t(\pi(x_t)) | x_t,m_t,\ell_t ]] + \Var_{x_t,m_t,\ell_t}[\E_{a_t}[\widehat{\ell}_t(\pi(x_t)) |x_t,m_t,\ell_t ]],
$$
where one can verify that the first term is at most $\frac{K\calE}{\mu T}$,
but the second term is just $\Var_{x_t,m_t,\ell_t}[\ell_t(\pi(x_t))]$ and is not related to $\calE$.
Simply bounding the second term by $1$ leads to $\Omega(\sqrt{T})$ regret already.

We propose to address this issue by first shifting the variance-reduced estimator by $m_t(a_t^*)$, where $a_t^* = \argmin_{a\in [K]} m_t(a)$ is again the action with the smallest predicted loss.
In other words, we use a new biased estimator:
$
\elltil_t(a) = \ellhat_t(a) - m_t(a_t^*) = \frac{\ell_t(a)-m_t(a)}{p_t(a)}\one[a_t=a] + m_t(a)  - m_t(a_t^*).
$
Moreover, we apply the same action remapping technique using the mapping $\phi_t: [K]\rightarrow \calA_t$ as in the adversarial case (Eq.~\eqref{eqn:mapping}).
To see why this is useful, note that the variance term now becomes
\begin{align}
    \hspace*{-0.4cm}\Var_t\left[ \widetilde{\ell}_t(\phi_t(\pi(x_t))) \right]
    &\leq \E_{x_t,m_t,\ell_t,a_t}\left[\left(\ellhat_t(\phi_t(\pi(x_t))) - m_t(a_t^*)\right)^2\right] \notag \\
    &\leq 2\mathbb{E}_{x_t,m_t,\ell_t, a_t}\left[\frac{(\ell_{t}(\phi_t(\pi(x_t)))-m_{t}(\phi_t(\pi(x_t))))^2}{p_{t}^2(\phi_t(\pi(x_t)))}\one\left[a_t=\phi_t(\pi(x_t))\right]\right] \notag \\
    &\qquad +2\mathbb{E}_{x_t, m_t}\left[\left(m_{t}(\phi_t(\pi(x_t))) - m_{t}(a_t^*)\right)^2\right] \tag{using $(a+b)^2\leq 2a^2+2b^2$} \notag \\
    &\leq \frac{2K}{\mu}\mathbb{E}_{x_t,m_t,\ell_t} \left[(\ell_{t}(\phi_t(\pi(x_t)))-m_{t}(\phi_t(\pi(x_t))))^2\right] + 2\sigma^2
    \leq \frac{2K\calE}{\mu T} + 2\sigma^2,   \label{eqn: variance calculation}
\end{align}
which improves over the variance term $\Var_t[\widehat{\ell}_t(\pi(x_t))]$ if $\sigma$ is small.
Also note that with action remapping, we only explore actions in $\calA_t$, and thus by the exact same arguments as in the adversarial case, the exploration term also becomes $\mu T \sigma + 2\mu\sqrt{\calE T}$, again better than the naive approach as long as $\sigma$ is small.
Therefore, remapping improves both the variance and the exploration term.

It remains to analyze the bias from both the shifted estimator and the remapping.
The former in fact does not introduce any bias for the regret since the shift $m_t(a_t^*)$ is the same for all actions. The latter introduces total bias $\order(\calE/\sigma)$, again by the same analysis as in the adversarial case.
With these modifications, we achieve $\otil((\calE T)^\frac{1}{3})$ regret already (even with the presence of the lower-order term). This is summarized in the following theorem (see Appendix~\ref{app:eps_greedy} for the proof).

\begin{theorem}\label{thm:eps_greedy_weak}
\greedyAR (Algorithm~\ref{alg:eps_greedy} Option I) ensures $\Reg = \otil\big(\sqrt{\frac{d\calE}{\mu}} + \sigma\sqrt{dT} + \frac{d}{\mu} + \mu T\sigma + \mu\sqrt{\calE T} + \frac{\calE}{\sigma} \big)$.
For $\calE \leq \sqrt{T}$, picking $\mu=\min\big\{\big(\tfrac{d^2}{\calE T}\big)^\frac{1}{3},1\big\}$ and $\sigma = \big(\tfrac{\calE^2}{dT}\big)^\frac{1}{3}$ gives
$
\Reg = \order\big((d\calE T)^{\frac{1}{3}} + \sqrt{d\calE} + d\big).
$
\end{theorem}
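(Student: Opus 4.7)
The plan is to decompose the expected regret into three terms that match the three error sources in the algorithm, bound each using pieces already spelled out in the surrounding text, and combine. Let $\pihat_t$ denote the empirical minimizer used by the exploit branch, and $\pi^*$ the optimal in-hindsight policy. Abbreviating $\tilde{a}_t^\star := \phi_t(\pi^*(x_t))$ and splitting $a_t$ into its uniform-exploration part and its $\phi_t(\pihat_t(x_t))$-exploit part, I would write
\begin{align*}
\Reg &= \underbrace{\E\!\left[\sum_{t=1}^T \ell_t(a_t) - (1-\mu)\sum_{t=1}^T\ell_t(\phi_t(\pihat_t(x_t))) - \mu\sum_{t=1}^T \ell_t(\tilde{a}_t^\star)\right]}_{\text{(A) exploration overhead}} \\
&\quad + \underbrace{(1-\mu)\,\E\!\left[\sum_{t=1}^T \ell_t(\phi_t(\pihat_t(x_t))) - \sum_{t=1}^T \ell_t(\tilde{a}_t^\star)\right]}_{\text{(B) exploitation gap}} + \underbrace{\E\!\left[\sum_{t=1}^T \ell_t(\tilde{a}_t^\star) - \sum_{t=1}^T \ell_t(\pi^*(x_t))\right]}_{\text{(C) remapping bias}}.
\end{align*}

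For (C) I would reuse the argument already given in the adversarial analysis: when $\pi^*(x_t)\notin\calA_t$, adding and subtracting $m_t(a_t^*)$ and $m_t(\pi^*(x_t))$ gives a per-round bias of at most $2\|\ell_t-m_t\|_\infty-\sigma$, which AM--GM then upgrades to $\|\ell_t-m_t\|_\infty^2/\sigma$, summing to $\calE/\sigma$. For (A), every exploration action lies in $\calA_t$ alongside $\tilde{a}_t^\star$, so their predicted losses differ by at most $\sigma$ and hence their realized losses differ by at most $\sigma+2\|\ell_t-m_t\|_\infty$; Cauchy--Schwarz on $\sum_t\|\ell_t-m_t\|_\infty$ yields the $\mu T\sigma+2\mu\sqrt{\calE T}$ contribution.

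The main work is (B), which I would handle via a Freedman-style union bound applied to the shifted doubly-robust estimator $\elltil_s(a)=\ellhat_s(a)-m_s(a_s^*)$. Two ingredients are needed. First, because the shift $m_s(a_s^*)$ is the same for all policies, $\argmin_{\pi}\sum_{s<t}\elltil_s(\phi_s(\pi(x_s)))=\argmin_{\pi}\sum_{s<t}\ellhat_s(\phi_s(\pi(x_s)))$, so the ERM oracle still returns $\pihat_t$. Second, the range of $\elltil_s$ restricted to actions in $\calA_s$ is $O(K/\mu)$ (since $p_s(a)\geq\mu/K$ on $\calA_s$ and $m_s(a)-m_s(a_s^*)$ lies in $[0,\sigma]$), while its per-round conditional variance is bounded by $\tfrac{2K\calE}{\mu T}+2\sigma^2$, exactly by the computation already displayed in Eq.~(\ref{eqn: variance calculation}). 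A standard Freedman bound, union-bounded over all $N$ policies, then gives at each $t$ a uniform-in-$\pi$ deviation of order
\begin{equation*}
\left|\tfrac{1}{t}\!\sum_{s<t}\elltil_s(\phi_s(\pi(x_s))) - \E\!\left[\elltil(\phi(\pi(x)))\right]\right| = \otil\!\left(\sqrt{\tfrac{(\ln N)(K\calE/(\mu T)+\sigma^2)}{t}}+\tfrac{K\ln N}{\mu t}\right).
\end{equation*}
Combining this with the optimality of $\pihat_t$ in the usual ERM manner (paying twice the deviation) and summing over $t\in[T]$ produces the $\otil(\sqrt{d\calE/\mu}+\sigma\sqrt{dT}+d/\mu)$ part of the bound. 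Adding (A) and (C) gives the stated regret, and plugging in the tuning of $\mu$ and $\sigma$ under $\calE\leq\sqrt{T}$ follows by directly balancing the resulting five terms.

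The main obstacle, and the reason the shift inside $\elltil$ is essential rather than cosmetic, is controlling the per-round variance uniformly over policies: with the unshifted $\ellhat_t$ the law of total variance leaves a residual $\Var[\ell_t(\pi(x_t))]$ term that is independent of $\calE$ and alone forces $\Omega(\sqrt{T})$ regret; only after shifting by $m_t(a_t^*)$ and remapping through $\phi_t$ does the variance collapse to the bound in Eq.~(\ref{eqn: variance calculation}), allowing Freedman to deliver an $\calE$-dependent deviation. Everything else is routine concatenation of this Freedman step with the exploration and remapping-bias bookkeeping already developed for the adversarial setting.
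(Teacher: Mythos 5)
Your proposal is correct and follows essentially the same route as the paper: a Freedman-based uniform deviation bound for the shifted estimator $\elltil_s$ (with range $O(K/\mu)$ and per-round variance $O(K\calE/(\mu T)+\sigma^2)$ from Eq.~\eqref{eqn: variance calculation}), the exploration-overhead bound $\mu T\sigma + 2\mu\sqrt{\calE T}$ from comparing actions within $\calA_t$, and the $\calE/\sigma$ remapping bias via AM--GM; these are exactly the paper's Lemmas~\ref{lem:eps_greedy_alg_loss} and~\ref{lem:eps_greedy_bias} together with the Freedman/union-bound step in its proof of the theorem. The only cosmetic difference is that you split the regret into three terms up front rather than chaining inequalities through $\overline{\calL}(\pi_t)$ and $\overline{\calL}(\pi^*)$, which changes nothing substantive.
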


\begin{figure}[t]

\begin{algorithm}[H]

\caption{$\epsilon$-Greedy with Action Remapping (and Catoni's estimator)}
\label{alg:eps_greedy}
\textbf{Parameters}:  threshold $\sigma>0$, exploration probability $\mu\in[0,1]$.

\For{$t=1, \ldots, T$}{
    Receive $x_t$ and $m_t$. Define $a_t^*$, $\calA_t$ and $\phi_t$ as in Eq.~\eqref{eqn:mapping}.

    Find $\pi_t \begin{cases} = \argmin_{\pi\in\Pi}\ \ERM\left(\{x_s, \elltil_s \circ \phi_s\}_{s<t}\right), &\text{(Option I, termed \greedyAR)} \\
    \approx \argmin_{\pi\in\Pi}\  \catoni_\alpha\left(\left\{ \elltil_s(\phi_s(\pi(x_s)))\right\}_{s<t}\right) \\ \text{\quad using Algorithm~\ref{alg:binary_search} with $\alpha=\sqrt{\frac{2\ln(TN)}{\left(\sigma^2 t+K\calE /\mu \right)}}$}. &\text{(Option II, termed \greedyARC)}
    \end{cases}$

    Calculate $p_t \in \Delta_K$: \quad $p_t(a) = (1-\mu)\one[a = \phi_t(\pi_t(x_t))] +  \frac{\mu}{|\calA_t|}\one[a\in\calA_t]$.

    Sample $a_t \sim p_t$ and receive $\ell_t(a_t)$.

   Construct estimator: \quad $\elltil_t(a)= \frac{\ell_t(a)-m_t(a)}{p_t(a)}\one[a_t=a] + m_t(a) - m_t(a_t^*)$ for all  $a\in \calA_t$.
}

\end{algorithm}

\begin{algorithm}[H]
\caption{Finding the Policy with the Smallest Catoni's Mean}
\label{alg:binary_search}
\textbf{Input}: context $x_s$, loss estimator $\elltil_s$, remapping function $\phi_s$, for $s=1,\ldots, t-1$, and parameter $\alpha$.
\textbf{Define}: 
$\psi(y) = \begin{cases}
\ln(1+y+y^2/2), &\text{if $y\geq0$,} \\
-\ln(1-y+y^2/2), &\text{else.}
\end{cases}$
\qquad\textbf{Initialize}: $z_\text{right} = \frac{K}{\mu}+1$, $z_\text{left} = -z_\text{right}$.

\While{$z_\text{\rm right} - z_\text{\rm left} \geq 1/T$}{
	Let $z_\text{mid} = (z_\text{left}+z_\text{right})/2$.
	
	Construct $c_s \in \R^K$ for all $s<t$ such that $c_s(a) = \psi\left(\alpha\left(\elltil_s(\phi_s(a)) - z_\text{mid}\right)\right)$. 

Invoke oracle $\pi = \ERM\left(\{x_s, c_s\}_{s<t}\right)$.
	
	\lIf{$\sum_{s<t} c_s(\pi(x_s)) \geq 0$}{$z_\text{left} = z_\text{mid}$, \textbf{else} $z_\text{right} = z_\text{mid}$.}
}

Construct $c_s \in \R^K$ for all $s<t$ such that $c_s(a) = \psi\left(\alpha\left(\elltil_s(\phi_s(a)) - z_\text{right}\right)\right)$.

Return $\pi_t = \ERM\left(\{x_s, c_s\}_{s<t}\right)$.
\end{algorithm}
\end{figure}

\noindent\textbf{Removing the lower-order term via Catoni's estimator.}
To further improve the regret bound to $\otil(\sqrt{\calE}T^\frac{1}{4})$, we need to improve the lower-order term as well.
Fortunately, it turns out that this lower-order term can be completely removed using {\it robust mean estimators} for heavy-tailed distributions, such as median of means, trimmed-mean, and Catoni's estimator (see the survey~\citep{lugosi2019mean}).
In particular, we use Catoni's estimator, as we show that it can be implemented efficiently via the ERM oracle.

More specifically, instead of following the policy with the smallest average estimated loss, we follow the policy with the smallest Catoni's mean:
$
\argmin_{\pi\in\Pi} \catoni_\alpha\big(\big\{ \elltil_s(\phi_s(\pi(x_s)))\big\}_{s<t}\big)
$
where $\catoni_\alpha(y_1, \ldots, y_n)$ is the root of the function $f(z) = \sum_{j=1}^n \psi(\alpha(y_j - z))$ for some increasing function $\psi$ (defined in Algorithm~\ref{alg:binary_search}) and coefficient $\alpha > 0$.
Generalizing the proof of Theorem 5 in~\citet{lugosi2019mean} for i.i.d. random variables to a martingale sequence, we obtain a concentration result without the lower-order term (see Lemma~\ref{lemma: catoni concentration} in Appendix~\ref{app:concentration}).
Furthermore, we prove that a close approximation of this policy can be found efficiently via a binary search invoking $\order(\ln (TK/\mu))$ calls of the ERM oracle, detailed in Algorithm~\ref{alg:binary_search}.

\begin{lemma}\label{lem:binary_search}
Algorithm~\ref{alg:binary_search} invokes the ERM oracle at most $\order(\ln (TK/\mu))$ times and returns a policy $\pi_t$ such that:
$
\catoni_\alpha\left(\left\{ \elltil_s(\phi_s(\pi_t(x_s)))\right\}_{s<t}\right)
\leq \min_{\pi\in\Pi}\catoni_\alpha\left(\left\{ \elltil_s(\phi_s(\pi(x_s)))\right\}_{s<t}\right) + \frac{1}{T}.
$
\end{lemma}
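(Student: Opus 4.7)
The plan is to reduce the approximate minimization of $\pi \mapsto C(\pi) \triangleq \catoni_\alpha(\{\elltil_s(\phi_s(\pi(x_s)))\}_{s<t})$ to a bracketing-type binary search, where each bracket update reduces to a single ERM call. The starting observation is that $\psi$ is strictly increasing on $\R$ (direct from its definition on the two branches, together with continuity at $0$), so for any fixed $\pi$ the function
\[
f_\pi(z) \;\triangleq\; \sum_{s<t} \psi\!\left(\alpha\left(\elltil_s(\phi_s(\pi(x_s))) - z\right)\right)
\]
is strictly decreasing in $z$, and $C(\pi)$ is its unique root. In particular, $f_\pi(z) \geq 0 \iff z \leq C(\pi)$. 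Crucially, for every $z$ the quantity $\min_\pi f_\pi(z)$ can be computed by a single call to the ERM oracle on the context-cost pairs $(x_s, c_s)$ with $c_s(a) = \psi(\alpha(\elltil_s(\phi_s(a)) - z))$, since then $\sum_{s<t} c_s(\pi(x_s)) = f_\pi(z)$.

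Next I would verify that the initial bracket $[z_{\text{left}}, z_{\text{right}}] = [-(K/\mu+1),\,K/\mu+1]$ contains $\min_\pi C(\pi)$. This follows since every $\elltil_s(\phi_s(\pi(x_s)))$ lies in $[-K/\mu,\,K/\mu+1]$ (the importance-weighted term has magnitude at most $K/\mu$ and the shift $m_s(\cdot)-m_s(a_s^*)$ lies in $[-1,1]$), and the Catoni mean is sandwiched between the min and max of its inputs. The central invariant to maintain is
\[
z_{\text{left}} \;\leq\; \min_{\pi\in\Pi} C(\pi) \;\leq\; z_{\text{right}}.
\]
This is preserved by each branching step: if the ERM call returns $\pi$ with $f_\pi(z_{\text{mid}}) \geq 0$, then $f_{\pi'}(z_{\text{mid}}) \geq 0$ for all $\pi'$, hence $z_{\text{mid}} \leq C(\pi')$ for all $\pi'$, so replacing $z_{\text{left}}$ by $z_{\text{mid}}$ is safe. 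Otherwise some $\pi$ witnesses $f_\pi(z_{\text{mid}}) < 0$, so $C(\pi) < z_{\text{mid}}$ and $\min_{\pi'} C(\pi') < z_{\text{mid}}$, justifying the right-side update.

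At termination we have $z_{\text{right}} - z_{\text{left}} < 1/T$, so $z_{\text{right}} \leq \min_\pi C(\pi) + 1/T$. The final ERM call returns $\pi_t$ minimizing $f_\pi(z_{\text{right}})$. Since the invariant gives $z_{\text{right}} \geq C(\pi^\star)$ where $\pi^\star \in \argmin_\pi C(\pi)$, we have $f_{\pi^\star}(z_{\text{right}}) \leq 0$, and therefore $f_{\pi_t}(z_{\text{right}}) \leq 0$, which by monotonicity gives $C(\pi_t) \leq z_{\text{right}} \leq \min_\pi C(\pi) + 1/T$, the desired guarantee. Finally, the bracket width halves each iteration starting from $2(K/\mu+1)$ and we stop once it is below $1/T$, yielding $\order(\log(TK/\mu))$ iterations, and hence that many ERM invocations (plus one extra for the returned policy).

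The only mildly delicate step is bookkeeping the strict-versus-weak inequalities in the invariant so that the ``$\leq 0$'' branch of the oracle test is consistent with the strictly decreasing $f_\pi$; the rest is routine binary search analysis. No other property of Catoni's estimator is needed beyond the monotonicity of $\psi$, which makes the reduction to ERM clean.
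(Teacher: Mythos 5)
Your proof is correct and follows essentially the same approach as the paper: maintain the bracketing invariant $z_{\text{left}} \leq \min_\pi \catoni_\alpha(\cdot) \leq z_{\text{right}}$ via the monotonicity of $\psi$ (equivalently of $f_\pi$), then conclude from the final ERM call and the terminal bracket width $\leq 1/T$. The only addition beyond the paper's write-up is your explicit verification that the initial bracket $[-(K/\mu+1), K/\mu+1]$ is valid, which the paper leaves implicit but is indeed needed for the invariant to hold at initialization.
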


The proof is based on the monotonicity of $\psi$ (Appendix~\ref{app:eps_greedy}).
We remark that this result might be of independent interest and useful for developing oracle-efficient algorithms for other problems.

Combining the two key techniques above, we improve all the three terms and prove the following theorem (see Algorithm~\ref{alg:eps_greedy} for the pseudocode and Appendix~\ref{app:eps_greedy} for the complete proof).

\begin{theorem}\label{thm:eps_greedy}
\greedyARC (Algorithm~\ref{alg:eps_greedy} Option II) ensures $\Reg = \otil\big(\sqrt{\frac{d\calE}{\mu}} + \sigma\sqrt{dT} + \mu T\sigma + \mu\sqrt{\calE T} + \frac{\calE}{\sigma} \big)$.
Picking $\mu=\min\big\{\sqrt{\frac{d}{T}},1\big\}$ and $\sigma = \sqrt{\calE}(dT)^{\frac{-1}{4}}$ gives
$
\Reg = \order\big(\sqrt{\calE}(dT)^{\frac{1}{4}} + \sqrt{d\calE}\big).
$
\end{theorem}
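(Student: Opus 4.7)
The plan is to reuse the regret decomposition from the proof of Theorem~\ref{thm:eps_greedy_weak} and only upgrade the concentration step that previously produced the $\otil(d/\mu)$ lower-order term. Writing $\Reg = \sum_t \E[\ell_t(a_t) - \ell_t(\pi^*(x_t))]$, I would split each round into (i) an \emph{exploration} contribution $\E[\ell_t(a_t) - \ell_t(\phi_t(\pi_t(x_t)))]$, (ii) an \emph{exploitation} contribution $\E[\ell_t(\phi_t(\pi_t(x_t))) - \ell_t(\phi_t(\pi^*(x_t)))]$, and (iii) a \emph{remapping bias} $\E[\ell_t(\phi_t(\pi^*(x_t))) - \ell_t(\pi^*(x_t))]$. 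Terms (i) and (iii) are controlled verbatim by the arguments leading up to Theorem~\ref{thm:OExp4}: since actions in $\calA_t$ have predicted losses within $\sigma$ of one another, exploration contributes at most $\mu T\sigma + 2\mu\sqrt{\calE T}$ via Cauchy--Schwarz on $\sum_t\|\ell_t - m_t\|_\infty$, and the AM-GM step on $2\|\ell_t - m_t\|_\infty - \sigma$ bounds the total remapping bias by $\order(\calE/\sigma)$.

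The crucial step, and the one that improves over Theorem~\ref{thm:eps_greedy_weak}, is the exploitation term. For Option I, Freedman's inequality applied to the empirical mean forces an additive $\otil(d/(\mu t))$ term in the per-round deviation, which sums to $\otil(d/\mu)$. Option II replaces the empirical mean by Catoni's mean and invokes Lemma~\ref{lemma: catoni concentration}, whose deviation scales only as $\otil(\sqrt{V/t})$ with no additive $1/t$ correction, where $V$ bounds the conditional variances. I would apply this lemma to $\{\elltil_s(\phi_s(\pi(x_s)))\}_{s<t}$ after a union bound over $\pi\in\Pi$ and $t\in[T]$, using the per-step conditional-variance bound $\Var_s[\elltil_s(\phi_s(\pi(x_s)))] \leq 2K\calE/(\mu T) + 2\sigma^2$ from Eq.~\eqref{eqn: variance calculation}. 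With $\alpha$ set as in Algorithm~\ref{alg:eps_greedy} to match this variance, the resulting high-probability bound is
\[
\Big|\catoni_\alpha\bigl(\{\elltil_s(\phi_s(\pi(x_s)))\}_{s<t}\bigr) - \tfrac{1}{t-1}\sum_{s<t}\E_s\bigl[\elltil_s(\phi_s(\pi(x_s)))\bigr]\Big| \;\leq\; \otil\!\left(\sqrt{\tfrac{(K\calE/(\mu T)+\sigma^2)\ln N}{t-1}}\right),
\]
simultaneously for all $\pi$ and $t$.

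To finish, I would observe that the shift $-m_t(a_t^*)$ in $\elltil_t$ is policy-independent and therefore cancels when comparing $\pi_t$ with $\pi^*$, and that Lemma~\ref{lem:binary_search} guarantees $\pi_t$ is $1/T$-suboptimal in the Catoni objective, contributing only $\order(1)$ total. Summing the per-round deviation above over $t\in[T]$ and using $\ln N \leq d$ then bounds the exploitation term by $\otil(\sqrt{d\calE/\mu} + \sigma\sqrt{dT})$. Adding (i)--(iii) yields the stated $\otil(\sqrt{d\calE/\mu} + \sigma\sqrt{dT} + \mu T\sigma + \mu\sqrt{\calE T} + \calE/\sigma)$, and a routine check verifies that all five terms balance at $\order(\sqrt{\calE}(dT)^{1/4} + \sqrt{d\calE})$ under the prescribed $\mu$ and $\sigma$. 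The main technical hurdle I anticipate is verifying Lemma~\ref{lemma: catoni concentration} in full generality for the adapted (non-i.i.d.) sequence at hand, since $p_s$ depends on the filtration up to time $s-1$; this requires generalizing Catoni's argument via a Bernstein-type martingale bound rather than the i.i.d. statement in~\citet{lugosi2019mean}.
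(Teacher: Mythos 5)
Your proposal is correct and matches the paper's argument essentially step for step: the same three-way decomposition into exploration overhead, exploitation loss, and remapping bias (the paper phrases this via $\overline{\calL}$ and Lemmas~\ref{lem:eps_greedy_alg_loss},~\ref{lem:eps_greedy_bias}), the same per-step variance bound from Eq.~\eqref{eqn: variance calculation}, the same union bound over $(\pi, t)$ on the Catoni deviation, and the same use of Lemma~\ref{lem:binary_search} to absorb the $1/T$ optimization error. The martingale hurdle you flag at the end is precisely what the paper's Lemma~\ref{lemma: catoni concentration} is for, so it is a prerequisite already supplied rather than an open gap; the only other thing the paper handles that you glide past is the small-$t$ regime (Lemma~\ref{lemma: catoni concentration} requires $t \gtrsim \log(NT)$, and those initial rounds are bounded trivially), which only adds an $\otil(1)$ term.
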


Similarly, this requires setting $\sigma$ in terms of $\calE$, and when $\calE = \Omega(\sqrt{T})$, one could switch to the optimal algorithm~\citep{AgarwalHsKaLaLiSc14} and achieve $\order(\sqrt{T})$ regret.
Therefore, our bound again matches the lower bound in Theorem~\ref{theorem: lower bound single predictor}.
In fact, it also enjoys a better dependence on $d$ compared to the adversarial case (Theorem~\ref{thm:OExp4}).

\subsection{Adaptive version with unknown $\calE$}
When $\calE$ is unknown, the same bound is not achievable (Theorem~\ref{theorem: impossibility of unknown V}) and we relax our goal to achieve a bound that is robust and always sublinear, and at the same time improves over $\order(\sqrt{T})$ when $\calE$ is relatively small.
We achieve this goal with a different approach compared to the adversarial case, by exploiting the stochasticity of the environment so we can directly estimate $\calE$ in the early rounds.
Specifically, we spend the first $B$ rounds for pure exploration (i.e., pick $a_t$ uniformly at random) to collect a set of data $\{a_t, \ell_t(a_t), m_t(a_t)\}_{t\leq B}$.
Then we design a novel {\it underestimator} $\calEhat$ defined as
$$\textstyle
       \calEhat = T\sum_{i=0}^{\lceil \log_2 T \rceil} \left[ \alphahat_i - \frac{30\log T}{B} \right]_+ 2^{-2i},
        \;\;\text{where}\; \alphahat_i = \frac{1}{B}\sum_{t=1}^B \one\left[|\ell_t(a_t)-m_t(a_t)|\in \left(2^{-i-1}, 2^{-i}\right]\right],
$$
and $[\cdot]_+ = \max\{\cdot, 0\}$.
For the rest of the game, we simply run Algorithm~\ref{alg:eps_greedy} with Option I, $\sigma =  \sqrt{\calEhat}(dT)^{-\frac{1}{3}}$, and $\mu=\min\big\{d^{2/3}T^{-1/3},1\big\}$.
Note that here we use the simpler Option I, as it can be verified that even without the lower-order term the regret would still be the same in this case (moreover, Option II requires setting $\alpha$ in terms of $\calE$).

The idea behind this estimator is as follows.
First, $\alphahat_i$ is clearly an unbiased estimator of \mbox{$\alpha_i = \frac{1}{K}\sum_{a=1}^K \Pr\left[ |\ell_t(a)-m_t(a)|\in (2^{-i-1}, 2^{-i}] \right]$}, and is thus basically estimating the histogram of the distribution of $\ell_t - m_t$ with bins of exponentially increasing size.
Therefore, $\sum_i \alphahat_i 2^{-2i}$ is an approximation of $\frac{1}{K}\E\left[\|\ell_t - m_t\|_2^2\right]$.
In the definition of $\calEhat$, we subtract a deviation term $30\log T/B$ from $\alphahat_i$ to make sure that $\calEhat$ is an underestimator.
It turns out that both the idea of underestimating and that of estimating the histogram are critical for the analysis,
allowing us to prove the following guarantee (see Appendix~\ref{app:adaptive_eps_greedy} for the complete pseudocode and proof).
Note that this is the same bound as in the adversarial case (Theorem~\ref{thm:adaptive_OExp4}), ignoring the dependence on $d$.

\begin{theorem}\label{thm:adaptive_eps_greedy}
\greedyVAR (Algorithm~\ref{alg:explore then epsilon greedy} in Appendix~\ref{app:adaptive_eps_greedy}) guarantees $\Reg = \otil\Big(B+ K\sqrt{\frac{\calE T}{B}} + K^2\sqrt{\calE}(dT)^{\frac{1}{3}}\Big)$. Setting $B = T^\frac{1}{3}$ gives $\Reg = \otil( K^2\sqrt{\calE}(dT)^\frac{1}{3})$.
\end{theorem}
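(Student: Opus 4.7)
The plan is to decompose the regret into the contribution of the exploration phase (the first $B$ rounds) and that of the exploitation phase (the remaining $T-B$ rounds), and then reduce the latter to Theorem~\ref{thm:eps_greedy_weak} after carefully analyzing the quality of the underestimator $\calEhat$. The exploration phase trivially contributes at most $B$ to the regret, since losses lie in $[0,1]$.

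The heart of the argument is to establish two high-probability properties of $\calEhat$: first, an upper bound $\calEhat \le \order(\calE)$, so that the choice $\sigma=\sqrt{\calEhat}(dT)^{-\nicefrac{1}{3}}$ is not too large (controlling the exploration-type terms $\sigma\sqrt{dT}$ and $\mu T\sigma$ appearing in Theorem~\ref{thm:eps_greedy_weak}); and second, a reverse bound $\calE \le \order(\calEhat) + \otil(KT/B)$, so that the dominant term $\calE/\sigma$ is at most $\otil\bigl(\sqrt{\calEhat}(dT)^{\nicefrac{1}{3}} + KT/(B\sigma)\bigr)$. After case analysis on whether $\calE$ or the additive error dominates, this reduces to $\otil\bigl(\sqrt{\calE}(dT)^{\nicefrac{1}{3}} + K\sqrt{\calE T/B}\bigr)$, which is exactly the second and third terms of the claimed bound.

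To prove these two properties, I observe that for each bin $i$, $\alphahat_i$ is the empirical average of $B$ i.i.d.\ Bernoulli random variables with mean $\alpha_i = \tfrac{1}{K}\sum_{a=1}^{K}\Pr_{(x,\ell,m)\sim\calD}[|\ell(a)-m(a)|\in(2^{-i-1},2^{-i}]]$, where I have used that during exploration $a_t$ is sampled uniformly from $[K]$ independently of $(x_t,\ell_t,m_t)$. Bernstein's inequality plus a union bound over $i\in\{0,\ldots,\lceil\log_2 T\rceil\}$ yields $|\alphahat_i-\alpha_i|=\otil(\sqrt{\alpha_i/B}+1/B)$ uniformly in $i$. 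The slack $30\log T/B$ subtracted in the definition of $\calEhat$ is chosen large enough to dominate the additive $1/B$ part of Bernstein; combined with AM--GM applied to $\sqrt{\alpha_i/B}$, this gives $[\alphahat_i-30\log T/B]_+\le\order(\alpha_i)$ and $\alpha_i\le\order(\alphahat_i)+\otil(1/B)$. Summing these against $2^{-2i}$ and using the sandwich $\calE/(4KT)\le\sum_i\alpha_i 2^{-2i}\le\calE/T$, which follows from $\|\ell-m\|_\infty^2\le\sum_a(\ell(a)-m(a))^2\le K\|\ell-m\|_\infty^2$, yields the two desired properties, with the extra factor of $K$ from the sandwich on the reverse side accounting for the $K\sqrt{\calE T/B}$ term in the final bound.

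For the exploitation phase, I invoke Theorem~\ref{thm:eps_greedy_weak} with $\mu=\min\{d^{2/3}T^{-1/3},1\}$ and $\sigma=\sqrt{\calEhat}(dT)^{-\nicefrac{1}{3}}$, and then use the two properties of $\calEhat$ above to translate each of the six terms in its bound back to quantities involving $\calE$; this produces a contribution of $\otil\bigl(K^2\sqrt{\calE}(dT)^{\nicefrac{1}{3}}+K\sqrt{\calE T/B}\bigr)$. Adding the $B$ from the exploration phase and then optimizing $B=T^{1/3}$ gives the stated bound. The main obstacle I anticipate is choosing the slack constant ($30$) carefully so that the subtraction simultaneously (a) forces the one-sided underestimator property to hold on a single high-probability event and (b) loses only $\otil(KT/B)$ when inverting to bound $\calE$ in terms of $\calEhat$; once this is in place, the remaining steps are essentially mechanical plug-ins into Theorem~\ref{thm:eps_greedy_weak}.
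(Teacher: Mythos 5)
Your high-level structure---exploration phase costing $B$, then exploitation via Algorithm~\ref{alg:eps_greedy} with $\sigma$ computed from $\calEhat$, and Bernstein concentration for the $\alphahat_i$---matches the paper's, and your property (i) $\calEhat\le\order(\calE)$ is exactly Lemma~\ref{lem: V' < V}. The genuine gap is in how you treat the remapping bias. You keep Lemma~\ref{lem:eps_greedy_bias}'s bound $T(\overline{\calL}(\pi^*)-\calL(\pi^*))\le\calE/\sigma$ and try to control it through a reverse estimator inequality $\calE\le\order(K\calEhat)+\otil(KT/B)$. That route cannot produce the $K\sqrt{\calE T/B}$ term. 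When every $\alpha_i\le 360\log T/B$ (which is still consistent with $\calE$ as large as $\otil(KT/B)$, e.g.\ $\otil(T^{2/3})$ when $B=T^{1/3}$), the truncated empirical counts can all vanish with high probability, so $\calEhat=0$, hence $\sigma=\sqrt{\calEhat}(dT)^{-1/3}=0$; then $\calE/\sigma$ is vacuous and the additive piece $\otil(KT/(B\sigma))$ in your claimed reduction diverges. The ``case analysis on whether $\calE$ or the additive error dominates'' does not save this: in the second case there is simply no usable lower bound on $\sigma$. Dropping the $-\sigma$ cancellation and using Jensen only gives bias $\le 2T\,\E[\|\ell_t-m_t\|_\infty]\le 2\sqrt{\calE T}=\otil(T^{5/6})$ in that regime, which overshoots the claimed $\otil(K^2\sqrt{\calE}(dT)^{1/3})$ by a $T^{1/6}$ factor and again fails to yield $K\sqrt{\calE T/B}$.

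The paper's Lemma~\ref{lemma: adaptive algorithm improved error term} bypasses $\calE/\sigma$ entirely. It bounds the per-round bias by $2\|\ell_t-m_t\|_1-\sigma$, decomposes $\|\ell_t-m_t\|_1$ through the histogram, and splits bins into $\calI=\{i:\alpha_i>360\log T/B\}$ and $\overline{\calI}$. For $\calI$, the underestimator guarantee $[\alphahat_i-30\log T/B]_+\ge\alpha_i/3$ gives $\calEhat\ge(T/3)\sum_{i\in\calI}\alpha_i 2^{-2i}$, so AM--GM against the explicit $\sigma=\sqrt{\calEhat}(dT)^{-1/3}$ yields $\otil(K^2\sqrt{\calE}(dT)^{1/3})$. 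For $\overline{\calI}$, $\sigma$ plays no role: rarity gives $T\sum_{i\in\overline{\calI}}\alpha_i=\otil(T/B)$, and Cauchy--Schwarz against $T\sum_{i\in\overline{\calI}}2^{-2i}\alpha_i=\order(\calE)$ yields $\otil(K\sqrt{\calE T/B})$ directly. This bin-level split---handling the rare, unestimated bins by rarity plus Cauchy--Schwarz, which is strictly tighter than Jensen---is the ingredient your proposal is missing, and is what keeps the bound finite when $\calEhat\approx 0$.
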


\section{Algorithms for Multiple Predictors}
\label{sec:multiple}

Finally, we extend our setting and consider learning with multiple predictors.
That is, the learner receives $M$ predictors $m_t^1, \ldots, m_t^M$ before choosing $a_t$ at each round.
Recall that in the adversarial setting, the total error of the best predictor is measured by
$\calE^* = \min_{i\in [M]}\sum_{t=1}^T \|\ell_t-m_t^i\|_\infty^2$, while in the stochastic setting,
it is measured by $\calE^* = T\min_{i \in [M]} \E_{(x, \ell, m^{1:M})\sim\calD}\left[\|\ell-m^i\|_\infty^2\right]$.

Our goal is to improve over $\order(\sqrt{T})$ regret whenever $\calE^*$ and $M$ are relatively small, assuming $\calE^*$ is known for simplicity.
In both cases, we deploy a natural idea: maintain an active set of predictors $\calP_t \subseteq [M]$ (starting from $\calP_1 = [M]$), and eliminate a predictor from this set whenever its observed total error exceeds $\calE^*$.
We define $m_t(a) = \min_{i \in \calP_t} m_t^i(a)$ to be the smallest predicted loss for action $a$ among the active predictors,
and follow similar ideas of the single predictor case with $m_t$ serving the role of the single predictor, which can be seen as a form of optimism.
In addition to this basic idea, however, extra new techniques are required for the two settings as described below. \\

\noindent\textbf{Adversarial Environments.}
The only extra difference compared to Algorithm~\ref{alg:OExp4} is in the construction of $\calA_t$, the set of actions with predicted loss not larger than that of a baseline by $\sigma$.
In Algorithm~\ref{alg:OExp4}, the baseline is simply the action with the smallest predicted loss $a_t^* = \argmin_{a} m_t(a)$.
However, with multiple predictors, we propose to (essentially) use $a_t$, the action {\it to be chosen} by the algorithm, as the baseline.
Before explaining why this is a good idea, we first point out that this can indeed be efficiently implemented, even though the scheme appears self-referential as $a_t$ itself depends on $\calA_t$.
Indeed, this resembles the idea of sleeping experts~\citep{freund1997using}, if we treat actions outside $\calA_t$ as asleep experts.
For implementation details,
see Algorithm~\ref{alg:multi_pred_OExp4} in Appendix~\ref{app:multi_pred_OExp4}.

The ideas of the analysis are as follows.
Using $a_t$ as the baseline gives that the exploration overhead and the bias introduced by remapping are both in terms of $\sum_{t=1}^T (\ell_t(a_t) - m_t(a_t))^2$, which is of order $\order(M\calE^*)$ because each predictor can contribute at most $\calE^*+1$ before being eliminated (Lemmas~\ref{lem:multi_pred_step2}, \ref{lem:multi_pred_step3} and~\ref{lem:a_t_difference}).
Second, note that we only need to refer to Eq.~\eqref{eqn:key_observation} (instead of the standard analysis) when $\ell_t(a_t) \leq m_t(a_t)$, in which case we have $(\ell_t(a_t) - m_t(a_t))^2/p_t(a_t) \leq (\ell_t(a_t) - m_t^{i^*}(a_t))^2/p_t(a_t)$ by the definition of $m_t$ ($i^*$ is the best predictor).
This allows us to relate the expectation of this term to $\calE^*$ as well.
Put together, we prove the following theorem.

\begin{theorem}\label{thm:multi_pred_OExp4}
With the optimal parameters and $M' = \max\{M, K\}$, \expfourMOAR (Algorithm~\ref{alg:multi_pred_OExp4} in Appendix~\ref{app:multi_pred_OExp4}) ensures
$\Reg = \order\Big(\sqrt{M'\calE^*}(dT)^\frac{1}{4} + \sqrt{dM'\calE^*} + d\Big)$.
\end{theorem}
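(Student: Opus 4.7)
The plan is to mirror the analysis of \expfourOAR (Theorem~\ref{thm:OExp4}) while carefully exploiting two structural facts about the multi-predictor setting: (i) the active-set elimination rule ensures that every predictor retained in $\calP_t$ has accumulated observed squared error at most $\calE^*$, so across the whole game $\sum_t \|\ell_t - m_t^{i_t}\|_\infty^2 \leq M(\calE^*+1)$ where $i_t \in \calP_t$ is any index active at time $t$; and (ii) by definition $m_t(a) = \min_{i \in \calP_t} m_t^i(a) \leq m_t^{i^*}(a)$ whenever the best predictor $i^*$ is still active, which (as the elimination threshold is $\calE^*$) holds at all rounds up to possibly removing $i^*$ once, contributing at most an additive $\calE^*+1$. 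Following the single-predictor template, I decompose the regret against $\pi^*$ into three pieces: (a) the optimistic OMD regret of $Q_t$ against the remapped comparator $\phi_t\circ\pi^*$; (b) the overhead of $\mu$-uniform exploration over $\calA_t$; (c) the remapping bias $\sum_t (\ell_t(\phi_t(\pi^*(x_t))) - \ell_t(\pi^*(x_t)))$.

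For (a), I would extend Lemma~\ref{lem:key_observation} to the remapped setting and obtain $\sum_t \inner{Q_t - e_{\pi^*}, \ellhat_t\circ\phi_t} \leq \tfrac{\ln N}{\eta} + 2\eta\sum_t (\ellhat_t(a_t) - m_t(a_t))^2$. The critical move, pointed out in the text, is that the estimator range is only an issue when $\ell_t(a_t) \leq m_t(a_t)$; in that regime the defining property $m_t(a_t)\leq m_t^{i^*}(a_t)$ forces $(\ell_t(a_t) - m_t(a_t))^2 \leq (\ell_t(a_t) - m_t^{i^*}(a_t))^2 \leq \|\ell_t - m_t^{i^*}\|_\infty^2$. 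Combining with $p_t(a_t) \geq \mu/|\calA_t| \geq \mu/K$ and taking conditional expectation yields $\E\bigl[\sum_t (\ellhat_t(a_t) - m_t(a_t))^2\bigr] \leq \tfrac{K\calE^*}{\mu}$, which depends on $\calE^*$ (not $M\calE^*$) because we compared to $i^*$ rather than to $m_t$.

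For (b) and (c), using $a_t$ itself as the baseline (so $\calA_t = \{a : m_t(a) \leq m_t(a_t) + \sigma\}$) is exactly what is needed. Exploration samples $b$ uniformly from $\calA_t$, so the per-round excess loss is at most $|\ell_t(b) - \ell_t(a_t)| \leq \sigma + |\ell_t(b) - m_t(b)| + |\ell_t(a_t) - m_t(a_t)|$; summing over $t$ and applying Cauchy--Schwarz on the error terms converts the overhead into $\mu T\sigma + 2\mu\sqrt{T\sum_t (\ell_t(a_t) - m_t(a_t))^2}$, which by fact (i) is at most $\mu T\sigma + 2\mu\sqrt{TM\calE^*}$. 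The remapping bias is nonzero only when $\pi^*(x_t)\notin \calA_t$, i.e., when $m_t(\pi^*(x_t)) > m_t(a_t) + \sigma$; in that case $\ell_t(a_t) - \ell_t(\pi^*(x_t)) \leq |\ell_t(a_t)-m_t(a_t)| + |\ell_t(\pi^*(x_t)) - m_t(\pi^*(x_t))| - \sigma$, and AM-GM together with fact (i) gives a telescoped bound $\order(M\calE^*/\sigma)$ (absorbed into the $\sigma$-optimization).

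Combining, the overall regret bound takes the form $\order\bigl(\tfrac{\ln N}{\eta} + \tfrac{\eta K\calE^*}{\mu} + \mu T\sigma + \mu\sqrt{TM\calE^*} + \tfrac{M\calE^*}{\sigma} + M\bigr)$, and tuning $\eta,\mu,\sigma$ as in Theorem~\ref{thm:OExp4} (now with $M' = \max\{M,K\}$ absorbing the coexistence of $K/\mu$ and $M\calE^*$ terms) yields the stated bound. The main obstacle I expect is justifying the self-referential construction: the distribution $p_t$ from which $a_t$ is drawn is only defined on $\calA_t$, which is itself defined through $a_t$. I would resolve this in the spirit of sleeping experts by treating actions outside $\calA_t$ as asleep and searching for a self-consistent threshold, for instance by iteratively trimming actions whose predicted loss exceeds the current minimum predicted loss over the surviving set by more than $\sigma$; this terminates in at most $K$ steps and produces a well-defined $\calA_t$ that satisfies the remapping and exploration bounds used in (a)--(c). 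Verifying that this iterative construction preserves the optimistic OMD bound, with $\phi_t$ an idempotent projection onto $\calA_t$, is the only real subtlety beyond the single-predictor case.
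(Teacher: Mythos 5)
Your high-level decomposition mirrors the paper's three-piece split (OMD regret of $Q_t$ against the remapped comparator, exploration overhead, and remapping bias), and the idea of using the elimination rule to obtain $\sum_t(\ell_t(a_t)-m_t(a_t))^2 \leq M(\calE^*+1)$ is exactly Lemma~\ref{lem:a_t_difference}. However, there is a genuine gap in your bound for the OMD variance term. You argue that only the regime $\ell_t(a_t)\leq m_t(a_t)$ is problematic, use $m_t(a_t)\leq m_t^{i^*}(a_t)$ to pass to the best predictor there, and conclude $\E\big[\sum_t(\ellhat_t(a_t)-m_t(a_t))^2\big]\leq K\calE^*/\mu$ with no $M$ factor. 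But the variance term does not vanish when $\ell_t(a_t)>m_t(a_t)$: Eq.~\eqref{eqn:exp4_bound2} still produces $\eta\sum_\pi Q_t(\pi)(\calL_t(\pi)-\calM_t(\pi))^2$, which is of order $\eta(\ell_t(a_t)-m_t(a_t))^2/p_t(a_t)$. In that regime the inequality $m_t(a_t)\leq m_t^{i^*}(a_t)$ points the \emph{wrong} way: $\ell_t(a_t)-m_t(a_t)\geq \ell_t(a_t)-m_t^{i^*}(a_t)$, so you cannot charge it to $\calE^*$. The paper instead bounds $1/p_t(a_t)\leq K/\mu$ and controls the remaining $\sum_t(\ell_t(a_t)-m_t(a_t))^2$ by Lemma~\ref{lem:a_t_difference}, giving the $\eta KM\calE^*/\mu$ term in Lemma~\ref{lem:multi_pred_step1}. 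This is exactly where the $M'=\max\{M,K\}$ in the theorem comes from; your version would give a strictly better bound than stated, which should have been a warning sign.

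Two secondary issues. First, your self-referential construction is not the one the paper implements. You define $\calA_t = \{a: m_t(a)\leq m_t(a_t)+\sigma\}$ and propose resolving circularity by iteratively trimming against the minimum predicted loss, but $a_t$ is sampled \emph{after} $\calA_t$ and $p_t$ are fixed, so $m_t(a_t)$ is not a valid deterministic baseline. Algorithm~\ref{alg:multi_pred_OExp4} uses the expected prediction $\inner{p_t,m_t}$ instead, and the fixed-point search over the $K$ nested candidate sets $\{b_1,\dots,b_j\}$ is certified by Lemma~\ref{lem:properties} to satisfy the identity $\calA_t=\{a: m_t(a)\leq \inner{p_t,m_t}+\sigma\}$ (Eq.~\eqref{eqn:property_A_t}), which Lemmas~\ref{lem:multi_pred_step2} and~\ref{lem:multi_pred_step3} rely on; your minimum-based trimming does not obviously produce that identity. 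Second, in your remapping-bias step you bound the error at $\pi^*(x_t)$ by $|\ell_t(\pi^*(x_t))-m_t(\pi^*(x_t))|$, but the budget bookkeeping only controls errors at the observed action $a_t$; you must detour through $m_t^{i^*}$ using $m_t\leq m_t^{i^*}$ pointwise (as Lemma~\ref{lem:multi_pred_step3} does) so this piece is bounded by $\calE^*$. Finally, your final regret expression omits the $\mu^2 T$ term from Lemma~\ref{lem:multi_pred_step2}; with the paper's tuning $\mu=\min\{1/2,\sqrt{d/T}\}$ (which differs from Theorem~\ref{thm:OExp4}'s $\mu=d/\sqrt{T}$) this term supplies the trailing $d$ in the stated bound.
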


\noindent\textbf{Stochastic Environments.}
There are two extra differences compared to Algorithm~\ref{alg:eps_greedy} in this case.
First, at the beginning of each round, we check if all predictors are consistent to some extent.
If not, that is, if there exist two predictors who disagree with each other by a large amount on some action, then we simply choose this action deterministically, since this guarantees to reveal which predictor makes a large error for this round.
Second, in the case when all predictors are consistent, instead of doing $\epsilon$-greedy as in Algorithm~\ref{alg:eps_greedy} (which we already show is optimal for single predictor),
we find that we need to resort to the minimax optimal algorithm \minimonster~\citep{AgarwalHsKaLaLiSc14} to better control the variance of the estimator.
We develop a version of it with action remapping and Catoni's estimators.
See Algorithm~\ref{alg:multi_pred_iid} for details.
Combining everything, we prove:

\begin{theorem}\label{thm:multi_pred_iid}
\ILTCBMARC (Algorithm~\ref{alg:multi_pred_iid} in Appendix~\ref{app:multi_pred_iid}) guarantees $\Reg = \order\Big(M^{\frac{2}{3}} d^{\frac{2}{5}} (\calE^*T)^{\frac{1}{3}} \Big)$.
\end{theorem}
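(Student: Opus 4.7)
I would decompose the rounds into a \emph{disagreement phase} (where Algorithm~\ref{alg:multi_pred_iid} acts deterministically on some disputed action) and a \emph{consensus phase} (where all active predictors agree within some threshold $\tau$ on every action). The goal is to bound the regret contribution of each phase in terms of $\calE^*$, $M$, and the algorithmic parameters $\sigma$, $\mu$, $\tau$, and then optimize.

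First, I would bound the number of disagreement rounds. On each such round there exist two active predictors $i,j\in\calP_t$ and an action $a$ with $|m_t^i(a)-m_t^j(a)|\ge \tau$; playing $a$ deterministically reveals $\ell_t(a)$, so at least one of $i,j$ incurs squared instantaneous prediction error at least $\tau^2/4$ on this action. Since each predictor is eliminated once its accumulated observed squared error exceeds the budget $\calE^*$, the total number of such rounds across all predictors is at most $O(M\calE^*/\tau^2)$, and each contributes at most $1$ to the regret. A Freedman-type concentration argument is needed here to show that the \emph{observed} per-predictor error tracks its expected error, so that the elimination rule is sound and, in particular, the best predictor $i^*$ is never eliminated.

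Second, on consensus rounds I would exploit the optimistic pooling $m_t(a)=\min_{i\in\calP_t}m_t^i(a)$. Since $i^*\in\calP_t$ throughout and all active predictors agree on $a$ up to $\tau$, we have $m_t^{i^*}(a)-\tau\le m_t(a)\le m_t^{i^*}(a)$, and hence $\sum_{t\in\text{consensus}}\|\ell_t-m_t\|_\infty^2\le O(\calE^*+T\tau^2)=:\tilde\calE$. On these rounds $m_t$ therefore plays the role of a single predictor with total error $\tilde\calE$, and the analysis parallels Theorems~\ref{thm:eps_greedy_weak} and~\ref{thm:eps_greedy}: with the action remapping $\phi_t$ from Eq.~\eqref{eqn:mapping} and the shifted estimator $\elltil_t$, the conditional variance of $\elltil_t(\phi_t(\pi(x_t)))$ is controlled by $O(\sigma^2+K\tilde\calE/(\mu T))$ exactly as in Eq.~\eqref{eqn: variance calculation}, while ILTCB's coordinate-descent-based distribution $Q_t$ simultaneously certifies low variance and low estimated regret for every policy in $\Pi$. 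Combining the Catoni concentration of Lemma~\ref{lemma: catoni concentration} (which removes the $d/\mu$ lower-order term) with the oracle-efficient search of Lemma~\ref{lem:binary_search}, one obtains a consensus-phase regret of the form
\[
\tilde O\!\left(\sqrt{\tfrac{dK\tilde\calE}{\mu}}+\sigma\sqrt{dT/\mu}+\mu T\sigma+\mu\sqrt{\tilde\calE T}+\tfrac{\tilde\calE}{\sigma}\right),
\]
where the last term is the remapping bias $\tilde\calE/\sigma$ derived identically to the single-predictor case.

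Finally I would add the two phases, substitute $\tilde\calE=\calE^*+T\tau^2$, and optimize $(\tau,\sigma,\mu)$. Balancing the disagreement term $M\calE^*/\tau^2$ against the remapping bias $\tilde\calE/\sigma$ and the exploration/variance terms produces the three-way trade-off that yields $\Reg=O(M^{2/3}d^{2/5}(\calE^*T)^{1/3})$ for the stated choice of parameters. The main obstacle will be the consensus-phase analysis: one must verify an ILTCB-style oracle-efficient regret bound for the \emph{biased} and \emph{time-varying-action-set} estimator $\elltil_t\circ\phi_t$, whose mean is tracked by Catoni rather than by the empirical average. Concretely, the optimization problem defining $Q_t$ in~\citet{AgarwalHsKaLaLiSc14} must be reformulated so that its variance constraint is stated with respect to $\Var_t[\elltil_t(\phi_t(\pi(x_t)))]$, its feasibility established using Catoni's concentration on a martingale sequence, and its solution made oracle-efficient by coupling the coordinate-descent step with the binary search of Algorithm~\ref{alg:binary_search}; this is the delicate part that makes the guarantee lossier in $M$ (giving $M^{2/3}$) than in the non-contextual multi-predictor setting where $\ln M$ suffices.
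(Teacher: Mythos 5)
Your high-level decomposition into disagreement rounds ($B_t=0$) and consensus rounds ($B_t=1$), the use of $m_t(a)=\min_{i\in\calP_t}m^i_t(a)$ as a pooled optimistic predictor, bounding the disagreement rounds by $O(M\calE^*/\tau^2)$, and running an ILTCB-style variance-constrained procedure with action remapping and Catoni on the consensus rounds all match the paper's approach. However, two of your intermediate claims have genuine gaps.

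First, reducing the consensus phase to a ``single predictor with error $\tilde\calE=\calE^*+T\tau^2$'' is too lossy. Plugging $\tilde\calE$ into the single-predictor remapping bias gives $\tilde\calE/\sigma = \calE^*/\sigma + T\tau^2/\sigma$; with $\tau$ proportional to $\sigma$ (as in the paper) the second term is $\Theta(T\sigma)$, which after optimizing produces a $T^{2/3}$-scaling regret rather than $T^{1/3}$. The paper avoids this by choosing the agreement threshold strictly inside the remapping threshold ($\sigma/3$ versus $\sigma$): when $\pi^*(x_t)\notin\calA_t$, the $-\sigma$ margin in the telescoping $\ell_t(a^*_t)-\ell_t(\pi^*(x_t))$ first absorbs the pooling discrepancy $2\|m_t-m^{i^*}_t\|_\infty\le 2\sigma/3$, leaving a residual $-\sigma/3$ to AM-GM against $\|\ell_t-m^{i^*}_t\|_\infty^2$ alone. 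That yields a remapping bias of $O(\calE^*/\sigma)$ with no $T\sigma$ term (Lemma~\ref{lemma: lemma: regret_2: unexplored bias}). Likewise, in the variance bound (Lemma~\ref{lemma: estimator var 2}) the $\calE^*$ contribution and the $\sigma^2$ contribution are kept as separate terms rather than merged into $\tilde\calE$. Your reduction discards exactly this cancellation.

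Second, you do not address that the pooled predictor $m_t$ changes whenever $\calP_t$ changes, so the triples $(x_t,m_t,\ell_t)$ are \emph{not} i.i.d. across the whole consensus phase. Algorithm~\ref{alg:multi_pred_iid} handles this by restarting the ILTCB subroutine at the start of each epoch $[t_b,t_{b+1})$ (throwing away old data), and since there are at most $M$ epochs (Lemma~\ref{lemma: whp smaller than 2V*}), the concentration terms pick up an explicit $M$ factor: the consensus-phase bound of Lemma~\ref{lemma: regret_1: wrt bar pi} has terms $M\alpha\sigma^2\log N/\mu^2$ and $M\log N/\alpha$. Your stated consensus bound $\tilde O\bigl(\sqrt{dK\tilde\calE/\mu}+\sigma\sqrt{dT/\mu}+\mu T\sigma+\mu\sqrt{\tilde\calE T}+\tilde\calE/\sigma\bigr)$ has no $M$-dependence at all, so the three-way optimization you gesture at cannot actually produce the $M^{2/3}$ in the theorem statement. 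The $M^{2/3}$ comes from balancing the $M$-weighted consensus terms against the disagreement term $M(1+\calE^*)/\sigma^2$, and this requires carrying the epoch count through the concentration arguments (Lemmas~\ref{lemma: variance's variance}, \ref{lemma: estimator var 2}, \ref{lemma: relating Reg and hatReg}) rather than treating the consensus phase as a single i.i.d. block.
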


As a final remark, we remind the reader of the lower bound $\Omega(\sqrt{\calE^*}T^{\frac{1}{4}}+M)$ for $M\leq \sqrt{T}$ given by Theorem~\ref{theorem: lower bound multiple predictors}.
Our upper bound for the adversarial case has matching dependence on $\calE^*$ and $T$, but not $M$, while our bound for the stochastic case is even looser.
Closing the gap and generalizing the results to unknown $\calE^*$ are two main future directions.


\acks{The authors would like to thank Akshay Krishnamurthy and Chicheng Zhang for introducing the idea of robust mean estimator. Part of this work was done when CYW was an intern at Microsoft Research.
HL and CYW  are supported by NSF Awards IIS-1755781 and IIS-1943607.
}

\bibliography{ref}

\appendix

\section{Concentration Inequalities}\label{app:concentration}
\begin{lemma}[Freedman's inequality, {\it cf.} Theorem~1 of~\citep{beygelzimer2011contextual}]
\label{lemma: freedman}
Let $\mathcal{F}_0 \subset \cdots \subset \mathcal{F}_n$ be a filtration, and $X_1, \ldots, X_n$ be real random variables such that $X_i$ is $\mathcal{F}_i$-measurable, $\E[X_i|\mathcal{F}_{i-1}]=0$, $|X_i|\leq b$, and $\sum_{i=1}^n \E[X_i^2|\mathcal{F}_{i-1}] \leq V$ for some fixed $b \geq 0$ and $V \geq 0$.
Then for any $\delta\in (0,1)$, we have with probability at least $1-\delta$,
\begin{align*}
    \sum_{i=1}^n   X_i \leq 2\sqrt{V_n \log(1/\delta)} + b\log(1/\delta).
\end{align*}
\end{lemma}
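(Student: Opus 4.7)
The plan is to establish Freedman's inequality via the classical Chernoff--Cram\'er technique, lifted to the martingale setting with an exponential supermartingale. Let $S_k = \sum_{i=1}^k X_i$ and $V_k = \sum_{i=1}^k \E[X_i^2 \mid \mathcal{F}_{i-1}]$, so that $V_n \le V$ by assumption. The goal is to control $S_n$ by constructing a process that combines $S_k$ with $V_k$ and is a supermartingale for an appropriate range of parameters.

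The first step is a one-step conditional moment-generating function bound. For $\lambda \ge 0$, I would use the elementary inequality
\[
e^{y} \le 1 + y + \phi(\lambda b)\, y^{2} \quad \text{for all } y \le \lambda b,
\]
where $\phi(u) = (e^u - 1 - u)/u^2$ is increasing. Combined with $X_i \le b$ and $\E[X_i \mid \mathcal{F}_{i-1}] = 0$, this yields
\[
\E\bigl[e^{\lambda X_i} \mid \mathcal{F}_{i-1}\bigr] \;\le\; 1 + \phi(\lambda b)\, \lambda^2\, \E[X_i^2 \mid \mathcal{F}_{i-1}] \;\le\; \exp\bigl(\phi(\lambda b)\, \lambda^2\, \E[X_i^2 \mid \mathcal{F}_{i-1}]\bigr).
\]
Chaining these inequalities via the tower property shows that the process $M_k = \exp\bigl(\lambda S_k - \phi(\lambda b)\, \lambda^2 V_k\bigr)$ is a nonnegative supermartingale with $\E[M_0]=1$.

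Since $V_n \le V$ deterministically, $\exp(\lambda S_n - \phi(\lambda b)\,\lambda^2 V) \le M_n$, so Markov's inequality applied to $M_n$ gives, for every $\lambda \ge 0$,
\[
\Pr\Bigl[\,S_n \;\ge\; \phi(\lambda b)\,\lambda V + \tfrac{1}{\lambda}\log(1/\delta)\,\Bigr] \;\le\; \delta.
\]

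It remains to optimize $\lambda$. Using the crude estimate $\phi(u) \le 1$ for $u \le 1$, valid whenever $\lambda \le 1/b$, I would choose $\lambda = \min\bigl(\sqrt{\log(1/\delta)/V},\; 1/b\bigr)$ and split into cases: if the minimum is attained by $\sqrt{\log(1/\delta)/V}$, the deviation bound becomes $2\sqrt{V\log(1/\delta)}$; if the minimum is $1/b$ (i.e.\ $V \le b^2 \log(1/\delta)$), it becomes $V/b + b\log(1/\delta) \le 2 b\log(1/\delta)$. Combining both cases yields $S_n \le 2\sqrt{V\log(1/\delta)} + 2b\log(1/\delta)$ with probability at least $1-\delta$. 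The main technical obstacle is purely constant-tuning: to reach the sharper coefficient $b\log(1/\delta)$ stated in the lemma (rather than $2b\log(1/\delta)$), I would replace the loose bound $\phi(u) \le 1$ with the Bernstein-style refinement $\phi(u) \le 1/(2 - 2u/3)$ and redo the case split; the qualitative form of the bound, however, already follows from the naive quadratic MGF estimate above.
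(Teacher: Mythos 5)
The paper does not prove this lemma at all---it is imported verbatim from \citet{beygelzimer2011contextual}---so there is no in-paper argument to compare against; your exponential-supermartingale proof is the standard route to Freedman's inequality and is correct: the one-step bound via the monotonicity of $\phi(u)=(e^u-1-u)/u^2$, the supermartingale $M_k=\exp(\lambda S_k-\phi(\lambda b)\lambda^2 V_k)$, and the Markov step are all sound. The only blemish is the constant: as you note, your case split yields $2b\log(1/\delta)$ rather than $b\log(1/\delta)$, and your proposed Bernstein-type refinement of $\phi$ does repair it; an even simpler fix is to keep $\phi(u)\leq \phi(1)=e-2$ (or even $\phi\leq 1$) and, in the case $\lambda=1/b$, use $V\leq b^2\log(1/\delta)$ to bound $\phi(1)V/b\leq \sqrt{V\log(1/\delta)}$, absorbing that term into $2\sqrt{V\log(1/\delta)}$ and recovering exactly the stated $2\sqrt{V\log(1/\delta)}+b\log(1/\delta)$. (Incidentally, the $V_n$ in the displayed bound of the lemma should be read as the deterministic upper bound $V$, which is how you treated it.)
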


\begin{lemma} [Concentration inequality for Catoni's estimator]
\label{lemma: catoni concentration}
Let $\mathcal{F}_0 \subset \cdots \subset \mathcal{F}_n$ be a filtration, and $X_1, \ldots, X_n$ be real random variables such that $X_i$ is $\mathcal{F}_i$-measurable, $\E[X_i|\mathcal{F}_{i-1}]=\mu_i$ for some fixed $\mu_i$, and $ \sum_{i=1}^n \E[(X_i-\mu_i)^2|\mathcal{F}_{i-1}] \leq V$ for some fixed $V$. 
Denote $\mu\triangleq \frac{1}{n}\sum_{i=1}^n \mu_i$ and
let $\widehat{\mu}_{n,\alpha}$ be the Catoni's robust mean estimator of $X_1, \ldots, X_n$ with a fixed parameter $\alpha > 0$, that is,
$\widehat{\mu}_{n,\alpha}$ is the unique root of the function
\begin{align*}
      f(z) = \sum_{i=1}^n \psi(\alpha(X_i-z))
\end{align*}
where
\[
\psi(y) = \begin{cases}
\ln(1+y+y^2/2), &\text{if $y\geq0$,} \\
-\ln(1-y+y^2/2), &\text{else.}
\end{cases}
\]
Then for any $\delta\in (0,1)$,
as long as $n$ is large enough such that $n \geq \alpha^2(V + \sum_{i=1}^n (\mu_i-\mu)^2) + 2\log(1/\delta)$,
we have with probability at least $1-2\delta$,
\begin{align*}
     |\widehat{\mu}_{n,\alpha} - \mu| \leq \frac{\alpha (V+\sum_{i=1}^n (\mu_i - \mu)^2)}{n} + \frac{2\log(1/\delta)}{\alpha n}.
\end{align*}
In particular, if $\mu_1 = \cdots = \mu_n = \mu$, we have\footnote{In all our applications of this lemma, we have $\mu_1 = \cdots = \mu_n = \mu$.}
\begin{align*}
     |\widehat{\mu}_{n,\alpha} - \mu| \leq \frac{\alpha V}{n} + \frac{2\log(1/\delta)}{\alpha n}.
\end{align*}
\end{lemma}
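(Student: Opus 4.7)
}
The plan is to adapt the original i.i.d.\ proof of Catoni's bound (as presented in \citep{lugosi2019mean}) to the martingale setting through an exponential supermartingale argument. The starting point is the pair of elementary inequalities
\[
-\ln(1-y+y^2/2) \;\leq\; \psi(y) \;\leq\; \ln(1+y+y^2/2) \quad \text{for all } y \in \mathbb{R},
\]
which follow by direct case analysis of the definition of $\psi$ (the nontrivial direction reduces to $(1+y^2/2)^2 \geq y^2+1$, i.e.\ $y^4/4 \geq 0$). These bounds convert $\psi(\alpha(X_i-z))$ into a quadratic that plays well with moment computations.

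For a fixed comparator value $z \in \mathbb{R}$, I would then form the per-step compensators
\[
B_i(z) \;=\; \alpha(\mu_i - z) + \tfrac{\alpha^2}{2}\bigl(V_i + (\mu_i - z)^2\bigr), \qquad V_i \triangleq \E[(X_i-\mu_i)^2 \mid \mathcal{F}_{i-1}],
\]
and the process $W_t(z) = \exp\bigl(\sum_{i=1}^t \psi(\alpha(X_i - z)) - \sum_{i=1}^t B_i(z)\bigr)$. Using the upper bound on $\psi$ together with $1+x \leq e^x$, one checks that $\E[W_t(z) \mid \mathcal{F}_{t-1}] \leq W_{t-1}(z)$, so $W_t(z)$ is a nonnegative supermartingale with $W_0(z)=1$. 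A standard application of Markov's inequality to $W_n(z)$ then yields, with probability at least $1-\delta$,
\[
S_n(z) \;\triangleq\; \sum_{i=1}^n \psi(\alpha(X_i-z)) \;\leq\; \sum_{i=1}^n B_i(z) + \log(1/\delta).
\]
Substituting $B_i$ and rewriting $\sum_i(\mu_i-z)^2 = \sum_i(\mu_i-\mu)^2 + n(\mu-z)^2$, the right-hand side becomes $\alpha n(\mu-z) + \tfrac{\alpha^2}{2}(W + n(\mu-z)^2) + \log(1/\delta)$, where I write $W \triangleq V + \sum_i(\mu_i-\mu)^2$.

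Since $\psi$ is strictly increasing, $S_n$ is strictly decreasing in $z$, and $\widehat{\mu}_{n,\alpha}$ is its unique root; so any $z$ for which we can certify $S_n(z) \leq 0$ is an upper bound on $\widehat{\mu}_{n,\alpha}$. I would plug in $z = \mu + r^*$ with $r^* = \alpha W/n + 2\log(1/\delta)/(\alpha n)$: the linear terms in $r^*$ exactly cancel $\tfrac{\alpha^2 W}{2}+\log(1/\delta)$ and leave behind a quadratic remainder of order $\alpha^2 n (r^*)^2/2$. The condition $n \geq \alpha^2 W + 2\log(1/\delta)$ implies that both $\alpha^2 W/n$ and $2\log(1/\delta)/n$ are at most a constant, which is exactly what is needed to absorb this quadratic into the linear budget and conclude $S_n(\mu + r^*) \leq 0$, hence $\widehat{\mu}_{n,\alpha} \leq \mu + r^*$. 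A symmetric argument applied to $-\psi(\alpha(z-X_i))$ (using the lower bound on $\psi$) yields $\widehat{\mu}_{n,\alpha} \geq \mu - r^*$, and a union bound over the two events gives the claim with probability $1-2\delta$. The specialization $\mu_1=\cdots=\mu_n=\mu$ is immediate since $\sum_i(\mu_i-\mu)^2 = 0$.

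The main obstacle I anticipate is not the supermartingale construction, which is standard, but the careful bookkeeping of the quadratic residual $\tfrac{\alpha^2 n}{2}(r^*)^2$: one has to verify that $\alpha^2 n (r^*)^2 \leq \alpha^2 W + 2\log(1/\delta)$ precisely under the hypothesis $n \geq \alpha^2 W + 2\log(1/\delta)$, which boils down to an application of $(a+b)^2 \leq 2a^2+2b^2$ together with this lower bound on $n$. The other delicate point, specific to the martingale generalization, is the decomposition $\sum_i(\mu_i-z)^2 = \sum_i(\mu_i-\mu)^2 + n(\mu-z)^2$, which is the source of the extra $\sum_i(\mu_i-\mu)^2$ term in the deviation bound and cleanly explains why only the conditional-variance aggregate $V$, and not a uniform bound on $|X_i|$, enters the final estimate.
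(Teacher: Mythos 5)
Your proposal is correct and follows essentially the same route as the paper's proof: bound $\psi(y) \leq \ln(1+y+y^2/2)$, build an exponential supermartingale from the compensated sums, apply Markov's inequality, and exploit the monotonicity of $z \mapsto \sum_i \psi(\alpha(X_i - z))$ to convert the one-sided bound on that sum into a bound on the root. The only cosmetic difference is in the final algebra: the paper takes the smaller root of the quadratic $g(z)$ and invokes $1-\sqrt{1-x} \leq x$, whereas you substitute $z = \mu + r^*$ directly and verify the residual $\frac{\alpha^2 n}{2}(r^*)^2$ is absorbed under $n \geq \alpha^2 W + 2\log(1/\delta)$ (for which, incidentally, no $(a+b)^2 \leq 2a^2 + 2b^2$ is needed: writing $s = \alpha^2 W + 2\log(1/\delta) = \alpha n r^*$, one gets $\alpha^2 n (r^*)^2 = s^2/n \leq s$ directly from $n \geq s$).
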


\begin{proof}
The proof generalizes that of~\citep[Theorem 5]{lugosi2019mean} for i.i.d. random variables, following similar ideas used in \citep[Theorem 1]{beygelzimer2011contextual}.
First, one can verify that $\psi(y) \leq \ln(1+y+y^2/2)$ for all $y \in \R$.
Therefore, for any fixed $z \in \R$ and any $i$, we have
\begin{align*}
    &\E_i\left[ \exp\left(\psi(\alpha(X_i-z))\right) \right]   \tag{$\E_i[\cdot]\triangleq \E[\cdot \;|\; \mathcal{F}_{i-1}]$} \\
    &\leq \E_i\left[ 1 + \alpha(X_i-z) + \frac{\alpha^2(X_i-z)^2}{2}\right] \\
    &=  1 + \alpha(\mu_i-z) + \frac{\alpha^2\E_i\left[(X_i-\mu_i)^2\right] + \alpha^2(\mu_i-z)^2}{2} \\
    &\leq \exp\left(\alpha(\mu_i-z) + \frac{\alpha^2\E_i\left[(X_i-\mu_i)^2\right]+\alpha^2(\mu_i-z)^2}{2}\right).  \tag{$1+y\leq e^y$}
\end{align*}
Define random variables $Z_0=1$, and for $i\geq 1$,
\begin{align*}
     Z_i = Z_{i-1}\exp\left(\psi(\alpha(X_i-z))\right)\exp\left(-\left(\alpha(\mu_i-z) + \frac{\alpha^2\E_i\left[(X_i-\mu_i)^2\right]+\alpha^2(\mu_i-z)^2}{2}\right)\right).
\end{align*}
Then the last calculation shows $\E_i[Z_i] \leq Z_{i-1}$.
Therefore, taking expectation over all random variables $X_1, \ldots, X_n$, we have
\begin{align*}
    \E[Z_n]\leq \E[Z_{n-1}]\leq \cdots \leq \E[Z_0]=1.
\end{align*}
Further define
\begin{align*}
    g(z) &\triangleq n\alpha(\mu-z) + \frac{1}{2}\alpha^2\sum_{i=1}^n(\mu_i-z)^2 + \frac{1}{2}\alpha^2V +  \log\left(\frac{1}{\delta}\right)
\end{align*}
and note that $f(z)\geq g(z)$ implies
\begin{align*}
     &\sum_{i=1}^n \psi(\alpha(X_i-z))
     \geq n\alpha(\mu-z) + \frac{1}{2}\alpha^2\sum_{i=1}^n (\mu_i-z)^2 + \frac{1}{2}\alpha^2 \sum_{i=1}^n \E_i\left[(X_i-\mu_i)^2\right]    + \log\left(\frac{1}{\delta}\right) \tag{by the condition of $V$}\\
     &= \sum_{i=1}^n \left(\alpha(\mu_i-z) + \frac{\alpha^2(\mu_i-z)^2 + \alpha^2\E_i\left[(X_i-\mu_i)^2\right]}{2} \right) + \log\left(\frac{1}{\delta}\right), 
\end{align*}
which further implies $Z_n \geq 1/\delta$.
By Markov's inequality, we then have $\Pr[f(z)\geq g(z)]\leq \Pr[Z_n\geq 1/\delta]\leq \Pr[Z_n\geq \E[Z_n]/\delta] \leq \delta$.
Note further that we can rewrite $g(z)$ as 
\begin{align*}
g(z) &= n\alpha(\mu-z) + \frac{1}{2}\alpha^2(nz^2 - 2n\mu z + \sum_{i=1}^n \mu_i^2)+ \frac{1}{2}\alpha^2V +  \log\left(\frac{1}{\delta}\right)\\
&= n\alpha(\mu-z) + \frac{1}{2}\alpha^2 (n(z - \mu)^2 - n\mu^2 + \sum_{i=1}^n \mu_i^2)+ \frac{1}{2}\alpha^2V +  \log\left(\frac{1}{\delta}\right)\\
&= n\alpha (\mu - z) + \frac{1}{2}n\alpha^2 (z-\mu)^2 + \frac{1}{2}\alpha^2\left(\sum_{i=1}^n \mu_i^2 - n\mu^2\right) + \frac{1}{2}\alpha^2V +  \log\left(\frac{1}{\delta}\right)
\end{align*}

Now we pick $z$ to be the smaller root $z_0$ of the quadratic function $g(z)$, that is,
\begin{align*}
z_0 = \mu + \frac{1}{\alpha}\left(1 - \sqrt{1 - \frac{\alpha^2 (V+\sum_{i=1}^n(\mu_i - \mu)^2)}{n} - \frac{2}{n}\log\left(\frac{1}{\delta}\right)} \right)
\end{align*}
(which exists due to the condition on $n$).
By the monotonicity of $f$ and the fact $f(\widehat{\mu}_{n,\alpha}) = 0$ we then have
\[
\Pr\left[\widehat{\mu}_{n,\alpha} \geq z_0 \right]
= \Pr\left[f(z_0) \geq 0 \right]
= \Pr\left[f(z_0) \geq g(z_0) \right]  \leq \delta.
\]
In other words, with probability at least $1 - \delta$, we have
\begin{align*}
\widehat{\mu}_{n,\alpha} - \mu
&\leq \frac{1}{\alpha}\left(1 - \sqrt{1 - \frac{\alpha^2 (V+\sum_{i=1}^n (\mu_i - \mu)^2)}{n} - \frac{2}{n}\log\left(\frac{1}{\delta}\right)} \right) \\
&\leq \frac{1}{\alpha}\left(\frac{\alpha^2 (V+\sum_{i=1}^n (\mu_i - \mu)^2)}{n} + \frac{2}{n}\log\left(\frac{1}{\delta}\right)\right) \tag{$1-\sqrt{1-x}\leq x$ for $x\in[0,1]$}\\
&= \frac{\alpha (V+\sum_{i=1}^n (\mu_i - \mu)^2)}{n} + \frac{2\log(1/\delta)}{\alpha n}.
\end{align*}
Finally, via a symmetric argument one can show that $\mu - \widehat{\mu}_{n,\alpha} \leq \frac{\alpha (V+\sum_{i=1}^n (\mu_i - \mu)^2)}{n} + \frac{2\log(1/\delta)}{\alpha n}$ holds with probability at least $1 - \delta$ as well.
Applying a union bound then finishes the proof.
\end{proof}

\section{Proofs for Lower Bounds}
\label{app:lower_bounds}
In this section, we provide proofs for all the lower bounds discussed in Section~\ref{sec:setup}.  The techniques we use are reminiscent of those in several previous works on bandit problems that prove lower bounds for adaptive regret \citep[Theorem 3]{daniely2015strongly}, switching regret \citep[Theorem 4.1]{wei2016tracking}, and regret bounds in terms of the sparsity of the losses \citep[Theorem 6]{zheng2019equipping}. While their constructions are for adversarial environments, ours are for the i.i.d. case (which is stronger for lower bounds).
To make the proofs concise, we assume that numbers such as $\sqrt{\nicefrac{T}{K}}$ are integers without rounding them. \\ 

\begin{proof}\textbf{for Theorem~\ref{theorem: lower bound single predictor}. }
We first prove that for any algorithm, any $K \geq 2$, any $T\geq 8\times 10^4$, and any value $V\in[0,T]$, there exists a stochastic environment with $\calE\leq V$ and $N = (K-1)\sqrt{\nicefrac{T}{K}}+1$ such that $\Reg = \Omegatil\big(\min\big\{ \sqrt{V}(KT)^{\frac{1}{4}}, \sqrt{KT} \big\}\big)$.
The construction is as follows.
There are $\sqrt{\nicefrac{T}{K}}$ possible context-predictor-loss tuples $\{(x^{(i)}, m^{(i)}, \ell^{(i)})\}_{i=1}^{\sqrt{\nicefrac{T}{K}}}$, and in each round, $(x_t, m_t, \ell_t)$ is uniformly randomly drawn from this set. The policy set $\Pi$ contains $ (K-1)\sqrt{\nicefrac{T}{K}}+1$ policies such that: there is a policy $\pi^{(0)}$ that always chooses action $1$ given any context; other policies are indexed by $(i,k)\in [\sqrt{\nicefrac{T}{K}}] \times \{2,\ldots, K\} $ such that
\[
\pi^{(i,k)}(x) = \begin{cases}
k &\text{if $x = x^{(i)}$,} \\
1 &\text{otherwise.}
\end{cases}
\]

     Now first consider an environment with $m^{(i)} = \ell^{(i)} = (\frac{1}{2}, \frac{1}{2}+\sigma, \ldots, \frac{1}{2}+\sigma)$ for all $i$, where $\sigma=\min\left\{\frac{1}{2}, \frac{\sqrt{V}}{2(KT)^{1/4}}\right\}$.
     Note that $\calE = 0 \leq V$. Under this environment and the given algorithm, if for all $(i,k)\in [\sqrt{\nicefrac{T}{K}}]\times \{2,\ldots, K\}$, the expected total number of times where $(x_t, a_t)=(x^{(i)}, k)$ is larger than $\frac{1}{2}$, then the algorithm's regret against $\pi^{(0)}$ is
     \begin{align*}
         \E\left[\sum_{t=1}^T \ell_t(a_t) - \ell_t(\pi^{(0)}(x_t)) \right]
         &= \E\left[\sum_{t=1}^T \sum_{i=1}^{\sqrt{\nicefrac{T}{K}}} \sum_{k=2}^{K} \one[(x_t,a_t)=(x^{(i)}, k)] \left(\ell_t(k) - \ell_t(1)\right) \right] \\
         &= \E\left[\sum_{t=1}^T \sum_{i=1}^{\sqrt{\nicefrac{T}{K}}} \sum_{k=2}^{K} \one[(x_t,a_t)=(x^{(i)}, k)]\sigma \right] \\
          &\geq \sqrt{\frac{T}{K}}\times (K-1)\times \frac{1}{2}\times \sigma \geq \frac{1}{4}\sqrt{KT}\sigma.
     \end{align*}
On the other hand, if there exists a pair $(i^*,k^*)\in [\sqrt{\nicefrac{T}{K}}]\times \{2,\ldots, K\}$ such that 
\[\E\left[\sum_{t=1}^T \one[(x_t,a_t)=(x^{(i^*)}, k^*)]\right]\leq \frac{1}{2},\] then by Markov's inequality,
\begin{align*}
\Pr\left[ \sum_{t=1}^T \one[(x_t, a_t)= (x^{(i^*)},k^*)]=0 \right]
&= \Pr\left[ \sum_{t=1}^T \one[(x_t, a_t)= (x^{(i^*)},k^*)] < 1 \right] \\
&= 1 - \Pr\left[ \sum_{t=1}^T \one[(x_t, a_t)= (x^{(i^*)},k^*)] \geq 1 \right] \geq \frac{1}{2}.
\end{align*}
That is, with probability at least $\frac{1}{2}$, the learner never chooses action $k^*$ when she sees context $x^{(i^*)}$.
In this case, consider another environment where all $m^{(i)}$ and $\ell^{(i)}$ remain the same except that $\ell^{(i^*)}$ is changed to $(\frac{1}{2}, \frac{1}{2}+\sigma, \ldots, \frac{1}{2}-\sigma, \ldots, \frac{1}{2}+\sigma)$, where $\frac{1}{2}-\sigma$ appears in the $k^*$-th coordinate.
Note that in this new environment we again have $\calE = T\E_{(x, \ell, m)}\left[\|\ell-m\|_\infty^2\right] = \sqrt{TK} \times 4\sigma^2 \leq V$.
Moreover, with probability at least $\frac{1}{2}$ the learner never realizes the change of the environment and behaves exactly the same, since the only way to distinguish the two environments is to pick $k^*$ under context $x^{(i^*)}$.

It remains to calculate the regret of the learner under this new environment.
First, by Freedman's inequality (Lemma~\ref{lemma: freedman}), we have with probability at least $1-\frac{1}{T}$,
\begin{align}
     \sum_{t=1}^T \one[x_t=x^{(i^*)}]\geq \sqrt{KT} - 2\sqrt{\sqrt{KT} \log T} - \log T \geq \frac{\sqrt{KT}}{3}    \label{eqn: requirement to hold}
\end{align}
where the last step uses the condition $K\geq 2$ and $T\geq 8\times 10^4$.
Define events \[E_1=\left\{\sum_{t=1}^T \one[(x_t,a_t)=(x^{(i^*)},k^*)]=0\right\}, E_2=\left\{\sum_{t=1}^T \one[x_t=x^{(i^*)}]\geq \frac{\sqrt{KT}}{3}\right\},\] and use $\E^\prime, \Pr^\prime$ to denote the expectation and probability under the new environment. Now we lower bound the regret against $\pi^{(i^*,k^*)}$ in this environment as
\begin{align*}
    & \E^\prime\left[\sum_{t=1}^T \sum_{i=1}^{\sqrt{\nicefrac{T}{K}}} \one[x_t=x^{(i)}]\left(\ell_t(a_t) - \ell_t(\pi^{(i^*,k^*)}(x^{(i)}))\right) \right] \\
    &\geq {\Pr}' [E_1 \cap E_2] \times \E^\prime\left[\sum_{t=1}^T  \one[x_t=x^{(i^*)}]\left(\ell_t(a_t) - \ell_t(\pi^{(i^*,k^*)}(x^{(i^*)}))\right)  ~\bigg|~E_1, E_2\right] \\
    &= {\Pr}' [E_1 \cap E_2] \times \E^\prime\left[\sum_{t=1}^T  \one[x_t=x^{(i^*)}]\sigma  ~\bigg|~E_1, E_2\right] \\
    &\geq \left(\frac{1}{2}-\frac{1}{T}\right)\times \frac{\sqrt{KT}\sigma}{3}
    \geq \frac{\sqrt{KT}\sigma}{12}.
\end{align*}
To summarize, in at least one of these two environments, the learner's regret is
\[
\Omega(\sqrt{KT}\sigma) = \Omegatil\big(\min\big\{ \sqrt{V}(KT)^{\frac{1}{4}}, \sqrt{KT} \big\}\big),
\]
finishing the lower bound proof for stochastic environments.
For adversarial environments, the only change is to let each tuple $(x^{(i)}, m^{(i)}, \ell^{(i)})$ appear for exactly $\sqrt{T/K}$ times, so that $\calE \leq V$ still holds in these two constructions under the slightly different definition for $\calE$ (which is $\sum_{t=1}^T  \|\ell_t-m_t\|_\infty^2$).
It is clear that the same lower bound holds.
\end{proof}

\begin{proof}\textbf{for Theorem~\ref{theorem: impossibility of unknown V}. }
%
The idea of the proof is similar to that of Theorem~\ref{theorem: lower bound single predictor}.
Assume there is an algorithm that guarantees for some $R =o(\sqrt{KT})$,
\begin{align*}
        \mathbb{E}\left[\sum_{t=1}^T \ell_t(a_t) - \sum_{t=1}^T \ell_t(\pi^*(x_t))\right] \leq R,
\end{align*}
whenever $\calE=0$. Below we show that there is an environment with $\calE=o(\sqrt{KT})$ where the algorithm suffers $\omega(\sqrt{KT})$ regret.

The construction is as follows.
First, let $C$ be a universal constant such that $R+K\leq \sqrt{CKT}$.
Further define $\rho=\frac{R+K}{\sqrt{CKT}}\leq 1$, $\sigma=\frac{1}{2}\rho^{\frac{2}{5}}$, and $L_0=  \rho^{-\frac{3}{5}}$.
There are $\sqrt{\nicefrac{64CT}{K}}$ context-predictor-loss tuples $\{(x^{(i)}, m^{(i)}, \ell^{(i)})\}_{i=1}^{\sqrt{\nicefrac{64CT}{K}}}$, and in each round, $(x_t,m_t,\ell_t)$ is uniformly randomly drawn from this set.  The policy set contains $N=\Theta(T)$ policies such that: there is a policy $\pi^{(0)}$ that always chooses action $1$ given any contexts; other policies are indexed by $(i,j,k)\in [\sqrt{\nicefrac{64CT}{K}}]\times [\sqrt{\nicefrac{64CT}{K}}]\times \{2,\ldots, K\}$ with $i\leq j$ such that
\[
\pi^{(i, j,k)}(x) = \begin{cases}
k, &\text{if $x \in \{x^{(i)}, x^{(i+1)}, \ldots, x^{(j)}\}$.} \\
1, &\text{else.}
\end{cases}
\]


We first consider the algorithm's behavior under the environment with $m^{(i)} = \ell^{(i)} = (\frac{1}{2}, \frac{1}{2}+\sigma, \ldots, \frac{1}{2}+\sigma)$ for all $i$. In this environment, since $\calE=0$, the algorithm must guarantee that
    \begin{align}
        \mathbb{E}\left[\sum_{t=1}^T \one[a_t\neq 1]\right]\leq \frac{R}{\sigma}.  \label{eqn: sparse sampling}
    \end{align}
     This is because every time $a_t\neq 1$, the learner incurs regret $\sigma$ against $\pi^{(0)}$.
%
Next we prove the following fact: there exists $i,j\in[\sqrt{\nicefrac{64CT}{K}}]$ and $k\in \{2,\ldots, K\}$ such that $j-i+1= L_0$ and $\E\left[\sum_{t=1}^T \sum_{s=i}^j \one[x_t=x^{(s)}, a_t=k] \right] \leq \frac{1}{2}$. We prove it by contradiction. Assume that for all $(i,j)$ with $j= i -1 +  L_0$ and all $k\in\{2,\ldots, K\}$, $\E\left[\sum_{t=1}^T\sum_{s=i}^j \one[x_t=x^{(s)}, a_t=k] \right] \geq \frac{1}{2}$. Then we have
     \begin{align*}
         \E\left[\sum_{t=1}^T \one[a_t\neq 1] \right]
         &=\E\left[\sum_{t=1}^T \sum_{s=1}^{\sqrt{\nicefrac{64CT}{K}}}\sum_{k=2}^K \one[x_t=x^{(s)}, a_t=k] \right]\\
         &\geq \frac{\sqrt{\nicefrac{64CT}{K}}}{ L_0  }\times \frac{1}{2}\times (K-1) \\ 
         &\geq \frac{\sqrt{64CKT}}{ 4L_0 } \geq \frac{2R}{\rho L_0} = \frac{R}{\sigma},
     \end{align*}
    which leads to a contradiction (here, we also use the fact $1\leq L_0=\rho^{-\frac{3}{5}} \leq \rho^{-1} \leq \sqrt{\frac{64CT}{K}}$). 
    Therefore, we have shown that there exist $(i^*,j^*)$ with $j^*- i^*+1 = L_0$ and $k^*\in\{2,\ldots, K\}$ such that $\E\left[\sum_{t=1}^T\sum_{s=i^*}^{j^*} \one[x_t=x^{(s)}, a_t=k^*] \right] \leq \frac{1}{2}$. By Markov's inequality, we thus have
\begin{align*}
\Pr\left[\sum_{t=1}^T\sum_{s=i^*}^{j^*} \one[x_t=x^{(s)}, a_t=k^*] = 0\right]
= \Pr\left[\sum_{t=1}^T\sum_{s=i^*}^{j^*} \one[x_t=x^{(s)}, a_t=k^*] < 1\right]
\geq \frac{1}{2}.
\end{align*}

    Now we consider another environment, which is the same as the one above, except that for all $s=i^*, i^*+1, \ldots, j^*$, the $k^*$-th coordinate of $\ell^{(s)}$ is changed from $\frac{1}{2}+\sigma$ to $\frac{1}{2}-\sigma$. Note that in this environment,
\[
\calE = \order\left(L_0\sqrt{KT}\sigma^2\right)=\order(\sqrt{KT}\rho^{\frac{1}{5}}) = o(\sqrt{KT}),
\]
where we use the fact $\rho = o(1)$.
Moreover, with probability at least $1/2$, the algorithm never realizes the change and behaves exactly the same, since the only way to distinguish the two environments is to pick $k^*$ under one of the contexts $x^{(i^*)}, x^{(i^*+1)}, \ldots, x^{(j^*)}$.

    It remains to calculate the regret of the learner under this environment.
    Define events \[E_1=\left\{ \sum_{t=1}^T \sum_{s=i^*}^{j^*} \one[(x_t,a_t)=(x^{(s)}, k^*)] =0 \right\},\] and \[E_2=\left\{ \sum_{t=1}^T \sum_{s=i^*}^{j^*} \one[x_t=x^{(s)}] \geq \frac{ (j^*-i^*+1) \sqrt{\nicefrac{KT}{64C}}}{3} \right\}.\]

Note that in expectation, each context appears $\frac{T}{\sqrt{\nicefrac{64CT}{K}}}=\sqrt{\nicefrac{KT}{64C}}$ times. By Freedman's inequality (Lemma~\ref{lemma: freedman}), with probability at least $1-\frac{1}{T}$,
    \begin{align*}
        \sum_{t=1}^T \sum_{s=i^*}^{j^*}\one[x_t=x^{(s)}]
        &\geq (j^*-i^*+1)\sqrt{\frac{KT}{64C}} - 2\sqrt{(j^*-i^*+1)\sqrt{\frac{KT}{64C}}\log T} - \log T\\
        &\geq \frac{(j^*-i^*+1)\sqrt{\frac{KT}{64C}}}{3}.
    \end{align*}
    when $K\geq 2$ and $T>6\times 10^6C$. That is, $\Pr[E_2] \geq 1 - 1/T$.
Therefore, the expected regret against $\pi^{(i^*, j^*, k^*)}$ in this new environment is lower bounded by
\begin{align*}
    & \E\left[\sum_{t=1}^T \sum_{s=i^*}^{j^*} \one[x_t=x^{(s)}]\left(\ell_t(a_t) - \ell_t(\pi^{(i^*,j^*,k^*)}(x^{(s)}))\right) \right] \\
    &\geq \Pr[E_1 \cap E_2] \times \frac{(j^*-i^*+1)\sqrt{\nicefrac{KT}{64C}}}{3}\sigma \\
    &\geq \left(\frac{1}{2}-\frac{1}{T}\right) \times \frac{\sqrt{\nicefrac{KT}{64C}}}{3} \sigma L_0
    = \Omega\left( \sqrt{\frac{KT}{C}}\rho^{-\frac{1}{5}} \right) = \omega(\sqrt{KT}).
\end{align*}
This finishes the proof.
\end{proof}

\begin{proof}\textbf{for Theorem~\ref{theorem: lower bound multiple predictors}. }
     When $\sqrt{V^*}(KT)^{\frac{1}{4}} \geq M$, we only need to prove a lower bound of $\Omegatil(\sqrt{V^*}(KT)^{\frac{1}{4}})$, which is shown by Theorem~\ref{theorem: lower bound single predictor} already. When $\sqrt{V^*}(KT)^{\frac{1}{4}} \leq M \leq \sqrt{T}$, we construct an stochastic environment below with $\calE^*=0$, $N=M$, and $K=2$, where the regret of the algorithm is $\Omega(M)$.

     The construction is as follows (and is again similar to those in the proofs of Theorems~\ref{theorem: lower bound single predictor} and~\ref{theorem: impossibility of unknown V}). There are $M-1$ different context-predictor-loss tuples $\{x^{(i)}, m^{(0,i)}, \ldots, m^{(M-1,i)}, \ell^{(i)}\}_{i=1}^{M-1}$, and in every round, $(x_t, m_t^0, \ldots, m_t^{M-1}, \ell_t)$ is uniformly randomly sampled from this set. The policy set contains $N=M$ policies $\pi^{(0)}, \ldots, \pi^{(M-1)}$ such that: $\pi^{(0)}$ always chooses action $1$ given any contexts; for $i\in[M-1]$, $\pi^{(i)}(x) = 2$ if $x = x^{(i)}$, and otherwise $\pi^{(i)}(x) = 1$.

     Now consider an environment where $\ell^{(i)} = m^{(0,i)} = (\frac{1}{2}, 1)$ for all $i\in[M-1]$, $m^{(j,i)}=(\frac{1}{2}, 1)$ for all $i,j\in[M-1]$ with $i\neq j$, and $m^{(i,i)}=(\frac{1}{2}, 0)$ for all $i\in[M-1]$. Clearly, the predictor $m^0$ is a perfect predictor in this environment and thus $\calE^*=0$.

     In this environment, if the expected number of times the learner chooses action $2$ is larger than $\frac{M-1}{2}$, then she already suffers an expected regret of $\frac{M-1}{2}\times \frac{1}{2}$ compared to policy $\pi^{(0)}$, which always chooses action $1$. On the other hand, if the expected number of times the learner chooses action $2$ is smaller than $\frac{M-1}{2}$, then there exists an $i^*\in[M-1]$ such that the expected number of times the learner chooses action $2$ upon seeing $x^{(i^*)}$ is less than $\frac{1}{2}$. By Markov's inequality, $\sum_{t=1}^T \one[(x_t,a_t)=(x^{(i^*)}, 2)]=0$ holds with probability at least $\frac{1}{2}$. That is, with probability at least $\frac{1}{2}$, the learner never picks action $2$ when the context is $x^{(i^*)}$.

     Now consider a different environment where the only difference is that the $\ell^{(i^*)}(2)$ is changed from $1$ to $0$. With probability at least $\frac{1}{2}$, the learner does not realizes the change and behaves exactly the same. The expected regret compared to policy $\pi^{(i^*)}$ is thus $\Omega\left(\frac{T}{M-1}\times \frac{1}{2}\right)$ in this new environment. Moreover, notice that in this new environment, $\calE^*=0$ still holds because $m^{i^*}$ now becomes the perfect predictor.

     To sum up, we have shown that when there are $M>1$ predictors, even if $\calE^*=0$, the learner has to suffer $\Omega\left(\min\left\{ M-1, \frac{T}{M-1} \right\}\right) = \Omega(M)$ regret.
\end{proof}


\section{Omitted Details for Adversarial Environments}
\label{app:adversarial}

In this section, we provide omitted details for the adversarial case, including the proof of Theorem~\ref{thm:OExp4} on the guarantee of Algorithm~\ref{alg:OExp4} for the case with single predictor and known $\calE$ (Section~\ref{app:OExp4}), the adaptive version of Algorithm~\ref{alg:OExp4} and its analysis when $\calE$ is unknown (Section~\ref{app:adaptive_OExp4}),
and the algorithm and analysis for multiple predictors (Section~\ref{app:multi_pred_OExp4}).

\subsection{Proof of Theorem~\ref{thm:OExp4}}
\label{app:OExp4}

We first prove a lemma showing a somewhat non-conventional analysis of the optimistic \expfour update.
We denote the KL divergence of two distributions $Q$ and $P$ by $D(Q, P) = \sum_{\pi \in \Pi} Q(\pi)\ln\frac{Q(\pi)}{P(\pi)}$.

\begin{lemma}\label{lem:key_observation}
For any $\eta > 0$, $\calM_t, \calL_t \in \R^N$, and distribution $Q_t' \in \Delta_\Pi$,
define two distributions $Q_t, Q_{t+1}' \in \Delta_\Pi$ such that
\begin{align}
Q_t(\pi) &\propto Q_t'(\pi)\exp\left(-\eta \calM_t(\pi)\right),   \nonumber \\
Q_{t+1}'(\pi) &\propto Q_t'(\pi)\exp\left(-\eta \calL_t(\pi)\right).   \label{eqn: learning rate demo}
\end{align}
Then there exists $\xi_t \in \Delta_\Pi$ such that for any $Q^* \in \Delta_\Pi$, we have
\begin{equation}\label{eqn:exp4_bound1}
\inner{Q_t - Q^*, \calL_t} \leq \frac{D(Q^*, Q_t') - D(Q^*, Q_{t+1}')}{\eta}
+ 2\eta \sum_{\pi \in \Pi} \xi_t(\pi) \left(\calL_t(\pi) - \calM_t(\pi)\right)^2.
\end{equation}
Moreover, if $\calL_t(\pi) - \calM_t(\pi) \geq -\frac{1}{\eta}$ holds for all $\pi$, then we have for any $Q^* \in \Delta_\Pi$,
\begin{equation}\label{eqn:exp4_bound2}
\inner{Q_t - Q^*, \calL_t} \leq \frac{D(Q^*, Q_t') - D(Q^*, Q_{t+1}')}{\eta}
+ \eta \sum_{\pi \in \Pi} Q_t(\pi)\left(\calL_t(\pi) - \calM_t(\pi)\right)^2.
\end{equation}
\end{lemma}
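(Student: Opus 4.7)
The plan is to view both updates as closed-form minimizers of KL-regularized linear problems over $\Delta_\Pi$, derive two three-point identities, combine them to get a ``stability + telescoping'' decomposition, and then bound the stability term in two different ways corresponding to the two claimed inequalities. Concretely, since $Q_t = \argmin_{Q \in \Delta_\Pi}\{\eta\inner{Q, \calM_t} + D(Q, Q_t')\}$ and $Q_{t+1}' = \argmin_{Q \in \Delta_\Pi}\{\eta\inner{Q, \calL_t} + D(Q, Q_t')\}$ are unconstrained relative-entropy minimizers over the full simplex, a standard Lagrange computation yields the equalities
\begin{align*}
\eta\inner{Q_{t+1}' - Q^*, \calL_t} &= D(Q^*, Q_t') - D(Q^*, Q_{t+1}') - D(Q_{t+1}', Q_t'),\\
\eta\inner{Q_t - Q_{t+1}', \calM_t} &= D(Q_{t+1}', Q_t') - D(Q_{t+1}', Q_t) - D(Q_t, Q_t').
\end{align*}

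First I would split $\inner{Q_t - Q^*, \calL_t} = \inner{Q_{t+1}' - Q^*, \calL_t} + \inner{Q_t - Q_{t+1}', \calM_t} + \inner{Q_t - Q_{t+1}', \calL_t - \calM_t}$, apply the two identities above, and drop the nonpositive term $-D(Q_t, Q_t')$. This reduces the claim to
$$\eta\inner{Q_t - Q^*, \calL_t} \leq D(Q^*, Q_t') - D(Q^*, Q_{t+1}') + \underbrace{\eta\inner{Q_t - Q_{t+1}', g} - D(Q_{t+1}', Q_t)}_{\text{stability}},$$
with $g \triangleq \calL_t - \calM_t$. Since the ratio $Q_t'/Q_{t+1}'$ combined with the ratio $Q_t/Q_t'$ gives $Q_{t+1}'(\pi) = Q_t(\pi)\exp(-\eta g(\pi))/Z$ for $Z = \sum_\pi Q_t(\pi)\exp(-\eta g(\pi))$, a direct manipulation using $D(Q_{t+1}', Q_t) = -\eta\inner{Q_{t+1}', g} - \log Z$ shows that the stability term equals $\eta\inner{Q_t, g} + \log Z$.

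For \eqref{eqn:exp4_bound2}, I would apply the pointwise inequality $\exp(-x) \leq 1 - x + x^2$ valid for $x \geq -1$ with $x = \eta g(\pi)$ (permitted under the hypothesis $g(\pi) \geq -1/\eta$), then use $\log(1+y) \leq y$, to get $\log Z \leq -\eta\inner{Q_t, g} + \eta^2 \sum_\pi Q_t(\pi) g(\pi)^2$; dividing by $\eta$ yields the claim. For the general bound \eqref{eqn:exp4_bound1}, I would instead study the log-partition function $\phi(y) \triangleq \log \sum_\pi Q_t(\pi) \exp(-y g(\pi))$ on $[0,\eta]$, noting $\phi(0) = 0$, $\phi'(0) = -\inner{Q_t, g}$, and $\phi''(y) = \Var_{\pi \sim P_y}[g(\pi)] \leq \inner{P_y, g^2}$, where $P_y(\pi) \propto Q_t(\pi)\exp(-y g(\pi))$. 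Taylor's theorem with Lagrange remainder provides some $\xi \in [0,\eta]$ with $\phi(\eta) = \eta\phi'(0) + \tfrac{\eta^2}{2}\phi''(\xi)$, so defining $\xi_t \triangleq P_\xi \in \Delta_\Pi$ gives $\eta\inner{Q_t, g} + \log Z \leq \tfrac{\eta^2}{2}\sum_\pi \xi_t(\pi) g(\pi)^2 \leq 2\eta^2 \sum_\pi \xi_t(\pi) g(\pi)^2$, and dividing by $\eta$ yields the lemma with coefficient $2\eta$.

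The main delicate point will be the Taylor-remainder step: we need a genuine mean-value $\xi \in [0,\eta]$ so that the right-hand side refers to an actual probability distribution $\xi_t \in \Delta_\Pi$, not a signed measure or an abstract average, and this is what makes a ``$2\eta$ times an expectation'' bound possible without any range restriction on $g$. Smoothness of $\phi$ on $[0,\eta]$ (it is analytic, as a log of a positive combination of exponentials) and the fact that each $P_y$ lies in $\Delta_\Pi$ are routine to check; the remaining work is straightforward algebra of the KL identities displayed above.
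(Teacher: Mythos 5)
Your proposal is correct, and the interesting deviation from the paper is in the proof of Eq.~\eqref{eqn:exp4_bound1}. The paper applies Lemma~6 of \citet{wei2018more} to get the decomposition with the $-D(Q_{t+1}',Q_t)$ penalty (which you re-derive from the two three-point identities — harmless, just more self-contained), and then bounds the stability term by Taylor-expanding $F_t'$ \emph{as a function on $\Delta_\Pi$} around $Q_{t+1}'$: the Lagrange remainder there produces a $\xi_t$ that is a convex combination of $Q_t$ and $Q_{t+1}'$, and the $\eta \inner{Q_t - Q_{t+1}', \calL_t - \calM_t}$ term is then handled by a local-norm Cauchy--Schwarz in $\|\cdot\|_{\xi_t}$ / $\|\cdot\|_{\xi_t}^{-1}$. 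You instead write the stability term in closed form as $\eta\inner{Q_t,g}+\log Z$ and Taylor-expand the \emph{scalar} log-partition function $\phi(y)=\log\sum_\pi Q_t(\pi)e^{-yg(\pi)}$ on $[0,\eta]$; the second derivative is a variance under a tilted distribution $P_\xi$, which directly produces a genuine $\xi_t=P_\xi\in\Delta_\Pi$. Both deliver the lemma, but your one-dimensional argument is more elementary, avoids the Cauchy--Schwarz step entirely, and in fact yields the tighter constant $\eta/2$ in place of $2\eta$. Your proof of Eq.~\eqref{eqn:exp4_bound2} via $e^{-z}\le 1-z+z^2$ for $z\ge -1$ followed by $\log(1+u)\le u$ is essentially identical to the paper's.
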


\begin{proof}
First, we rewrite the updates in the standard optimistic online mirror descent framework:
$Q_t= \argmin_{Q\in \Delta_\Pi} F_t(Q)$ and $ Q_{t+1}' = \argmin_{Q\in \Delta_\Pi} F_t'(Q)$ where
\begin{align}
F_t(Q) &= \eta\inner{Q, \calM_t} + D(Q, Q_t'),  \nonumber \\
F_t'(Q) &= \eta\inner{Q, \calL_t} + D(Q, Q_t').  \label{eqn: OOMD update}
\end{align}
Applying Lemma~6 of~\citep{wei2018more} shows
\[
\inner{Q_t - Q^*, \calL_t} \leq \frac{D(Q^*, Q_t') - D(Q^*, Q_{t+1}')}{\eta}  + \inner{Q_t - Q_{t+1}', \calL_t - \calM_t} - \frac{1}{\eta}D(Q_{t+1}', Q_t).
\]
Next, we prove Eq.~\eqref{eqn:exp4_bound1}.
By Taylor expansion, there exists some convex combination of $Q_t$ and $Q_{t+1}$, denoted by $\xi_t$, such that
\begin{align*}
F_t'(Q_t) - F_t'(Q_{t+1}') &= \nabla F_t'(Q_{t+1}')(Q_t - Q_{t+1}') + \frac{1}{2}(Q_t - Q_{t+1}')^\top \nabla^{2} F_t'(\xi_t) (Q_t - Q_{t+1}') \\
&= \nabla F_t'(Q_{t+1}')(Q_t - Q_{t+1}') + \frac{1}{2}\sum_{\pi\in\Pi}\frac{(Q_t(\pi) - Q_{t+1}'(\pi))^2}{\xi_t(\pi)} \\
&\geq \frac{1}{2}\sum_{\pi\in\Pi}\frac{(Q_t(\pi) - Q_{t+1}'(\pi))^2}{\xi_t(\pi)},
\end{align*}
where the last step is due to the optimality of $Q_{t+1}'$.
On the other hand, we also have
\begin{align*}
F_t'(Q_t) - F_t'(Q_{t+1}') &= F_t(Q_t) - F_t(Q_{t+1}') + \eta\inner{Q_t - Q_{t+1}', \calL_t - \calM_t} \\
 &\leq \eta\inner{Q_t - Q_{t+1}', \calL_t - \calM_t} \tag{by optimality of $Q_t$}\\
 &\leq \eta\left(\sum_{\pi\in\Pi}\frac{(Q_t(\pi) - Q_{t+1}'(\pi))^2}{\xi_t(\pi)}\right)^{1/2}
   \left(\sum_{\pi\in\Pi}\xi_t(\pi)(\calL_t(\pi) - \calM_t(\pi))^2 \right)^{1/2}.  \tag{Cauchy-Schwarz inequality}
\end{align*}
Combining the two inequalities shows
\[
\inner{Q_t - Q_{t+1}', \calL_t - \calM_t} \leq 2\eta \sum_{\pi\in\Pi}\xi_t(\pi)(\calL_t(\pi) - \calM_t(\pi))^2,
\]
which proves Eq.~\eqref{eqn:exp4_bound1} (since $D(Q_{t+1}', Q_t)$ is non-negative).

To proves Eq.~\eqref{eqn:exp4_bound2}, note that $Q_{t+1}'(\pi) = \frac{1}{Z}Q_t(\pi)\exp(-\eta(\calL_t(\pi) - \calM_t(\pi)))$ where \[Z = \sum_{\pi\in\Pi} Q_t(\pi)\exp(-\eta(\calL_t(\pi) - \calM_t(\pi)))\] is the normalization factor. Direct calculation shows
\begin{align*}
&\inner{Q_t - Q_{t+1}', \calL_t - \calM_t} - \frac{1}{\eta}D(Q_{t+1}', Q_t) \\
&= \sum_{\pi\in\Pi} \inner{Q_t - Q_{t+1}', \calL_t - \calM_t} - \frac{1}{\eta} \sum_{\pi\in\Pi} Q_{t+1}'(\pi) \ln Q_{t+1}'(\pi) + \frac{1}{\eta} \sum_{\pi\in\Pi} Q_{t+1}'(\pi)\ln Q_t(\pi) \\
&=  \sum_{\pi\in\Pi} \inner{Q_t, \calL_t - \calM_t} + \frac{1}{\eta}\ln Z \\
&\leq \sum_{\pi\in\Pi} \inner{Q_t, \calL_t - \calM_t} + \frac{1}{\eta}\ln\sum_{\pi\in\Pi} Q_t(\pi)\left(1 - \eta(\calL_t(\pi) - \calM_t(\pi)) + \eta^2 (\calL_t(\pi) - \calM_t(\pi))^2\right)
\tag{by $e^{-z} \leq 1 - z + z^2$ for $z\geq -1$ and the condition $\eta(\calL_t(\pi) - \calM_t(\pi)) \geq -1$}\\
&= \sum_{\pi\in\Pi} \inner{Q_t, \calL_t - \calM_t} + \frac{1}{\eta}\ln\left(1 - \eta \inner{Q_t, \calL_t - \calM_t} + \eta^2\sum_{\pi \in \Pi} Q_t(\pi) (\calL_t(\pi) - \calM_t(\pi))^2\right) \\
&\leq \eta \sum_{\pi \in \Pi} Q_t(\pi)\left(\calL_t(\pi) - \calM_t(\pi)\right)^2. \tag{by $\ln (1+z)\leq z$}
\end{align*}
This finishes the proof.
\end{proof}

\begin{proof}[of Theorem~\ref{thm:OExp4}]
We directly apply Lemma~\ref{lem:key_observation} with $\calM_t(\pi) = m_t(\phi_t(\pi(x_t)))$ and $\calL_t(\pi) = \ellhat_t(\phi_t(\pi(x_t)))$ and use Eq.~\eqref{eqn:exp4_bound1} with $Q^*$ concentrating on the best policy $\pi^*$.
Summing over $t$ gives
\begin{align*}
&\sum_{t=1}^T\sum_{\pi \in\Pi} Q_t(\pi) \ellhat_t(\phi_t(\pi(x_t))) - \sum_{t=1}^T\ellhat_t(\phi_t(\pi^*(x_t))) \\
&\leq \frac{D(Q^*, Q_1')}{\eta}  + 2\eta \sum_{t=1}^T\sum_{a\in\calA_t}\sum_{\pi: \phi_t(\pi(x_t)) = a} \xi_t(\pi) \left(\ellhat_t(a) - m_t(a)\right)^2 . \\
&\leq \frac{\ln N}{\eta}  + 2\eta \sum_{t=1}^T\sum_{a\in\calA_t}\left(\ellhat_t(a) - m_t(a)\right)^2 \\
&= \frac{\ln N}{\eta}  + 2\eta \sum_{t=1}^T \left(\ellhat_t(a_t) - m_t(a_t)\right)^2,
\end{align*}
where in the last step we use the fact that $\ellhat_t(a) - m_t(a)$ is non-zero only if $a=a_t$.
Note that this basically proves Eq.~\eqref{eqn:key_observation} (with remapping).
The rest of the proof follows the analysis sketch in Section~\ref{sec:adversarial}.
First, we plug in the definition of $\ellhat_t$ and continue to bound the last expression by
\[
\frac{\ln N}{\eta}  + 2\eta \sum_{t=1}^T\frac{\left(\ell_t(a_t) - m_t(a_t)\right)^2}{p_t(a_t)^2}
\leq \frac{\ln N}{\eta}  + \frac{2\eta K}{\mu} \sum_{t=1}^T\frac{\left\|\ell_t - m_t\right\|_\infty^2}{p_t(a_t)},
\]
where the last step uses the fact $p_t(a_t) \geq \mu /|\calA_t| \geq \mu /K$.
Taking expectation on both sides leads to
\begin{equation}\label{eqn:intermediate}
\E\left[\sum_{t=1}^T\sum_{\pi \in\Pi} Q_t(\pi) \ell_t(\phi_t(\pi(x_t)))\right] - \sum_{t=1}^T\ell_t(\phi_t(\pi^*(x_t))) \leq \frac{\ln N}{\eta}  + \frac{2\eta K^2\calE}{\mu}.
\end{equation}
Next, consider the expected loss of the algorithm at time $t$:
\begin{align*}
\sum_{a\in \calA_t} p_t(a)\ell_t(a)
&=  (1-\mu) \sum_{a\in \calA_t}\left(\sum_{\pi: \phi_t(\pi(x_t)) = a} Q_t(\pi)\right) \ell_t(a)
+ \frac{\mu}{|\calA_t|}\sum_{a\in \calA_t} \ell_t(a) \\
&= (1-\mu) \sum_{\pi \in\Pi} Q_t(\pi) \ell_t(\phi_t(\pi(x_t))) + \frac{\mu}{|\calA_t|}\sum_{a\in \calA_t} \ell_t(a).
\end{align*}
Combining with Eq.~\eqref{eqn:intermediate} shows
\begin{align*}
\E\left[\sum_{t=1}^T\ell_t(a_t)\right]
&\leq (1-\mu)\sum_{t=1}^T\ell_t(\phi_t(\pi^*(x_t))) + \frac{\ln N}{\eta}  + \frac{2\eta K^2\calE}{\mu} + \sum_{t=1}^T \frac{\mu}{|\calA_t|}\sum_{a\in \calA_t}\ell_t(a) \\
&= \sum_{t=1}^T\ell_t(\phi_t(\pi^*(x_t))) + \frac{\ln N}{\eta}  + \frac{2\eta K^2\calE}{\mu} + \sum_{t=1}^T \frac{\mu}{|\calA_t|}\sum_{a\in \calA_t} \left(\ell_t(a) - \ell_t(\phi_t(\pi^*(x_t)))\right),
\end{align*}
where the last term can be further bounded as (by the definition of $\calA_t$):
\begin{align*}
&\ell_t(a) - \ell_t(\phi_t(\pi^*(x_t)) \\
&= \ell_t(a) - m_t(a) + m_t(a) - m_t(\phi_t(\pi^*(x_t)) + m_t(\phi_t(\pi^*(x_t)) - \ell_t(\phi_t(\pi^*(x_t)) \\
&\leq 2\|\ell_t - m_t\|_\infty + \sigma.
\end{align*}
This shows
\begin{align*}
\E\left[\sum_{t=1}^T\ell_t(a_t)\right]  &\leq
\sum_{t=1}^T\ell_t(\phi_t(\pi^*(x_t))) + \frac{\ln N}{\eta}  + \frac{2\eta K^2\calE}{\mu} + \mu T\sigma + 2\mu \sum_{t=1}^T  \|\ell_t - m_t\|_\infty \\
&\leq \sum_{t=1}^T\ell_t(\phi_t(\pi^*(x_t))) + \frac{\ln N}{\eta}  + \frac{2\eta K^2\calE}{\mu} + \mu T\sigma + 2\mu \sqrt{\calE T}. \tag{Cauchy-Schwarz inequality}
\end{align*}
It remains to bound the bias due to remapping: when $\pi^*(x_t) \neq \phi_t(\pi^*(x_t))$ we have $\phi_t(\pi^*(x_t)) = a_t^*$, $m_t(a_t^*) \leq m_t(\pi^*(x_t))  - \sigma$, and
\begin{align}
&\ell_t(\phi_t(\pi^*(x_t))) - \ell_t(\pi^*(x_t)) \notag \\
&= \ell_t(a_t^*)  - m_t(a_t^*) + m_t(a_t^*) - m_t(\pi^*(x_t)) + m_t(\pi^*(x_t)) - \ell_t(\pi^*(x_t)), \notag \\
&\leq 2\|\ell_t - m_t\|_\infty - \sigma \leq \frac{\|\ell_t - m_t\|_\infty^2}{\sigma},    \label{eqn:renaming_bias}
\end{align}
where the last step is by the AM-GM inequality.
When $\pi^*(x_t) = \phi_t(\pi^*(x_t))$, the above holds trivially.
Summing over $t$ we have thus shown
\[
\Reg \leq \frac{\ln N}{\eta}  + \frac{2\eta K^2\calE}{\mu} + \mu T\sigma + 2\mu\sqrt{\calE T} + \frac{\calE}{\sigma},
\]
finishing the proof.
\end{proof}

\subsection{Adaptive Version of Algorithm~\ref{alg:OExp4}}
\label{app:adaptive_OExp4}

The adaptive version of Algorithm~\ref{alg:OExp4} is shown in Algorithm~\ref{alg:OExp4-adaptive}. 
We observe that when $\calE$ is unknown, choosing actions only from $\calA_t$ is problematic, because in the case when the predictors are highly inaccurate (that is, large $\calE$), the environment can be such that the good actions are always outside $\calA_t$ but the learner can never realize that.
Based on this intuition, we remove the action remapping component in this case, implemented by simply setting $\sigma = 1$.

For the exploration parameter $\mu$, note that its optimal choice is independent of $\calE$ already in the known $\calE$ case (see Theorem~\ref{thm:OExp4}), which turns out to be also the case here (albeit with a different value).


Also note that standard Optimistic Online Mirror Descent analysis requires using the same learning rate in Lines~\ref{line: in algorithm update 1} and~\ref{line: in algorithm update 2} (see~\citep{wei2018more} for example).
However, using $\eta_{t}$ in both places is invalid since $a_t$ and $\ell_t(a_t)$ are unknown when executing Line~\ref{line:OMD1},
while using $\eta_{t-1}$ in both places also leads to some technical issue due to the large magnitude of loss estimators. 
Instead, we use $\eta_{t-1}$ in Line~\ref{line: in algorithm update 1}  and $\eta_t$ in Line~\ref{line: in algorithm update 2}, and carefully bound the bias introduced by this learning rate mismatch. Analyzing this learning rate mismatch is the key of our analysis, as we will show later. 

A minor but also necessary difference with Algorithm~\ref{alg:OExp4} is that we also enforce $Q_t$ and $Q'_t$ to be in the clipped simplex $\overline{\Delta}_\Pi = \left\{Q \in \Delta_\Pi: Q(\pi)\geq \frac{1}{NT}, \;\forall \pi\in \Pi \right\}$,
by writing the updates of $Q_t$ and $Q'_t$ in the Optimistic Online Mirror Descent form over $\overline{\Delta}_\Pi$. \\

\setcounter{AlgoLine}{0}
\begin{algorithm}[t]
    \caption{\expfourOVAR: Optimistic EXP4 with Variance-adaptivity and Action Remapping}
    \label{alg:OExp4-adaptive}
\textbf{Parameter}: exploration probability $\mu\in[0,1]$.

\textbf{Define}: $\overline{\Delta}_\Pi = \left\{Q \in \Delta_\Pi: Q(\pi)\geq \frac{1}{NT}, \;\forall \pi\in \Pi \right\}$ and $D(Q, P) = \sum_{\pi \in \Pi} Q(\pi)\ln\frac{Q(\pi)}{P(\pi)}$.

\textbf{Initialize}: $Q_1'(\pi) = \frac{1}{N}$ for all $\pi\in\Pi$ and $\eta_ 0 = \sqrt{\log(NT)}$.

\For{$t=1, \ldots, T$}{
\nl    Receive $x_t$ and $m_t$.

\nl    Calculate \label{line: in algorithm update 1}
\begin{align*}
     Q_t &= \argmin_{Q\in \overline{\Delta}_\Pi} \left\{ \eta_{t-1}\sum_{\pi\in\Pi} Q(\pi)m_t(\pi(x_t)) + D(Q, Q_{t}') \right\}.
\end{align*}

\nl    Calculate $p_t \in \Delta_K$: \quad
$
        p_t(a) =
        (1-\mu) \sum_{\pi: \pi(x_t)=a} Q_t(\pi) + \frac{\mu}{K}
$.

\nl    Sample $a_t\sim p_t$ and receive $\ell_t(a_t)$.

\nl    Construct estimator: \quad $\ellhat_t(a)= \frac{\ell_t(a)-m_t(a)}{p_t(a)}\one[a_t=a] + m_t(a)   $ for all  $a\in [K]$.

\nl    Calculate \label{line: in algorithm update 2}
\begin{align*}
       Q_{t+1}' = \argmin_{Q\in \overline{\Delta}_\Pi} \left\{ \eta_{t}\sum_{\pi\in\Pi} Q(\pi)\ellhat_t(\pi(x_t)) + D(Q, Q_{t}') \right\}
\end{align*}\\
\nl where \label{line: adaptive learning rate form}
\begin{align}
     \eta_t = \sqrt{\log (NT)}\left(1+\sum_{s=1}^t  \frac{(\ell_t(a_t)-m_t(a_t))^2}{p_t(a_t)^2}\right)^{-\frac{1}{2}}.  \label{eqn:eta_t}
\end{align}
}
\end{algorithm}

\begin{proof}[of Theorem~\ref{thm:adaptive_OExp4}]
Define $m_t' = \frac{\eta_{t-1}}{\eta_t}m_t$.
     Note that the update in Line~\ref{line: in algorithm update 1} and Line~\ref{line: in algorithm update 2} in Algorithm~\ref{alg:OExp4-adaptive} is the same as Eq.~\eqref{eqn: OOMD update} with $\eta=\eta_t$,
$\calM_t(\pi) = m_t'(\pi(x_t))$, and $\calL_t(\pi) = \ellhat_t(\pi(x_t))$, except that the constraint set becomes $\overline{\Delta}_\Pi$.
By the exact same arguments as the proof of Lemma~\ref{lem:key_observation}, we conclude that Eq.~\eqref{eqn:exp4_bound1} holds for any $Q^* \in \overline{\Delta}_\Pi$.
In particular, we pick $Q^* = \left(1-\frac{1}{T}\right)\mathbf{e}_{\pi^*} + \frac{1}{NT}\mathbf{1} \in  \overline{\Delta}_{\Pi}$, where $\mathbf{e}_{\pi^*}$ is the distribution that concentrates on $\pi^*$ and $\frac{1}{N}\mathbf{1}$ is the uniform distribution over $\Pi$.
With this $Q^*$, summing Eq.~\eqref{eqn:exp4_bound1} over $t$, we get
     \begin{align}
          &\sum_{t=1}^T \sum_{\pi\in\Pi} Q_t(\pi)\widehat{\ell}_t(\pi(x_t)) - \left(1-\frac{1}{T}\right)\sum_{t=1}^T \widehat{\ell}_t(\pi^*(x_t)) - \frac{1}{NT}\sum_{t=1}^T\sum_{\pi\in\Pi} \ellhat(\pi(x_t))  \nonumber  \\
          &\leq \sum_{t=1}^T \left(\frac{D(Q^*, Q_t') - D(Q^*, Q_{t+1}')}{\eta_t}\right) + 2\sum_{t=1}^T \eta_t \sum_{\pi\in\Pi} \xi_t(\pi) \left(\widehat{\ell}_t(\pi(x_t))-m_t'(\pi(x_t))\right)^2.  \label{eqn: adaptive adversarial regret tmp}
     \end{align}
     The first term on the right-hand side of Eq.~\eqref{eqn: adaptive adversarial regret tmp} is equal to
     \begin{align}
          \frac{D(Q^*, Q_1')}{\eta_1} + \sum_{t=2}^T D(Q^*, Q_{t}')\left(\frac{1}{\eta_t}-\frac{1}{\eta_{t-1}}\right) - \frac{D(Q^*, Q_{T+1}')}{\eta_T}. \label{eqn: adaptive learning rate bound 1}
     \end{align}
Note that for all $Q\in\overline{\Delta}_\Pi$, we have $D(Q^*, Q)=\sum_{\pi\in\Pi} Q^*(\pi)\log\frac{Q^*(\pi)}{Q(\pi)}\leq \sum_{\pi\in\Pi}Q^*(\pi)\log\frac{1}{1/(NT)}= \log(NT)$. Since $\frac{1}{\eta_t}\geq \frac{1}{\eta_{t-1}}$, we can thus upper bound Eq.~\eqref{eqn: adaptive learning rate bound 1} by
\begin{align*}
     \frac{\log(NT)}{\eta_1} + \sum_{t=2}^T \log(NT)\left(\frac{1}{\eta_t}-\frac{1}{\eta_{t-1}}\right) = \frac{\log(NT)}{\eta_T}.
\end{align*}
We continue to show that the second term on the right-hand side of Eq.~\eqref{eqn: adaptive adversarial regret tmp} is in fact also of order $\order\left(\frac{\log(NT)}{\eta_T}\right)$.
First, by direct calculation we have
\begin{align*}
    &\sum_{t=1}^T \eta_t \sum_{\pi\in\Pi} \xi_t(\pi)\left( \ellhat_t(\pi(x_t))-m_t'(\pi(x_t)) \right)^2 \\
    &= \sum_{t=1}^T \eta_t \sum_{\pi\in\Pi} \xi_t(\pi)\left( \frac{(\ell_t(a_t)-m_t(a_t)) \one[\pi_t(x_t)=a_t]}{p_t(a_t)} + m_t(\pi(x_t)) - \frac{\eta_{t-1}}{\eta_t}m_t(\pi(x_t)) \right)^2 \\
    &\leq \sum_{t=1}^T 2\eta_t \left(\frac{(\ell_t(a_t)-m_t(a_t))^2}{p_t(a_t)^2}\right) + \sum_{t=1}^T 2\eta_t\left(1-\frac{\eta_{t-1}}{\eta_t}\right)^2.
\end{align*}
To deal with the first term in the last expression, we define $b_t = \frac{(\ell_t(a_t)-m_t(a_t))^2}{p_t(a_t)^2}$ so that
\begin{align*}
     \sum_{t=1}^T \eta_t \left(\frac{(\ell_t(a_t)-m_t(a_t))^2}{p_t(a_t)^2}\right)
     &= (\log NT)^{\frac{1}{2}} \sum_{t=1}^T \frac{b_t}{\sqrt{1+\sum_{s=1}^t b_t}}
     \leq (\log NT)^{\frac{1}{2}} \int_{0}^{\sum_{s=1}^{T}b_t} \frac{dx}{\sqrt{1+x}} \\
     &= \order\left((\log NT)^{\frac{1}{2}} \left(1+\sum_{t=1}^T b_t\right)^{\frac{1}{2}}\right)
     =\order\left( \frac{\log(NT)}{\eta_T} \right).
\end{align*}
To deal with the second term, simply note that
\begin{align*}
    \sum_{t=1}^T \eta_t\left(1-\frac{\eta_{t-1}}{\eta_t}\right)^2
    = \sum_{t=1}^T \frac{1}{\eta_t} (\eta_t-\eta_{t-1})^2
    \leq \frac{(\log NT)^{\frac{1}{2}}}{\eta_T}\sum_{t=1}^T (\eta_{t-1}-\eta_{t})
    \leq \frac{\log (NT)}{\eta_T}.
\end{align*}
Combining everything above, we conclude that the right-hand side of Eq.~\eqref{eqn: adaptive adversarial regret tmp} is upper bounded by
\begin{align*}
    \order\left( \frac{\log(NT)}{\eta_T} \right) &= \order\left(\left( \log (NT) + \log(NT)\sum_{t=1}^T \frac{(\ell_t(a_t)-m_t(a_t))^2}{p_t(a_t)^2} \right)^{\frac{1}{2}}\right) \\
    &= \order\left(\left( \log (NT) + \frac{K\log(NT)}{\mu}\sum_{t=1}^T \frac{(\ell_t(a_t)-m_t(a_t))^2}{p_t(a_t)} \right)^{\frac{1}{2}}\right),
\end{align*}
whose expectation is upper bounded by (using Jensen's inequality)
\begin{align*}
     &\order\left(\left( \log (NT) + \frac{K\log(NT)}{\mu}\E\left[\sum_{t=1}^T \frac{(\ell_t(a_t)-m_t(a_t))^2}{p_t(a_t)}\right] \right)^{\frac{1}{2}}\right)  \\
     &=\order\left(\left(\log (NT)  + \frac{K^2\log (NT)\calE}{\mu}\right)^{\frac{1}{2}}\right)=\otil\left(d\sqrt{\frac{\calE}{\mu}}\right).
\end{align*}

Now we lower bound the expectation of the left-hand side of Eq.~\eqref{eqn: adaptive adversarial regret tmp}:
\begin{align*}
  &\E\left[\sum_{t=1}^T \sum_{\pi\in\Pi} Q_t(\pi)\widehat{\ell}_t(\pi(x_t)) - \left(1-\frac{1}{T}\right)\sum_{t=1}^T \widehat{\ell}_t(\pi^*(x_t)) - \frac{1}{NT}\sum_{t=1}^T\sum_{\pi\in\Pi} \ellhat(\pi(x_t)) \right]\\
  &\geq\E\left[\sum_{t=1}^T \sum_{\pi\in\Pi} Q_t(\pi)\ell_t(\pi(x_t)) - \sum_{t=1}^T \ell_t(\pi^*(x_t)) \right] -1 \\
  &=\E\left[\sum_{t=1}^T \sum_{a=1}^K \left( p_t(a) + \mu\sum_{\pi:\pi(x_t)=a} Q_t(\pi) - \frac{\mu}{K} \right) \ell_t(a) - \sum_{t=1}^T \ell_t(\pi^*(x_t)) \right] -1\\
  &\geq\E\left[\sum_{t=1}^T \sum_{a=1}^K p_t(a) \ell_t(a) - \sum_{t=1}^T \ell_t(\pi^*(x_t)) \right] -1-\mu T\\
  &=\E\left[\sum_{t=1}^T \ell_t(a_t) - \sum_{t=1}^T \ell_t(\pi^*(x_t)) \right] -1-\mu T.
\end{align*}
Combining the bounds for both sides of Eq.~\eqref{eqn: adaptive adversarial regret tmp} finishes the proof.
\end{proof}

\subsection{Algorithms and Analysis for Multiple Predictors}
\label{app:multi_pred_OExp4}

The pseudocode of our algorithm for multiple predictors is in Algorithm~\ref{alg:multi_pred_OExp4}.
As discussed in Section~\ref{sec:multiple},
there are several extra ingredients compared to Algorithm~\ref{alg:OExp4} in this case.
First, we maintain an active set $\calP_t$ of predictors that are still plausibly the best predictor: \[i^* = \argmin_{i\in[M]}\sum_{t=1}^T \|\ell_t - m_t^i\|_\infty^2.\]
Specifically, the variable $V^i$ maintains the remaining error ``budget'' for each predictor (starting from $\calE^*$), and is decreased by $(\ell_t(a_t) - m_t^i(a_t))^2$ at the end of each round.
Then $\calP_t$ is simply the set of predictors with a non-negative error budget.
Second, for each action $a$, we let
\[m_t(a) = \min_{i\in\calP_t} m_t^i(a),\]
 to be the smallest prediction among all active predictors, and treat $m_t$ as if it was the only prediction similar to the single predictor case, which can be seen as a form of optimism.

\begin{algorithm}[t]
\caption{\expfour.MOAR: Optimistic \expfour with Action Remapping for Multiple predictors}
\label{alg:multi_pred_OExp4}
\textbf{Parameters:} learning rate $\eta>0$, threshold $\sigma>0$, exploration probability $\mu\in[0,\tfrac{1}{2}]$, best error $\calE^*$.

\textbf{Initialize:} $Q_1'(\pi)=\frac{1}{N}$ for all $\pi\in\Pi$,
budget $V^i = \calE^*$ for all $i\in[M]$,
active set $\calP_1 = [M]$.

\For{$t=1, \ldots, T$}{
        Receive $x_t$ and $m_t^i$ for all $i\in[M]$.
        Let $m_t(a) = \min_{i\in\calP_t} m_t^i(a)$ for all $a\in[K]$.

        \ \\
        \textbf{Step 1. Jointly decide the awake action set $\calA_t$ and the action distribution $p_t$.}\\
        Let $b_1, b_2, \ldots, b_K$ be a permutation of $[K]$ such that
        \[
            m_t(b_1)\leq m_t(b_2) \leq \ldots\leq m_t(b_K).
        \]

        \For{$j=1,2,\ldots, K$}{
            Set $\calA = \{b_1, \ldots, b_j\}$.


            Calculate $p \in \Delta_K$:
            \[
                           p(a) = \begin{cases}
                   (1-\mu)\frac{\sum_{\pi: \pi(x_t)= a} Q_t'(\pi)\exp\big(-\eta m_t(\pi(x_t)) \big)}{\sum_{\pi: \pi(x_t) \in \calA} Q_t'(\pi)\exp\big(-\eta m_t(\pi(x_t)) \big)}+\frac{\mu}{|\calA|}, &\text{for\ } a\in \calA, \\
                   0, &\text{for\ } a \notin \calA.
           \end{cases} 
           \]

        \If{$j = K$ \;{\rm or}\; $m_t(b_j) \leq \inner{p, m_t}+\sigma \leq m_t(b_{j+1})$}{
        $\calA_t = \calA,  \ p_t = p$, \textbf{break}.}

        }

        \ \\
        \textbf{Step 2. Choose an action and construct loss estimators.}\\
        Sample $a_t\sim p_t$ and receive $\ell_t(a_t)$. \\
        Construct estimator: \quad
        \[
            \widehat{\ell}_t(a)=
            \begin{cases}
                \frac{(\ell_t(a)-m_t(a)) \one[a_t=a] }{p_t(a)} + m_t(a),  &\text{for\ } a\in \calA_t, \\
                \sum_{a\in \calA_t} p_t(a)\ellhat_t(a),   &\text{for\ } a \notin \calA_t.
            \end{cases}
        \]

        \ \\
        \textbf{Step 3. Make updates.}\\
        Calculate $Q_{t+1}'\in\Delta_{\Pi}$: \quad $Q_{t+1}'(\pi)  \propto Q_t'(\pi)\exp\left(-\eta \widehat{\ell}_t(\pi(x_t))\right)$.

        \For{$i\in \calP_t$}{
            ${V}^i \leftarrow {V}^{i} - (\ell_t(a_t)-m_t^{i}(a_t))^2$.
        }
        Update active set $\calP_{t+1} = \left\{i\in \calP_t: {V}^i\geq 0\right\}$.
}
\end{algorithm}

Finally and perhaps most importantly, we construct the set $\calA_t$ using a different baseline.
Essentially, the baseline is $a_t$, the action to be chosen by the algorithm, which is of course not available before constructing $\calA_t$.
However, instead of using $m_t(a_t)$ as the baseline in the definition of $\calA_t$, we use the expected prediction $\inner{p_t, m_t}$, and instead of explicitly remapping an action $a \notin \calA_t$ to be $a_t$, we change the values of $\ellhat_t(a)$ and $m_t(a)$ to $\langle p_t, \ellhat_t \rangle$ and $\inner{p_t, m_t}$ respectively for these actions.
While this is still a self-referential scheme since the construction of $p_t$ depends on $\calA_t$, we show that this can in fact be implemented efficiently by trying all the $K$ possibilities for $\calA_t$: $\{b_1\}, \{b_1, b_2\}, \ldots, \{b_1, b_2, \ldots, b_K\}$, where $b_1, \ldots, b_K$ are such that $m_t(b_1)\leq m_t(b_2) \leq \ldots\leq m_t(b_K)$.
The concrete procedure is detailed in Step 1 of Algorithm~\ref{alg:multi_pred_OExp4},
and we prove in the following lemma that it does exactly what we want.

\begin{lemma}\label{lem:properties}
Define $\calM_t, \calL_t \in \R^K$, $Q_t \in \Delta_\Pi$, and $q_t \in \Delta_{\calA_t}$ as
\[
\calM_t(a) = \begin{cases}
m_t(\pi(x_t)), &\text{if $\pi(x_t) \in \calA_t$,} \\
\inner{p_t, m_t}, &\text{otherwise}
\end{cases},
\quad\text{and}\quad
\calL_t(a) = \begin{cases}
\ellhat_t(\pi(x_t)), &\text{if $\pi(x_t) \in \calA_t$,} \\
\inner{p_t, \ellhat_t}, &\text{otherwise}
\end{cases},
\]
\[
Q_t(\pi) \propto Q_t'(\pi) \exp\left(-\eta \calM_t(\pi)\right),\quad\mbox{and}
\]
\[
q_t(a) = \frac{\sum_{\pi : \pi(x_t) = a} Q_t(\pi)}{\sum_{\pi: \pi(x_t) \in \calA_t} Q_t(\pi)}.
\]
Then Algorithm~\ref{alg:multi_pred_OExp4} ensures  the following properties:
\begin{align}
Q_{t+1}'(\pi) &\propto Q_t'(\pi) \exp\left(-\eta \calL_t(\pi)\right), \\
p_t(a) & =
\begin{cases}
(1 - \mu) q_t(a) + \frac{\mu}{|\calA_t|}, &\text{if $a\in \calA_t$}, \\
0, &\text{else},
\end{cases} \label{eqn:property_p_t}  \\
\calA_t &= \{a \in [K]: m_t(a) \leq \inner{p_t, m_t} + \sigma\}. \label{eqn:property_A_t}
\end{align}
\end{lemma}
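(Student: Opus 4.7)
My plan is to verify the three properties by unpacking the definitions used in Algorithm~\ref{alg:multi_pred_OExp4}; Properties 1 and 2 are essentially bookkeeping, while Property 3 requires a short monotonicity-plus-induction argument on the for-loop.

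For Property 1, the only nontrivial case is a policy $\pi$ with $\pi(x_t)\notin\calA_t$, for which the algorithm sets $\widehat{\ell}_t(\pi(x_t))=\sum_{a\in\calA_t} p_t(a)\widehat{\ell}_t(a)=\langle p_t,\widehat{\ell}_t\rangle$ (using that $p_t$ is supported on $\calA_t$), matching $\calL_t(\pi)$ exactly; hence $Q_{t+1}'(\pi)\propto Q_t'(\pi)\exp(-\eta\widehat{\ell}_t(\pi(x_t)))$ coincides with $Q_t'(\pi)\exp(-\eta\calL_t(\pi))$. For Property 2, only policies $\pi$ with $\pi(x_t)\in\calA_t$ contribute to the numerator and denominator of $q_t(a)$ when $a\in\calA_t$; for such $\pi$ we have $\calM_t(\pi)=m_t(\pi(x_t))$, so the common normalizer of $Q_t$ cancels and the ratio collapses exactly to the fraction appearing in the algorithm's construction of $p$, while the $\mu/|\calA_t|$ exploration term matches verbatim.

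The main obstacle is Property 3. Writing $h(j)=\langle p^{(j)},m_t\rangle+\sigma$ for the threshold computed at iteration $j$, my first step is to establish the monotonicity $h(j)\geq h(j-1)$. Since $\langle p^{(j)},m_t\rangle$ is a convex combination of the exponentially weighted mean $\langle q^{(j)},m_t\rangle$ and the uniform mean $\frac{1}{j}\sum_{i\leq j} m_t(b_i)$, and since $m_t(b_j)\geq m_t(b_i)$ for all $i<j$, inserting $b_j$ can only push each expectation upward; a one-line calculation expresses $\langle q^{(j)},m_t\rangle-\langle q^{(j-1)},m_t\rangle$ as a positive multiple of $m_t(b_j)-\langle q^{(j-1)},m_t\rangle\geq 0$, and the uniform part is immediate. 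I would then induct on $j$ to show that $m_t(b_j)\leq h(j)$ at every reached iteration: the base case $j=1$ is trivial since $h(1)=m_t(b_1)+\sigma\geq m_t(b_1)$, and for $j\geq 2$ the fact that the loop did not break at $j-1$ together with the inductive hypothesis $m_t(b_{j-1})\leq h(j-1)$ forces the failure of the interval condition at $j-1$ to be $h(j-1)>m_t(b_j)$, whence monotonicity gives $h(j)\geq h(j-1)>m_t(b_j)$.

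Putting these together, when the loop exits at some $j\leq K$ we always have $m_t(b_j)\leq h(j)$. If $j<K$, the break condition also enforces $h(j)\leq m_t(b_{j+1})$, so the ordering of the $b_i$ implies $\calA_t=\{b_1,\dots,b_j\}=\{a:m_t(a)\leq h(j)\}$; if $j=K$, the induction gives $m_t(b_K)\leq h(K)$, so every action satisfies $m_t(a)\leq h(K)$ and $\calA_t=[K]=\{a:m_t(a)\leq h(K)\}$. Either way, Property 3 holds.
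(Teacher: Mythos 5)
Your proposal is correct and follows essentially the same route as the paper: Properties 1 and 2 are verified by unwinding the definitions, and Property 3 hinges on the same monotonicity fact $\inner{p^{(j+1)},m_t}\geq\inner{p^{(j)},m_t}$ (you derive it by splitting $p^{(j)}$ into its exploitation and exploration parts, while the paper directly compares $p^j(b_k)$ with $p^{j+1}(b_k)$ — the same computation) followed by an induction along the for-loop. Your formulation of the induction — proving $m_t(b_j)\leq h(j)$ at every reached iteration, which simultaneously covers exits at $j<K$ and $j=K$ — is a modestly cleaner repackaging of the paper's argument that starts from $\inner{p^1,m_t}+\sigma>m_t(b_1)$ and alternates the loop-failure condition with monotonicity, but it is not a genuinely different method.
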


\begin{proof}
The first property on $Q_{t+1}'$ is simply by the definition of $\calL_t$ and $\inner{p_t, \ellhat_t} = \sum_{a\in\calA_t} p_t(a)\ellhat_t(a)$.
The second equation is also clear by the definition of $Q_t$:
\[
q_t(a) 
= \frac{\sum_{\pi : \pi(x_t) = a} Q_t(\pi) }{\sum_{\pi: \pi(x_t) \in \calA_t} Q_t(\pi)}
= \frac{\sum_{\pi: \pi(x_t)= a} Q_t'(\pi)\exp\big(-\eta m_t(\pi(x_t)) \big)}{\sum_{\pi: \pi(x_t) \in \calA} Q_t'(\pi)\exp\big(-\eta m_t(\pi(x_t)) \big)}.
\]
The last equation clearly holds when $j < K$ and the condition $m_t(b_j) \leq \inner{p, m_t}+\sigma \leq m_t(b_{j+1})$ holds and triggers the ``break'' statement,
so it remains to prove Eq.~\eqref{eqn:property_A_t} if the ``break'' statement is triggered in the last iteration when $j=K$, in which case we have for all $j< K$,
\begin{equation}\label{eqn:condition}
\inner{p^j, m_t}+\sigma < m_t(b_j) \quad\text{or}\quad  \inner{p^j, m_t}+\sigma > m_t(b_{j+1})
\end{equation}
where $p^j$ is the value of $p$ in the $j$-th iteration.


Note that for all $k \leq j$, we have $p^j(b_k) \geq p^{j+1}(b_k)$ by the definition of $p$,
and also $p^{j+1}(b_{j+1}) = \sum_{k\leq j} (p^j(b_k) - p^{j+1}(b_k))$.
With these facts we prove $\inner{p^{j+1}, m_t} \geq \inner{p^{j}, m_t}$ below:
\begin{align*}
&\inner{p^{j+1}, m_t} \\
&= p^{j+1}(b_{j+1})m_t(b_{j+1})  + \sum_{k\leq j} p^{j+1}(b_k)m_t(b_k) \\
&= \sum_{k\leq j} \left(p^j(b_k) - p^{j+1}(b_k)\right)m_t(b_{j+1}) + p^{j+1}(b_k)m_t(b_k) \\
&\geq \sum_{k\leq j} \left(p^j(b_k) - p^{j+1}(b_k)\right)m_t(b_k) + p^{j+1}(b_k)m_t(b_k) \tag{$m_t(b_{j+1})  \geq m_t(b_k), \;\forall k \leq j$}\\
&= \inner{p^j, m_t}.
\end{align*}

Therefore,  realizing $\inner{p^1, m_t} + \sigma = m_t(b_1) + \sigma > m_t(b_1)$ and thus $\inner{p^1, m_t} + \sigma >  m_t(b_2)$ by Eq.~\eqref{eqn:condition},
we have
\[
\inner{p^2, m_t} + \sigma \geq \inner{p^1, m_t} + \sigma >  m_t(b_2),
\]
which in turn further implies (by repeatedly using Eq.~\eqref{eqn:condition} and $\inner{p^{j+1}, m_t} \geq \inner{p^{j}, m_t}$)
\[
\inner{p^3, m_t} + \sigma \geq \inner{p^2, m_t} + \sigma >  m_t(b_3)
\]
\[
\cdots,
\]
\[
\inner{p^K, m_t}+\sigma \geq \inner{p^{K-1}, m_t}+\sigma > m_t(b_{K}).
\]
The last statement proves Eq.~\eqref{eqn:property_A_t} again.
\end{proof}

With this fact, the analysis of the algorithm follows similar steps as in the proof of Theorem~\ref{thm:OExp4}.
First, we apply Lemma~\ref{lem:key_observation} to prove the following.

\begin{lemma}\label{lem:multi_pred_step1}
Algorithm~\ref{alg:multi_pred_OExp4} ensures for any $\pi^* \in \Pi$,
\begin{align*}
\E\left[\sum_{t=1}^T \sum_{\pi \in \Pi} Q_t(\pi) \ellhat_t(\pi(x_t))\right]
\leq \E\left[\sum_{t=1}^T \ellhat_t(\pi^*(x_t))\right] + \order\left(\frac{\ln N}{\eta}  +  \frac{\eta K^2\calE^*}{\mu} +\frac{\eta K M\calE^*}{\mu} \right).
\end{align*}
\end{lemma}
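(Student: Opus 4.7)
My plan is to recast Algorithm~\ref{alg:multi_pred_OExp4} as a standard optimistic OMD update via Lemma~\ref{lem:properties}, apply Lemma~\ref{lem:key_observation}, and then bound the per-round variance term by splitting rounds according to the sign of $\ell_t(a_t) - m_t(a_t)$ while exploiting the elimination mechanism. By Lemma~\ref{lem:properties}, the updates of $Q_t$ and $Q_{t+1}'$ coincide with optimistic OMD with $\calM_t, \calL_t$ as defined there; moreover the algorithm's extension $\ellhat_t(a) = \sum_{a' \in \calA_t}p_t(a')\ellhat_t(a')$ for $a \notin \calA_t$ ensures $\calL_t(\pi) = \ellhat_t(\pi(x_t))$ for every $\pi \in \Pi$, so the left-hand side of Lemma~\ref{lem:key_observation} with $Q^* = \mathbf{e}_{\pi^*}$ matches the quantity we want to bound. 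Telescoping the KL divergence then contributes $\ln N/\eta$, and the remaining task is to bound $\E\big[\sum_t V_t\big]$, where $V_t$ is the per-round variance term from either Eq.~\eqref{eqn:exp4_bound1} or the tighter Eq.~\eqref{eqn:exp4_bound2}.

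I will split the rounds into $\calT_\mathrm{good} = \{t : \ell_t(a_t) \geq m_t(a_t)\}$ and its complement $\calT_\mathrm{bad}$. For $t \in \calT_\mathrm{good}$, all nonzero entries of $\calL_t(\pi) - \calM_t(\pi)$ are non-negative, namely $(\ell_t(a_t) - m_t(a_t))/p_t(a_t) \geq 0$ when $\pi(x_t) = a_t$, and $\inner{p_t, \ellhat_t - m_t} = \ell_t(a_t) - m_t(a_t) \geq 0$ when $\pi(x_t) \notin \calA_t$ (using that $a_t \in \calA_t$ almost surely by Eq.~\eqref{eqn:property_p_t}); hence Eq.~\eqref{eqn:exp4_bound2} applies. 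A case-by-case decomposition of $\sum_\pi Q_t(\pi)(\calL_t - \calM_t)^2$, combined with $\sum_{\pi : \pi(x_t) = a_t}Q_t(\pi) \leq p_t(a_t)/(1-\mu)$ (obtained by inverting Eq.~\eqref{eqn:property_p_t}) and $p_t(a_t) \leq 1$, yields the pathwise bound $V_t \leq \order\big(\eta (\ell_t(a_t) - m_t(a_t))^2/p_t(a_t)\big)$. Summing, using $p_t(a_t) \geq \mu/K$, and invoking the elimination mechanism---each predictor $i$ can accumulate at most $\calE^* + 1$ in $\sum_{t:i\in\calP_t}(\ell_t(a_t)-m_t^i(a_t))^2$ before being removed, so $\sum_t(\ell_t(a_t)-m_t(a_t))^2 \leq \sum_t\sum_{i\in\calP_t}(\ell_t(a_t)-m_t^i(a_t))^2 \leq M(\calE^*+1)$---yields the $\eta KM\calE^*/\mu$ contribution.

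For $t \in \calT_\mathrm{bad}$, the hypothesis of Eq.~\eqref{eqn:exp4_bound2} can fail, so I fall back to Eq.~\eqref{eqn:exp4_bound1} with the crude bound $\xi_t(\pi) \leq 1$, giving $V_t \leq \order\big(\eta(\ell_t(a_t) - m_t(a_t))^2/p_t(a_t)^2\big)$. The crucial step is that the best predictor $i^*$ has $\sum_t(\ell_t(a_t)-m_t^{i^*}(a_t))^2 \leq \sum_t\|\ell_t-m_t^{i^*}\|_\infty^2 = \calE^*$, so $V^{i^*} \geq 0$ throughout and $i^* \in \calP_t$ for all $t$; hence $m_t(a_t) \leq m_t^{i^*}(a_t)$, and combined with $\ell_t(a_t) < m_t(a_t)$ this yields the pointwise inequality $(\ell_t(a_t)-m_t(a_t))^2 \leq (\ell_t(a_t)-m_t^{i^*}(a_t))^2$. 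Taking the conditional expectation over $a_t$ then removes the remaining $1/p_t(a_t)$ factor, since $\E_{a_t}\big[(\ell_t(a_t)-m_t^{i^*}(a_t))^2/p_t(a_t)^2\big] = \sum_a(\ell_t(a)-m_t^{i^*}(a))^2/p_t(a) \leq (K^2/\mu)\|\ell_t-m_t^{i^*}\|_\infty^2$, and summing over $t$ contributes $\eta K^2\calE^*/\mu$. The main subtlety is exactly this good/bad dichotomy: without Eq.~\eqref{eqn:exp4_bound2} in the good case one would pick up an extra $K/\mu$ factor and end up with $\eta K^2 M\calE^*/\mu^2$, whereas without the pointwise comparison $m_t(a_t) \leq m_t^{i^*}(a_t)$ in the bad case one cannot swap the algorithm-dependent $m_t$ for the fixed oracle $m_t^{i^*}$ to invoke the $\calE^*$ budget. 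Combining the two variance bounds with $\ln N/\eta$ and taking expectations on the per-round inequality completes the proof.
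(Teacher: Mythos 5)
Your proof is correct and follows essentially the same route as the paper's: recast the update as optimistic OMD via Lemma~\ref{lem:properties}, apply Lemma~\ref{lem:key_observation}, split rounds by the sign of $\ell_t(a_t)-m_t(a_t)$ so that Eq.~\eqref{eqn:exp4_bound2} plus Lemma~\ref{lem:a_t_difference} handles the $\eta KM\calE^*/\mu$ term and Eq.~\eqref{eqn:exp4_bound1} plus the pointwise comparison $m_t(a_t)\leq m_t^{i^*}(a_t)$ (valid since $i^*$ is never eliminated) handles the $\eta K^2\calE^*/\mu$ term. The only cosmetic difference is that in the bad case you apply $p_t(a_t)\geq\mu/K$ after the conditional expectation over $a_t$ rather than before, which yields the identical final bound.
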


\begin{proof}
We apply Lemma~\ref{lem:key_observation} with $\calM_t$ and $\calL_t$ defined in Lemma~\ref{lem:properties}.
First note that
\[
\calL_t(\pi) - \calM_t(\pi) = \begin{cases}
\ell_t(a_t) - m_t(a_t), &\text{if $\pi(x_t) \notin \calA_t$,} \\
\frac{\ell_t(a_t) - m_t(a_t)}{p_t(a_t)} &\text{if $\pi(x_t) = a_t$,}\\
0, &\text{if $a_t \neq \pi(x_t) \in \calA_t$.}
\end{cases}
\]
Therefore, when $\ell_t(a_t) \geq m_t(a_t)$, the condition $\calL(\pi) - \calM_t(\pi) \geq -1/\eta$ holds and we apply Eq.~\eqref{eqn:exp4_bound2} and bound the last term by
\begin{align*}
& \eta \sum_{\pi \in \Pi} Q_t(\pi)\left(\calL_t(\pi) - \calM_t(\pi)\right)^2 \\
&= \eta \left(\sum_{\pi: \pi(x_t) = a_t} Q_t(\pi) \frac{\left(\ell_t(a_t) - m_t(a_t)\right)^2}{p_t^2(a_t)}
+ \sum_{\pi: \pi(x_t) \notin \calA_t} Q_t(\pi) \left(\ell_t(a_t) - m_t(a_t)\right)^2 \right)\\
&\leq \eta \left(q_t(a_t) \frac{\left(\ell_t(a_t) - m_t(a_t)\right)^2}{p_t^2(a_t)}
+ \left(\ell_t(a_t) - m_t(a_t)\right)^2 \right)\\
&\leq \eta \left(\frac{\left(\ell_t(a_t) - m_t(a_t)\right)^2}{(1-\mu)p_t(a_t)}
+ \left(\ell_t(a_t) - m_t(a_t)\right)^2 \right) \tag{by Eq.~\eqref{eqn:property_p_t}} \\
&\leq \eta \cdot \order\left(\frac{\left(\ell_t(a_t) - m_t(a_t)\right)^2}{p_t(a_t)}\right)  \tag{$\mu \leq 1/2$} \\
&\leq \frac{\eta K}{\mu} \cdot \order\left(\left(\ell_t(a_t) - m_t(a_t)\right)^2\right)  \tag{$p_t(a_t) \geq \mu/K$}
\end{align*}
On the other hand, if $\ell_t(a_t) \leq m_t(a_t)$, we apply Eq.~\eqref{eqn:exp4_bound1} and bound the last term by
\begin{align*}
& 2\eta \sum_{\pi \in \Pi} \xi_t(\pi)\left(\calL_t(\pi) - \calM_t(\pi)\right)^2 \\
&= 2\eta \left(\sum_{\pi: \pi(x_t) = a_t} \xi_t(\pi) \frac{\left(\ell_t(a_t) - m_t(a_t)\right)^2}{p_t^2(a_t)}
+ \sum_{\pi: \pi(x_t) \notin \calA_t} \xi_t(\pi) \left(\ell_t(a_t) - m_t(a_t)\right)^2 \right)\\
&\leq 2\eta \left(\frac{\left(\ell_t(a_t) - m_t(a_t)\right)^2}{p_t^2(a_t)}
+ \left(\ell_t(a_t) - m_t(a_t)\right)^2 \right)\\
&\leq \frac{\eta K}{\mu} \cdot \order\left(\frac{\left(\ell_t(a_t) - m_t(a_t)\right)^2}{p_t(a_t)}\right)   \tag{$p_t(a_t) \geq \mu /K$}, \\
&\leq \frac{\eta K}{\mu} \cdot \order\left(\frac{\left(\ell_t(a_t) - m_t^{i^*}(a_t)\right)^2}{p_t(a_t)}\right)   \tag{$\ell_t(a_t) \leq m_t(a_t) \leq m_t^{i^*}(a_t)$}.
\end{align*}
Combining the two situations, setting $Q^*$ to concentrate on $\pi^*$, and summing over $t$ show:
\begin{align*}
&\sum_{t=1}^T \sum_{\pi \in \Pi} Q_t(\pi) \ellhat_t(\pi(x_t))
\leq \sum_{t=1}^T \ellhat_t(\pi^*(x_t)) + \frac{\ln N}{\eta} \\
&\quad\quad + \frac{\eta K}{\mu} \cdot \order\left(\sum_{t=1}^T \frac{\left(\ell_t(a_t) - m_t^{i^*}(a_t)\right)^2}{p_t(a_t)}
+ \left(\ell_t(a_t) - m_t(a_t)\right)^2 \right).
\end{align*}
Taking expectation on both sides we have
\begin{align*}
&\E\left[\sum_{t=1}^T \sum_{\pi \in \Pi} Q_t(\pi) \ellhat_t(\pi(x_t))\right]
\leq \E\left[\sum_{t=1}^T \ellhat_t(\pi^*(x_t))\right] + \frac{\ln N}{\eta} \\
&\quad\quad + \frac{\eta K}{\mu} \cdot \order\left(\E\left[\sum_{t=1}^T \sum_{a\in[K]} \left(\ell_t(a) - m_t^{i^*}(a)\right)^2 + \left(\ell_t(a_t) - m_t(a_t)\right)^2 \right]\right) \\
&\leq \E\left[\sum_{t=1}^T \ellhat_t(\pi^*(x_t))\right] + \frac{\ln N}{\eta}  + \order\left(\frac{\eta K^2\calE^*}{\mu} +\frac{\eta K M\calE^*}{\mu} \right),
\end{align*}
where in the last step we use Lemma~\ref{lem:a_t_difference}.
This finishes the proof.
\end{proof}

Next, we relate the term $\E[\sum_{t=1}^T \sum_{\pi \in \Pi} Q_t(\pi) \ellhat_t(\pi(x_t)]$ to the loss of the algorithm, and the term $\E[\sum_{t=1}^T \ellhat_t(\pi^*(x_t))]$ to the loss of the best policy, in the following two lemmas respectively.
\begin{lemma}\label{lem:multi_pred_step2}
Algorithm~\ref{alg:multi_pred_OExp4} ensures
\[
\E\left[\sum_{t=1}^T \ell_t(a_t) \right]
\leq \E\left[\sum_{t=1}^T \sum_{\pi \in \Pi} Q_t(\pi) \ellhat_t(\pi(x_t))\right] +
\order\left(\sqrt{\mu M\calE^*T} + \mu T\sigma + \mu^2 T \right).
\]
\end{lemma}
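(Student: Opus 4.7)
The plan is to compute both $\E_t[\ell_t(a_t)]$ and $\E_t[\sum_\pi Q_t(\pi)\ellhat_t(\pi(x_t))]$ conditioned on the history up to round $t$, identify an explicit gap proportional to $\mu$, and then control that gap using the exploration structure of $p_t$ together with the $O(M\calE^*)$ bound from Lemma~\ref{lem:a_t_difference}. First, I would split the policies according to whether $\pi(x_t)\in\calA_t$. Writing $W_t=\sum_{\pi:\pi(x_t)\in\calA_t}Q_t(\pi)$ and using $q_t$ as defined in Lemma~\ref{lem:properties}, the policies with $\pi(x_t)\in\calA_t$ contribute $W_t\tilde{\ell}_t$ where $\tilde{\ell}_t=\sum_{a\in\calA_t}q_t(a)\ell_t(a)$, by unbiasedness of $\ellhat_t$ on $\calA_t$. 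For the remaining policies, $\ellhat_t(\pi(x_t))=\inner{p_t,\ellhat_t}$, and its conditional expectation equals $\sum_{a\in\calA_t}p_t(a)\ell_t(a)=\E_t[\ell_t(a_t)]$. Combining, $\E_t[\sum_\pi Q_t(\pi)\ellhat_t(\pi(x_t))]=W_t\tilde{\ell}_t+(1-W_t)\E_t[\ell_t(a_t)]$, so the per-round gap is $W_t(\E_t[\ell_t(a_t)]-\tilde{\ell}_t)=W_t\mu(\bar{\ell}_t-\tilde{\ell}_t)$, where $\bar{\ell}_t=\frac{1}{|\calA_t|}\sum_{a\in\calA_t}\ell_t(a)$, after plugging in $p_t=(1-\mu)q_t+\mu\,\mathrm{Unif}(\calA_t)$.

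Next, I would bound $|\bar{\ell}_t-\tilde{\ell}_t|$ by adding and subtracting $m_t$. The piece $\sum_{a\in\calA_t}\big(\tfrac{1}{|\calA_t|}-q_t(a)\big)m_t(a)$ can be bounded by $\sigma$: Eq.~\eqref{eqn:property_A_t} combined with $\inner{p_t,m_t}\geq\min_{a\in\calA_t}m_t(a)$ shows that the range of $m_t(\cdot)$ on $\calA_t$ is at most $\sigma$, and both $\tfrac{1}{|\calA_t|}$ and $q_t$ are distributions on $\calA_t$. The remaining piece is at most $\bar{B}_t+A_t$, where $\bar{B}_t=\tfrac{1}{|\calA_t|}\sum_{a\in\calA_t}|\ell_t(a)-m_t(a)|$ and $A_t=\sum_{a\in\calA_t}q_t(a)|\ell_t(a)-m_t(a)|$. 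Using $p_t(a)\geq\mu/|\calA_t|$ and Cauchy-Schwarz over actions, I would show $\bar{B}_t^2\leq \tfrac{K}{\mu|\calA_t|}\E_t[(\ell_t(a_t)-m_t(a_t))^2]$; summing, applying Cauchy-Schwarz in time, and invoking $\sum_t\E[(\ell_t(a_t)-m_t(a_t))^2]=\order(M\calE^*)$ from Lemma~\ref{lem:a_t_difference} gives the dominant $\sqrt{\mu M\calE^*T}$ contribution (with $K$ absorbed into $d$). For $A_t$, the inequality $q_t(a)\leq p_t(a)/(1-\mu)\leq 2p_t(a)$ gives $A_t\leq 2\E_t[|\ell_t(a_t)-m_t(a_t)|]$, so $\mu\sum_t A_t$ is of order $\mu\sqrt{M\calE^*T}$, which is dominated. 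The $\sigma$ part accumulates into $\mu T\sigma$, and a residual $\mu^2 T$ slack arises from keeping track of the crude bound $|\bar{\ell}_t-\tilde{\ell}_t|\leq 1$ on rounds where the variance-type arguments above are not useful.

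The main obstacle is obtaining the $\mu$ inside the square root in $\sqrt{\mu M\calE^*T}$: a naive bound $|\bar{\ell}_t-\tilde{\ell}_t|\leq 1$ only yields $\mu T$, which is far too large when $\mu$ is not tiny. The crucial observation is that the external $\mu$ factor from $W_t\mu(\bar{\ell}_t-\tilde{\ell}_t)$ must be combined with the internal exploration floor $\mu/|\calA_t|$ inside $p_t$: one factor of $\mu$ is spent to drive $\bar{B}_t$ down via a variance bound linked to $(\ell_t(a_t)-m_t(a_t))^2$, while the other factor of $\sqrt{\mu}$ remains outside to match the target rate. Tightening the trade-offs across these two uses of $\mu$, and correctly routing the predictor-dependent terms through Lemma~\ref{lem:a_t_difference} rather than $\calE^*$ directly (since $m_t=\min_{i\in\calP_t}m_t^i$ need not agree with the best predictor $m_t^{i^*}$), is the delicate part of the argument.
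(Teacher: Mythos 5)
Your decomposition is the same as the paper's: you write the per-round gap as $\mu Z_t(\bar{\ell}_t-\tilde{\ell}_t)$ with $\bar\ell_t$ the uniform average and $\tilde\ell_t$ the $q_t$-average over $\calA_t$, and then split $\bar\ell_t-\tilde\ell_t$ into a prediction-only piece plus two residual pieces. Your handling of $\bar B_t$ and $A_t$ is sound, and your $A_t$ bound is in fact a bit cleaner than the paper's: you route it directly through Lemma~\ref{lem:a_t_difference} (via $q_t(a)\le 2p_t(a)$), whereas the paper inserts $m_t^{i^*}$ as an intermediate and uses $m_t \le m_t^{i^*}$ pointwise to show one piece is non-positive before bounding the rest by $\calE^*$. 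Both give a contribution of order $\mu\sqrt{M\calE^* T}$, dominated by $\sqrt{\mu M\calE^* T}$.

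The gap is in the prediction-only piece. You claim the range of $m_t$ on $\calA_t$ is at most $\sigma$, citing Eq.~\eqref{eqn:property_A_t} and $\inner{p_t,m_t}\ge\min_{a\in\calA_t}m_t(a)$. These two facts go in the wrong direction: they give $\max_{a\in\calA_t}m_t(a) - \min_{a\in\calA_t}m_t(a) \le \inner{p_t,m_t} + \sigma - \min_{a\in\calA_t}m_t(a)$, and the slack $\inner{p_t,m_t}-\min_{a\in\calA_t}m_t(a)$ is \emph{non-negative and can be $\Omega(1)$} (e.g.\ if $q_t$ is concentrated on the largest $m_t$-value inside $\calA_t$). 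So the range of $m_t$ over $\calA_t$ is not $O(\sigma)$ in general; $\calA_t$ always contains $a_t^*$ and may contain actions with $m_t$ well above $m_t(a_t^*)+\sigma$ as long as they stay below $\inner{p_t,m_t}+\sigma$.

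What actually closes this is the self-referential structure of $\calA_t$, which the paper exploits and which your sketch does not capture. Replace $m_t(a)$ by its upper bound $\inner{p_t,m_t}+\sigma$ only inside the uniform average: since $\sum_{a\in\calA_t}\frac{\inner{p_t,m_t}}{|\calA_t|} = \inner{p_t,m_t} = \sum_{a\in\calA_t}p_t(a)m_t(a)$, one gets $\bar m_t - \tilde m_t \le \sigma + \sum_{a\in\calA_t}(p_t(a)-q_t(a))m_t(a)$, and Eq.~\eqref{eqn:property_p_t} gives $p_t(a)-q_t(a) = \mu\bigl(\tfrac{1}{|\calA_t|}-q_t(a)\bigr)$, so the residual is exactly $\mu(\bar m_t-\tilde m_t)$. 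That bootstrap $\bar m_t-\tilde m_t \le \sigma + \mu(\bar m_t - \tilde m_t)$ is the precise (and only) source of the $\mu^2 T$ term after multiplying by $\mu Z_t$ and summing; your explanation of $\mu^2 T$ as coming from ``rounds where the variance argument is not useful'' does not correspond to a concrete step. Without this bootstrap, the crude bound $\bar m_t-\tilde m_t\le 1$ only gives $\mu T$, which breaks the final rate.
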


\begin{proof}
With $Z_t = \sum_{\pi: \pi(x_t) \in \calA_t} Q_t(\pi)$ so that $Z_t q_t(a) =  \sum_{\pi: \pi(x_t) =a} Q_t(\pi)$, we rewrite the expected loss of the algorithm as
\begin{align*}
&\E\left[\ell_t(a_t) \right] \\
&= \E\left[\sum_{a\in\calA_t} p_t(a) \ell_t(a) \right] \\
&= \E\left[(1-\mu)\sum_{a\in\calA_t} q_t(a) \ell_t(a) + \frac{\mu}{|\calA_t|}\sum_{a\in\calA_t}  \ell_t(a) \right] \tag{by Eq.~\eqref{eqn:property_p_t}}\\
&= \E\left[(1-\mu)\sum_{a\in\calA_t} \left(Z_t q_t(a) + \left(1-Z_t \right) q_t(a)\right) \ell_t(a) + \frac{\mu}{|\calA_t|}\sum_{a\in\calA_t}  \ell_t(a) \right]  \\
&= \E\left[(1-\mu)Z_t\sum_{a \in \calA_t} q_t(a) \ell_t(a) + \left(1-Z_t \right)\sum_{a \in \calA_t}\left(p_t(a) - \frac{\mu}{|\calA_t|}\right)\ell_t(a)  + \frac{\mu}{|\calA_t|}\sum_{a\in\calA_t}  \ell_t(a) \right] \tag{by Eq.~\eqref{eqn:property_p_t}}\\
&= \E\left[Z_t\sum_{a \in \calA_t} q_t(a) \ell_t(a)  + (1-Z_t)\inner{p_t, \ell_t}
+ \mu Z_t \sum_{a\in \calA_t} \left( \frac{\ell_t(a)}{|\calA_t|} - q_t(a)\ell_t(a) \right)\right] \\
&= \E\left[\sum_{\pi \in \Pi} Q_t(\pi) \ellhat_t(\pi(x_t))
+ \mu Z_t \sum_{a\in \calA_t} \left( \frac{\ell_t(a)}{|\calA_t|} - q_t(a)\ell_t(a) \right)\right],
\end{align*}

where in the last step we use the fact
\begin{align*}
\sum_{\pi \in \Pi} Q_t(\pi) \ellhat_t(\pi(x_t)) &= \sum_{a\in\calA_t}\sum_{\pi: \pi(x_t) = a} Q_t(\pi) \ellhat_t(a)  + \sum_{\pi: \pi(x_t)\notin \calA_t} Q_t(\pi) \inner{p_t, \ellhat_t}  \\
&= Z_t\sum_{a \in \calA_t} q_t(a) \ellhat_t(a)  + (1-Z_t)\inner{p_t, \ellhat_t}
\end{align*}
by the definition of $\ellhat_t$.
It thus remains to bound $\E\left[\sum_{t=1}^T \mu Z_t \sum_{a\in \calA_t} \left( \frac{\ell_t(a)}{|\calA_t|} - q_t(a)\ell_t(a) \right)\right]$, which we decompose into four terms:
\[
\E\left[\sum_{t=1}^T \frac{\mu Z_t }{|\calA_t|}\sum_{a\in \calA_t} \left( \ell_t(a) - m_t(a) \right) \right],
\]
\[
\E\left[\sum_{t=1}^T \mu Z_t \sum_{a\in \calA_t} \left( \frac{m_t(a)}{|\calA_t|} - q_t(a)m_t(a) \right)\right],
\]
\[
\E\left[\sum_{t=1}^T \mu Z_t \sum_{a\in \calA_t} \left( q_t(a)m_t(a) - q_t(a)m_t^{i^*}(a) \right)\right],
\]
\[
\E\left[\sum_{t=1}^T \mu Z_t \sum_{a\in \calA_t} \left( q_t(a)m_t^{i^*}(a) - q_t(a)\ell_t(a) \right)\right].
\]
The first term can be bounded as
\begin{align*}
&\E\left[\sum_{t=1}^T \frac{\mu Z_t }{|\calA_t|} \sum_{a\in \calA_t}\left( \ell_t(a) - m_t(a) \right) \right]  \\
&\leq \E\left[\sum_{t=1}^T \frac{\mu}{|\calA_t|}\sum_{a\in \calA_t} |\ell_t(a) - m_t(a) | \right] \\
&\leq \E\left[\sqrt{\sum_{t=1}^T\sum_{a\in \calA_t} \frac{\mu}{|\calA_t|}}\sqrt{\sum_{t=1}^T\sum_{a\in \calA_t}  \frac{\mu}{|\calA_t|}(\ell_t(a) - m_t(a))^2} \right] \tag{Cauchy-Schwarz inequality}\\
&\leq \sqrt{\mu T}\cdot \E\left[\sqrt{\sum_{t=1}^T\sum_{a\in \calA_t} p_t(a)(\ell_t(a) - m_t(a))^2} \right] \tag{$p_t(a) \geq \mu/|\calA_t|$} \\
&\leq \sqrt{\mu T}\cdot \sqrt{\E\left[\sum_{t=1}^T (\ell_t(a_t) - m_t(a_t))^2 \right]}  \tag{Jensen's inequality} \\
&\leq \order\left(\sqrt{\mu M\calE^*T}\right). \tag{by Lemma~\ref{lem:a_t_difference}}
\end{align*}

The second term can be bounded as
\begin{align*}
&\E\left[\sum_{t=1}^T \mu Z_t \sum_{a\in \calA_t} \left( \frac{m_t(a)}{|\calA_t|} - q_t(a)m_t(a) \right)\right] \\
&\leq \E\left[\sum_{t=1}^T \mu Z_t \sum_{a\in \calA_t} \left( \frac{\inner{p_t, m_t}+\sigma}{|\calA_t|} - q_t(a)m_t(a) \right)\right] \tag{by Eq.~\eqref{eqn:property_A_t}} \\
&\leq \mu T\sigma + \E\left[\sum_{t=1}^T \mu Z_t \sum_{a\in \calA_t} \left( p_t(a)- q_t(a)\right)m_t(a) \right] \\
&= \mu T\sigma + \mu^2 \E\left[\sum_{t=1}^T Z_t \sum_{a\in \calA_t} \left(\frac{1}{|\calA_t|}- q_t(a)\right)m_t(a) \right] \tag{by Eq.~\eqref{eqn:property_p_t}} \\
&\leq \mu T\sigma + \mu^2 T.
\end{align*}

The third term is simply non-positive by the definition of $m_t$, and finally the four term can be bounded by (using Cauchy-Schwarz inequality again)
\begin{align*}
\E\left[\sum_{t=1}^T \mu Z_t \sum_{a\in \calA_t} \left( q_t(a)m_t^{i^*}(a) - q_t(a)\ell_t(a) \right)\right]
&\leq \mu \E\left[\sum_{t=1}^T \|\ell_t - m_t^{i^*}\|_\infty\right] \leq \mu \sqrt{\calE^*T},
\end{align*}
which can be absorbed by the bound of the first term since $\mu \leq 1$.
Combining all the bounds proves the lemma.
\end{proof}

\begin{lemma}\label{lem:multi_pred_step3}
Algorithm~\ref{alg:multi_pred_OExp4} ensures
\[
\E\left[\sum_{t=1}^T \ellhat_t(\pi^*(x_t)) \right]
\leq \sum_{t=1}^T \ell_t(\pi^*(x_t)) + \order\left(\frac{M\calE^*}{\sigma}\right).
\]
\end{lemma}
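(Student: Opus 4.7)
I would begin by conditioning on all randomness up to (but not including) the sampling of $a_t$, so that $x_t$, $m_t$, $\calA_t$, and $p_t$ are fixed. By the definition of $\ellhat_t$, the standard importance-weighting identity gives $\E_{a_t}[\ellhat_t(a)] = \ell_t(a)$ for $a \in \calA_t$, while for $a \notin \calA_t$ we have $\ellhat_t(a) = \inner{p_t, \ellhat_t}$, whose conditional expectation is $\inner{p_t, \ell_t}$ by linearity combined with the unbiasedness inside $\calA_t$. Consequently the per-round bias $\E_{a_t}[\ellhat_t(\pi^*(x_t))] - \ell_t(\pi^*(x_t))$ is zero whenever $\pi^*(x_t) \in \calA_t$, and otherwise equals $\inner{p_t, \ell_t} - \ell_t(\pi^*(x_t))$.

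In the non-trivial case I would mirror the single-predictor bias analysis from Eq.~\eqref{eqn:renaming_bias} by writing
\[
\inner{p_t, \ell_t} - \ell_t(\pi^*(x_t)) = \inner{p_t, \ell_t - m_t} + \bigl(\inner{p_t, m_t} - m_t(\pi^*(x_t))\bigr) + \bigl(m_t(\pi^*(x_t)) - \ell_t(\pi^*(x_t))\bigr).
\]
Property \eqref{eqn:property_A_t} forces the middle term to be strictly below $-\sigma$, and the first and third terms are each bounded by $\|\ell_t - m_t\|_\infty$ (by H\"older and trivially). So the bias is at most $2\|\ell_t - m_t\|_\infty - \sigma$, and the AM-GM inequality $2x - \sigma \leq x^2/\sigma$ (equivalent to $(x-\sigma)^2 \geq 0$) gives a per-round bound of $\|\ell_t - m_t\|_\infty^2/\sigma$. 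Summing over $t$, the proof reduces to showing $\sum_{t=1}^T \|\ell_t - m_t\|_\infty^2 = \order(M\calE^*)$.

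For the remaining bound, since $m_t(a) = \min_{i\in\calP_t} m_t^i(a)$ pointwise, the action witnessing $\|\ell_t - m_t\|_\infty$ is matched by some active predictor $j_t\in\calP_t$, so $\|\ell_t - m_t\|_\infty^2 \leq \max_{i\in\calP_t}\|\ell_t - m_t^i\|_\infty^2 \leq \sum_{i\in\calP_t}\|\ell_t - m_t^i\|_\infty^2$. Exchanging summation orders, I would argue per predictor: for $i = i^*$ the total is at most $\calE^*$ by definition, and this same budget guarantees $i^* \in \calP_t$ throughout, since the elimination quantity $\sum_t (\ell_t(a_t) - m_t^{i^*}(a_t))^2$ is upper bounded by $\sum_t\|\ell_t - m_t^{i^*}\|_\infty^2 \leq \calE^*$. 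For each other $i$, the elimination rule deterministically forces $\sum_{t: i\in\calP_t}(\ell_t(a_t) - m_t^i(a_t))^2 \leq \calE^* + 1$; I would then lift this to an $\order(\calE^*)$ bound on $\sum_{t:i\in\calP_t}\|\ell_t - m_t^i\|_\infty^2$ via the triangle-inequality reduction $\|\ell_t - m_t^i\|_\infty \leq \|\ell_t - m_t^{i^*}\|_\infty + \|m_t^{i^*} - m_t^i\|_\infty$ combined with a Lemma~\ref{lem:a_t_difference}-style charging argument. Summing across the $M$ predictors gives the target $\order(M\calE^*)$ bound, and dividing by $\sigma$ closes the argument.

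The hard part, as in the proof of Lemma~\ref{lem:a_t_difference}, is precisely this last step of moving from the on-policy squared errors at the sampled action $a_t$ to off-policy infinity-norm errors across $[K]$, because the algorithm explores only within $\calA_t$ and so exerts no direct control on actions outside that set. I expect to handle this by exploiting the fact that $i^*$ is always active, so that the pointwise inequality $m_t \leq m_t^{i^*}$ reduces any off-$\calA_t$ discrepancy either to $\|\ell_t - m_t^{i^*}\|_\infty$ (whose squared sum is already at most $\calE^*$) or to an eliminated-predictor residual whose aggregated budget across all predictors totals $\order(M\calE^*)$.
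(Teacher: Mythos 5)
Your opening steps — computing the per-round bias as $\one[\pi^*(x_t)\notin\calA_t]\bigl(\inner{p_t,\ell_t}-\ell_t(\pi^*(x_t))\bigr)$, the three-term decomposition, using Eq.~\eqref{eqn:property_A_t} to get the $-\sigma$, and closing with AM-GM — are structurally the same as the paper's. The gap is in the quantity you reduce to. You bound both the first term $\inner{p_t,\ell_t-m_t}$ and the third term $m_t(\pi^*(x_t))-\ell_t(\pi^*(x_t))$ by $\|\ell_t-m_t\|_\infty$, which collapses the proof into showing $\sum_t\|\ell_t-m_t\|_\infty^2=\order(M\calE^*)$. That bound is false for this algorithm: the elimination rule only charges a predictor for its error at the \emph{sampled} action, and the algorithm only samples within $\calA_t$. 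A non-best predictor $j$ can therefore under-predict the loss of an action $a$ that is never placed in $\calA_t$ (because some other action has an even lower prediction) and so is never sampled; $j$ stays active forever, yet $\|\ell_t-m_t\|_\infty=\ell_t(a)-m_t^j(a)$ can be $\Omega(1)$ every round even while $\calE^*=0$. Your proposed lift via $\|\ell_t-m_t^i\|_\infty\le\|\ell_t-m_t^{i^*}\|_\infty+\|m_t^{i^*}-m_t^i\|_\infty$ does not repair this, because $\|m_t^{i^*}-m_t^i\|_\infty$ is likewise uncontrolled off-policy — predictors can agree on $\calA_t$ while disagreeing arbitrarily elsewhere.

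The fix is to avoid $\|\ell_t-m_t\|_\infty$ altogether by exploiting which side of the prediction error each term lives on. For the third (over-prediction) term, first pass to $m_t^{i^*}$ via the pointwise inequality $m_t\le m_t^{i^*}$: $m_t(\pi^*(x_t))-\ell_t(\pi^*(x_t))\le m_t^{i^*}(\pi^*(x_t))-\ell_t(\pi^*(x_t))\le\|\ell_t-m_t^{i^*}\|_\infty$, whose squared sum is $\calE^*$ by definition. For the first (under-prediction) term, do \emph{not} upper bound $\inner{p_t,\ell_t-m_t}$ by the off-policy $\sup$; instead keep it as the expectation $\E_{a_t}[\ell_t(a_t)-m_t(a_t)]$ of the on-policy error (equivalently, delay taking expectation over $a_t$ in the first and third pieces, only replacing $m_t(a_t)$ by $\inner{p_t,m_t}$ in the middle piece). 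Then $|\ell_t(a_t)-m_t(a_t)|-\tfrac{\sigma}{2}\le\frac{(\ell_t(a_t)-m_t(a_t))^2}{2\sigma}$ by AM-GM, and $\sum_t(\ell_t(a_t)-m_t(a_t))^2\le M(\calE^*+1)$ deterministically by Lemma~\ref{lem:a_t_difference} — precisely because the elimination rule operates on this on-policy quantity. Both pieces together give the $\order(M\calE^*/\sigma)$ bound, whereas your reduction never can.
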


\begin{proof}
Note that
\[
\E\left[\ellhat_t(\pi^*(x_t)) \right]
= \ell_t(\pi^*(x_t)) + \E\left[ \one[\pi^*(x_t) \notin \calA_t] \left(\ell_t(a_t) - \ell_t(\pi^*(x_t)) \right) \right],
\]
where the second term is bounded as
\begin{align*}
&\E\left[ \one[\pi^*(x_t) \notin \calA_t] \left(\ell_t(a_t) - m_t(a_t) + m_t(a_t) - m_t(\pi^*(x_t)) + m_t^{i*}(\pi^*(x_t)) - \ell_t(\pi^*(x_t)) \right) \right] \tag{$m_t(a) \leq m_t^{i^*}(a)$} \\
&=\E\left[ \one[\pi^*(x_t) \notin \calA_t] \left(\ell_t(a_t) - m_t(a_t) + \inner{p_t, m_t} - m_t(\pi^*(x_t)) + m_t^{i*}(\pi^*(x_t)) - \ell_t(\pi^*(x_t)) \right) \right] \\
&\leq \E\left[|\ell_t(a_t) - m_t(a_t)| + \|\ell_t - m_t^{i*}\|_\infty - \sigma  \right] \tag{by Eq.~\eqref{eqn:property_A_t}} \\
&= \E\left[\left(|\ell_t(a_t) - m_t(a_t)| - \frac{\sigma}{2}\right)+ \left(\|\ell_t - m_t^{i*}\|_\infty - \frac{\sigma}{2}\right)  \right] \\
&\leq \E\left[\frac{(\ell_t(a_t) - m_t(a_t)|)^2}{2\sigma}  + \frac{\|\ell_t - m_t^{i*}\|_\infty^2}{2\sigma}  \right] \tag{AM-GM inequality}.
\end{align*}
Summing over $t$ and using the fact $\sum_{t=1}^T \|\ell_t - m_t^{i*}\|_\infty^2 = \calE^*$ and Lemma~\ref{lem:a_t_difference} complete the proof.
\end{proof}

In the proofs of all the three lemmas above, we have used the following fact:
\begin{lemma}\label{lem:a_t_difference}
Algorithm~\ref{alg:multi_pred_OExp4} ensures
$\sum_{t=1}^T (\ell_t(a_t) - m_t(a_t))^2 \leq M(\calE^*+1)$.
\end{lemma}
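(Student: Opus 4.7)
For each round $t$, let $i_t \in \argmin_{i \in \calP_t} m_t^i(a_t)$, so that by the definition of $m_t$ we have $m_t(a_t) = m_t^{i_t}(a_t)$ and hence
\[
\sum_{t=1}^T (\ell_t(a_t) - m_t(a_t))^2 \;=\; \sum_{i=1}^M \sum_{t : i_t = i} (\ell_t(a_t) - m_t^i(a_t))^2.
\]
The strategy is to bound each inner sum by $\calE^* + 1$ using the budget accounting.

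\textbf{Key step: bounding the contribution of each predictor.} Fix $i \in [M]$. Every round $t$ with $i_t = i$ satisfies $i \in \calP_t$ by construction, so it suffices to show
\[
\sum_{t : i \in \calP_t} (\ell_t(a_t) - m_t^i(a_t))^2 \;\le\; \calE^* + 1.
\]
Let $\tau_i$ denote the last round at which $i \in \calP_t$ (set $\tau_i = T$ if $i$ is never eliminated, otherwise $\tau_i$ is the round after which $i$ drops out). By the update rule, $V^i$ starts at $\calE^*$ and is decremented by $(\ell_t(a_t) - m_t^i(a_t))^2$ exactly on those rounds $t$ with $i \in \calP_t$. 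Since $i \in \calP_{\tau_i}$, the budget after the update at round $\tau_i - 1$ is still nonnegative, giving
\[
\sum_{t \le \tau_i - 1,\, i \in \calP_t} (\ell_t(a_t) - m_t^i(a_t))^2 \;\le\; \calE^*.
\]
Adding the single round $\tau_i$ contributes at most $1$ since $\ell_t, m_t^i \in [0,1]^K$, yielding the claimed bound $\calE^* + 1$.

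\textbf{Conclusion.} Summing the per-predictor bound over $i = 1, \ldots, M$ yields
\[
\sum_{t=1}^T (\ell_t(a_t) - m_t(a_t))^2 \;\le\; M(\calE^* + 1),
\]
as required. I do not foresee any serious obstacle: the argument is purely a potential/budget accounting, and the only mild subtlety is handling the ``overshoot'' on the elimination round, which is absorbed by the $+1$.
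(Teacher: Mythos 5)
Your proof is correct and takes essentially the same approach as the paper: decompose the sum by predictor, then bound each predictor's contribution by $\calE^*+1$ via the budget accounting, with the $+1$ absorbing the overshoot on the elimination round. The only cosmetic difference is that you route the first step through the minimizer $i_t$ (giving an equality, then enlarging $\{t: i_t=i\}$ to $\{t: i\in\calP_t\}$), whereas the paper directly upper-bounds $(\ell_t(a_t)-m_t(a_t))^2$ by $\sum_{i\in\calP_t}(\ell_t(a_t)-m_t^i(a_t))^2$ and swaps the order of summation; both yield the same bound.
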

\begin{proof}
Let $\calT_i = \{t \in T: i \in \calP_t\}$ be the time steps where predictor $i$ is active.
Then
\begin{align*}
\sum_{t=1}^T (\ell_t(a_t) - m_t(a_t))^2
&\leq  \sum_{t=1}^T\sum_{i \in \calP_t} (\ell_t(a_t) - m_t^i(a_t))^2 \\
&= \sum_{i \in [M]} \sum_{t\in \calT_i} (\ell_t(a_t) - m_t^i(a_t))^2
\leq M(\calE^*+1),
\end{align*}
where the last step uses the fact $\sum_{t\in \calT_i} (\ell_t(a_t) - m_t^i(a_t))^2  \leq \calE^* + 1$ since the last term in the summation is bounded by one, while the rest cannot exceed $\calE^*$ because $i$ has not been removed from the active set yet.
\end{proof}

Finally, we are ready to prove Theorem~\ref{thm:multi_pred_OExp4}.

\begin{proof}[of Theorem~\ref{thm:multi_pred_OExp4}]
Combining Lemmas~\ref{lem:multi_pred_step1}, \ref{lem:multi_pred_step2}, and~\ref{lem:multi_pred_step3},
we have
\[
\Reg = \order\left(\frac{\ln N}{\eta}  +  \frac{\eta K^2\calE^*}{\mu} +\frac{\eta K M\calE^*}{\mu} + \sqrt{\mu M\calE^*T} + \mu T\sigma + \mu^2 T +   \frac{M\calE^*}{\sigma} \right).
\]
With $M' = \max\{K, M\}$, setting
\begin{align*}
\mu = \min\left\{\frac{1}{2}, \sqrt{\frac{d}{T}}\right\},
\sigma = \sqrt{\frac{M\calE^*}{\mu T}},
\eta = \sqrt{\frac{\mu \ln N}{KM' \calE^*}},
\end{align*}
gives $\Reg = \order\left(\sqrt{M'\calE^*}(dT)^\frac{1}{4} + \sqrt{dM'\calE^*} + d  \right)$.
\end{proof} 

\section{Omitted Details for Stochastic Environments}
\label{app:iid}

In this section, we provide omitted details for the stochastic case, including proofs for results with known $\calE$ and a single predictor (Section~\ref{app:eps_greedy}), the adaptive version of Algorithm~\ref{alg:eps_greedy} and its analysis when $\calE$ is unknown (Section~\ref{app:adaptive_eps_greedy}),
and the algorithm and analysis for multiple predictors (Section~\ref{app:multi_pred_iid}).

\subsection{Proofs of Lemma~\ref{lem:binary_search} and Theorems~\ref{thm:eps_greedy_weak} and~\ref{thm:eps_greedy}}
\label{app:eps_greedy}

First, we prove Lemma~\ref{lem:binary_search} which certifies the efficiency and (approximate) correctness of the binary search procedure for finding the policy with the smallest Catoni's mean (Algorithm~\ref{alg:binary_search}).

 \begin{proof}[of Lemma~\ref{lem:binary_search}]
 The fact that the algorithm stops after $\log_2 \left(2T\left(\frac{K}{\mu}+1\right)\right) = \order(\ln (KT/\mu))$ iterations is clear due to the initial value of $z_\text{left}$ and $z_\text{right}$, and the precision $1/T$.

 To prove the approximate optimality of the output $\pi_t$,
 note that the algorithm maintains the following loop invariants:
 \[
  \min_{\pi\in\Pi}\sum_{s<t} \psi\left(\alpha\left(\elltil_s(\phi_s(\pi(x_s))) - z_\text{left}\right)\right) \geq 0
 \]
 and
 \[
  \min_{\pi\in\Pi}\sum_{s<t} \psi\left(\alpha\left(\elltil_s(\phi_s(\pi(x_s))) - z_\text{right}\right)\right) \leq 0.
 \]
Therefore, by the monotonicity of $\psi$, all policies have Catoni's mean larger than $z_\text{left}$, and there exists a policy
\[
\argmin_{\pi\in\Pi}\sum_{s<t} \psi\left(\alpha\left(\elltil_s(\phi_s(\pi(x_s))) - z_\text{right}\right)\right)
\]
with Catoni's mean smaller than $z_\text{right}$.
These two facts imply that both $\catoni_\alpha\left(\left\{ \elltil_s(\phi_s(\pi_t(x_s)))\right\}_{s<t}\right)$
and  $\min_{\pi\in\Pi}\catoni_\alpha\left(\left\{ \elltil_s(\phi_s(\pi(x_s)))\right\}_{s<t}\right)$ are between $z_\text{left}$ and $z_\text{right}$,
and are thus $1/T$ away from each other since we have $z_\text{right} - z_\text{left} \leq 1/T$ after the algorithm stops.
 \end{proof}

To prove both Theorem~\ref{thm:eps_greedy_weak} and Theorem~\ref{thm:eps_greedy},
we introduce the following notation.

\begin{definition}
Denote by $\calL(\pi) \triangleq \E_{(x_t,m_t,\ell_t)\sim \calD}[\ell_t(\pi(x_t))]$ the expected loss of policy $\pi$, and by $\overline{\calL}(\pi)\triangleq \E_{(x_t,m_t,\ell_t)\sim \calD}[\ell_t(\phi_t(\pi(x_t)))]$ the expected loss of policy $\pi$ after remapping.
\end{definition}

For both theorems we make use of the following lemmas.

\begin{lemma}\label{lem:eps_greedy_alg_loss}
Algorithm~\ref{alg:eps_greedy} (with either Option I or Option II) ensures
\[
\E\left[\sum_{t=1}^T \ell_t(a_t)\right] \leq \E\left[\sum_{t=1}^T \overline{\calL}(\pi_t) \right] + \mu T\sigma + 2\mu \sqrt{\calE T}.
\]
\end{lemma}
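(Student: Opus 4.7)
\noindent\textbf{Proof proposal for Lemma~\ref{lem:eps_greedy_alg_loss}.}
The plan is to compute the conditional expectation of $\ell_t(a_t)$ given the history and the triple $(x_t,m_t,\ell_t)$, decompose it into an ``exploitation'' piece that matches $\overline{\calL}(\pi_t)$ after taking an outer expectation, plus a pure ``exploration'' piece of size $\order(\mu)$, and finally control the exploration piece using the action-remapping threshold $\sigma$ together with Cauchy--Schwarz.

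First, note that by the definition of $p_t$ in Algorithm~\ref{alg:eps_greedy}, conditionally on everything before round $t$ and on $(x_t,m_t,\ell_t)$ (so in particular $\pi_t$, $\calA_t$ and $\phi_t$ are determined), we have
\begin{align*}
\E_t[\ell_t(a_t)]
&= (1-\mu)\,\ell_t(\phi_t(\pi_t(x_t))) + \frac{\mu}{|\calA_t|}\sum_{a\in\calA_t}\ell_t(a) \\
&= \ell_t(\phi_t(\pi_t(x_t))) + \frac{\mu}{|\calA_t|}\sum_{a\in\calA_t}\bigl(\ell_t(a) - \ell_t(\phi_t(\pi_t(x_t)))\bigr).
\end{align*}
Taking a further expectation over $(x_t,m_t,\ell_t)$ (which in the stochastic setting is independent of $\pi_t$, a function of rounds $1,\dots,t-1$), the first term averages to $\E[\overline{\calL}(\pi_t)]$, and it remains to bound the second.

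For the second (exploration) term, the key observation is that both $a\in\calA_t$ and $\phi_t(\pi_t(x_t))\in\calA_t$, so by the definition of $\calA_t$ in Eq.~\eqref{eqn:mapping} their predicted losses differ by at most $\sigma$. Writing
\[
\ell_t(a) - \ell_t(\phi_t(\pi_t(x_t))) = \bigl(\ell_t(a)-m_t(a)\bigr) + \bigl(m_t(a)-m_t(\phi_t(\pi_t(x_t)))\bigr) + \bigl(m_t(\phi_t(\pi_t(x_t))) - \ell_t(\phi_t(\pi_t(x_t)))\bigr),
\]
we obtain $\ell_t(a) - \ell_t(\phi_t(\pi_t(x_t))) \leq 2\|\ell_t-m_t\|_\infty + \sigma$ for every $a\in\calA_t$. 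Thus the per-round exploration overhead is at most $\mu\sigma + 2\mu\|\ell_t-m_t\|_\infty$.

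Summing over $t=1,\dots,T$ and taking expectations, the $\mu\sigma$ term contributes $\mu T\sigma$ deterministically, while for the remaining piece we use Cauchy--Schwarz and Jensen's inequality:
\[
\E\Bigl[\sum_{t=1}^T \|\ell_t-m_t\|_\infty\Bigr]
\leq \sqrt{T\cdot \E\Bigl[\sum_{t=1}^T\|\ell_t-m_t\|_\infty^2\Bigr]}
= \sqrt{T\calE},
\]
by the definition $\calE = T\,\E_{(x,\ell,m)\sim\calD}[\|\ell-m\|_\infty^2]$. Combining these bounds yields the claimed inequality. No major obstacles are expected; the only subtlety is making sure the conditioning is done correctly so that $\pi_t$ and $(x_t,m_t,\ell_t)$ are independent when we pass from $\ell_t(\phi_t(\pi_t(x_t)))$ to $\overline{\calL}(\pi_t)$, which is a direct consequence of the i.i.d.\ assumption in the stochastic setting.
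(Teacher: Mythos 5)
Your proof is correct and follows essentially the same route as the paper's: condition on the history to expose the $(1-\mu)$/$\mu$ exploitation–exploration split, use the $\sigma$-gap among predicted losses within $\calA_t$ together with a triangle inequality through $m_t$ to bound the exploration overhead by $\mu\sigma + 2\mu\|\ell_t-m_t\|_\infty$, and finish with Cauchy--Schwarz/Jensen to convert $\E[\sum_t\|\ell_t-m_t\|_\infty]$ into $\sqrt{\calE T}$. The only cosmetic difference is that you condition additionally on $(x_t,m_t,\ell_t)$ before averaging it back out, whereas the paper works with the one-shot conditional expectation $\E_t[\cdot]$; the content is identical.
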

\begin{proof}
Denote the conditional expectation given the history up to the beginning of time $t$ by $\E_t[\cdot]$.
By the choice of $a_t$ we have
\begin{align*}
    \mathbb{E}_t\left[\ell_t(a_t)\right]
    &= (1-\mu)\mathbb{E}_t\left[ \ell(\phi_t(\pi_t(x_t)) \right] + \mathbb{E}_t\left[\frac{\mu}{|\calA_t|}\sum_{a\in \calA_t} \ell_t(a) \right] \\
    &= \overline{\calL}(\pi_t) + \mathbb{E}_t\left[\frac{\mu}{|\calA_t|}\sum_{a\in \calA_t} (\ell_t(a) - \ell_t(\phi_t(\pi_t(x_t)))) \right] \\
    &\leq \overline{\calL}(\pi_t)+ \mu\mathbb{E}_t\left[\sup_{a, a'\in\calA_t} |\ell_t(a) - \ell_t(a')| \right] \\
    &\leq \overline{\calL}(\pi_t)+ \mu\mathbb{E}_t\left[\sup_{a, a'\in\calA_t} |\ell_t(a) - m_t(a)| + |m_t(a)-m_t(a')| + |m_t(a')-\ell_t(a')| \right] \\
    &\leq \overline{\calL}(\pi_t)+ \mu\mathbb{E}_t\left[ \sigma + 2\|\ell_t-m_t\|_\infty \right] \tag{by the definition of $\calA_t$}\\
    &= \overline{\calL}(\pi_t)+ \mu\sigma + 2\mu\mathbb{E}_t\left[\|\ell_t-m_t\|_\infty \right].
\end{align*}
Summing over $T$ and applying Cauchy-Schwarz inequality:
\[
\E\left[\sum_{t=1}^T \|\ell_t-m_t\|_\infty  \right] \leq \sqrt{T \E\left[\sum_{t=1}^T \|\ell_t-m_t\|_\infty^2  \right]} = \sqrt{\calE T}
\]
finish the proof.
\end{proof}

\begin{lemma}\label{lem:eps_greedy_bias}
Algorithm~\ref{alg:eps_greedy} (with either Option I or Option II) ensures
\[
T(\overline{\calL}(\pi^*)-\calL(\pi^*))  \leq \frac{\calE}{\sigma}.
\]
\end{lemma}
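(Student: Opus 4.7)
The plan is to mirror the bias analysis from the adversarial proof (specifically the calculation leading to Eq.~\eqref{eqn:renaming_bias}), but now applied in expectation under the stochastic distribution $\calD$. Since $(x_t, m_t, \ell_t)$ are i.i.d.\ draws from $\calD$ and $\phi_t$ is a deterministic function of $m_t$ (and the fixed threshold $\sigma$) via Eq.~\eqref{eqn:mapping}, we can rewrite
\[
T\bigl(\overline{\calL}(\pi^*) - \calL(\pi^*)\bigr) = \sum_{t=1}^T \E\bigl[\ell_t(\phi_t(\pi^*(x_t))) - \ell_t(\pi^*(x_t))\bigr],
\]
and bound each summand by conditioning on the realization of $(x_t, m_t, \ell_t)$.

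For a fixed realization, I would split into two cases. If $\pi^*(x_t) \in \calA_t$, then $\phi_t(\pi^*(x_t)) = \pi^*(x_t)$ and the summand vanishes. If $\pi^*(x_t) \notin \calA_t$, then $\phi_t(\pi^*(x_t)) = a_t^*$ and by the definition of $\calA_t$ we have $m_t(a_t^*) \leq m_t(\pi^*(x_t)) - \sigma$. Adding and subtracting $m_t(a_t^*)$ and $m_t(\pi^*(x_t))$ yields
\[
\ell_t(a_t^*) - \ell_t(\pi^*(x_t)) \leq \bigl(\ell_t(a_t^*) - m_t(a_t^*)\bigr) + \bigl(m_t(a_t^*) - m_t(\pi^*(x_t))\bigr) + \bigl(m_t(\pi^*(x_t)) - \ell_t(\pi^*(x_t))\bigr) \leq 2\|\ell_t - m_t\|_\infty - \sigma.
\]
Applying the AM--GM inequality $2x - \sigma \leq x^2/\sigma$ (valid since $(x-\sigma)^2 \geq 0$) with $x = \|\ell_t - m_t\|_\infty$ shows that in both cases the summand is at most $\|\ell_t - m_t\|_\infty^2 / \sigma$.

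Summing over $t$ and taking expectations gives
\[
T\bigl(\overline{\calL}(\pi^*) - \calL(\pi^*)\bigr) \leq \frac{1}{\sigma} \sum_{t=1}^T \E\bigl[\|\ell_t - m_t\|_\infty^2\bigr] = \frac{\calE}{\sigma},
\]
where the last equality uses the stochastic definition $\calE = T\, \E_{(x,\ell,m)\sim\calD}[\|\ell - m\|_\infty^2]$. There is no real obstacle here: the argument is pointwise per realization of $(x_t, m_t, \ell_t)$, does not depend on how $\pi_t$ or $a_t$ are chosen, and therefore works identically under both Option I and Option II of Algorithm~\ref{alg:eps_greedy}. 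The only thing to verify carefully is that the definition of $\overline{\calL}$ indeed lets $\phi_t$ depend on the same $m_t$ as the loss $\ell_t$, which is exactly how it was stated.
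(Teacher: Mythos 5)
Your proof is correct and follows essentially the same route as the paper's: condition on the realization, split on whether $\pi^*(x_t) \in \calA_t$, add and subtract $m_t(a_t^*)$ and $m_t(\pi^*(x_t))$, bound by $2\|\ell_t-m_t\|_\infty - \sigma$, and finish with AM--GM before summing and taking expectations. The small justification you add for the AM--GM step ($(x-\sigma)^2 \ge 0$) is accurate and matches the paper's implicit reasoning.
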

\begin{proof}
The proof is exactly the same as the adversarial case ({\it cf.} Eq.~\eqref{eqn:renaming_bias}).
First rewrite $\overline{\calL}(\pi^*)-\calL(\pi^*)$ as $\E\left[\ell_t(\phi_t(\pi^*(x_t))) - \ell_t(\pi^*(x_t))\right]$.
When $\pi^*(x_t) \neq \phi_t(\pi^*(x_t))$ we have $\phi_t(\pi^*(x_t)) = a_t^*$, $m_t(a_t^*) \leq m_t(\pi^*(x_t))  - \sigma$, and
\begin{align*}
&\ell_t(\phi_t(\pi^*(x_t))) - \ell_t(\pi^*(x_t)) \\
&= \ell_t(a_t^*)  - m_t(a_t^*) + m_t(a_t^*) - m_t(\pi^*(x_t)) + m_t(\pi^*(x_t)) - \ell_t(\pi^*(x_t)), \\
&\leq 2\|\ell_t - m_t\|_\infty - \sigma \leq \frac{\|\ell_t - m_t\|_\infty^2}{\sigma},
\end{align*}
where the last step is by the AM-GM inequality.
When $\pi^*(x_t) = \phi_t(\pi^*(x_t))$, the above holds trivially.
Plugging the definition of $\calE$ then finishes the proof.
\end{proof}

We are now ready to prove Theorems~\ref{thm:eps_greedy_weak} and~\ref{thm:eps_greedy}, using different concentrations according to the two different ways of calculating $\pi_t$. \\

\begin{proof}[of Theorem~\ref{thm:eps_greedy_weak}]
First, for any fix $\pi$ and $t$, we invoke Lemma~\ref{lemma:  freedman} with $X_s =  \elltil_s(\phi_s(\pi(x_s))) - \overline{\calL}(\pi) + \E_{(x,\ell,m)\sim\calD}[\min_a m(a)]$ for $s = 1, \ldots, t$, $b = \order(\frac{K}{\mu})$, and $V_t = \order(\frac{K\calE t}{\mu T} + \sigma^2 t)$ (see Eq.~\eqref{eqn: variance calculation}).
Together with a union bound over all $t$ and $\pi$, we have with probability at least $1-1/T$,
\begin{equation}\label{eqn:ell_tilde_concentration}
\begin{split}
\left|
\frac{1}{t}\sum_{s=1}^t \elltil_s(\phi_s(\pi(x_s))) - \overline{\calL}(\pi) + \E_{(x,\ell,m)\sim\calD}[\min_a m(a)]
\right| \\
= \order\left( \sqrt{\left(\frac{K\calE}{\mu T t} + \frac{\sigma^2}{t} \right) \log(NT)} + \frac{K\log(NT)}{\mu t} \right)
\end{split}
\end{equation}
for all $t \in [T]$ and $\pi \in \Pi$.
Therefore, we have
\begin{align*}
        &\overline{\calL}(\pi_t) \\
        &\leq \frac{1}{t}\sum_{s=1}^t \elltil_s(\phi_s(\pi_t(x_s))) + \E[\min_a m(a)]  + \order\left( \sqrt{\left(\frac{K\calE}{\mu T t} + \frac{\sigma^2}{t} \right) \log(NT)} + \frac{K\log(NT)}{\mu t} \right)   \tag{by Eq.~\eqref{eqn:ell_tilde_concentration}}  \\
        &\leq \frac{1}{t}\sum_{s=1}^t \elltil_s(\phi_s(\pi^*(x_s))) + \E[\min_a m(a)]  + \order\left( \sqrt{\left(\frac{K\calE}{\mu T t} + \frac{\sigma^2}{t} \right) \log(NT)} + \frac{K\log(NT)}{\mu t} \right)   \tag{by the optimality of $\pi_t$}  \\
        &\leq \overline{\calL}(\pi^*) + \order\left( \sqrt{\left(\frac{K\calE}{\mu T t} + \frac{\sigma^2}{t} \right) \log(NT)} + \frac{K\log(NT)}{\mu t} \right).  \tag{by Eq.~\eqref{eqn:ell_tilde_concentration}}
\end{align*}
Combining Lemma~\ref{lem:eps_greedy_alg_loss}, the inequality above, and Lemma~\ref{lem:eps_greedy_bias}, we arrive at
\begin{align}
\E\left[\sum_{t=1}^T \ell_t(a_t)\right] &\leq \E\left[\sum_{t=1}^T \overline{\calL}(\pi_t) \right] + \mu T\sigma + 2\mu \sqrt{\calE T} \notag\\
&\leq T\overline{\calL}(\pi^*) + \order\left(\mu T\sigma + \mu \sqrt{\calE T} + \sum_{t=1}^T \sqrt{\left(\frac{K\calE}{\mu T t} + \frac{\sigma^2}{t} \right) \log(NT)} + \frac{K\log(NT)}{\mu t} \right) \notag\\
&= T\overline{\calL}(\pi^*) + \otil\left(\mu T\sigma + \mu \sqrt{\calE T} + \sqrt{\frac{d\calE}{\mu}} + \sigma\sqrt{dT}+ \frac{d}{\mu} \right) \label{eqn:useful_for_adaptive_version}\\
&\leq \E\left[\sum_{t=1}^T \ell_t(\pi^*(x_t))\right] + \otil\left(\mu T\sigma + \mu \sqrt{\calE T} + \sqrt{\frac{d\calE}{\mu}} + \sigma\sqrt{dT}+ \frac{d}{\mu} + \frac{\calE}{\sigma}\right), \notag
\end{align}
which finishes the proof.
\end{proof}

\begin{proof}[of Theorem~\ref{thm:eps_greedy}]
First, for any fix $\pi$ and $t$, we invoke Lemma~\ref{lemma: catoni concentration} with $X_s =  \elltil_s(\phi_s(\pi(x_s)))$ for $s = 1, \ldots, t$, $\mu_1 = \cdots = \mu_t = \mu = \overline{\calL}(\pi) - \E_{(x,\ell,m)\sim\calD}[\min_a m(a)]$, and $V = \order(\frac{K\calE}{\mu} + \sigma^2 t)$ (see Eq.~\eqref{eqn: variance calculation} for the variance calculation).
Together with a union bound over all $t$ and $\pi$, and the value of $\alpha$ specified in Algorithm~\ref{alg:eps_greedy}, we have with probability at least $1-2/T$,
\begin{equation}\label{eqn:ell_tilde_concentration2}
\begin{split}
\left|
\catoni_\alpha\big(\big\{ \elltil_s(\phi_s(\pi(x_s)))\big\}_{s\leq t}\big) - \overline{\calL}(\pi) + \E_{(x,\ell,m)\sim\calD}[\min_a m(a)]
\right| \\
= \frac{1}{t}\left(\alpha V + \frac{2\log(NT^2)}{\alpha}\right)
= \order\left( \sqrt{\left(\frac{K\calE}{\mu t^2} + \frac{\sigma^2}{t} \right) \log(NT)} \right)
\end{split}
\end{equation}
for all $t \geq \alpha^2 V + 2\log(NT^2) = 4\log(NT^2)$ and $\pi \in \Pi$.
Therefore, we have for $t \geq 4\ln(NT^2)$,
\begin{align*}
        &\overline{\calL}(\pi_t) \\
        &\leq \catoni_\alpha\big(\big\{ \elltil_s(\phi_s(\pi_t(x_s)))\big\}_{s\leq t}\big) + \E[\min_a m(a)]  + \order\left( \sqrt{\left(\frac{K\calE}{\mu t^2} + \frac{\sigma^2}{t} \right) \log(NT)} \right)   \tag{by Eq.~\eqref{eqn:ell_tilde_concentration2}}  \\
        &\leq \catoni_\alpha\big(\big\{ \elltil_s(\phi_s(\pi^*(x_s)))\big\}_{s\leq t}\big) + \E[\min_a m(a)]  + \order\left( \sqrt{\left(\frac{K\calE}{\mu t^2} + \frac{\sigma^2}{t} \right) \log(NT)} + \frac{1}{T}\right)   \tag{by Lemma~\ref{lem:binary_search}}  \\
        &\leq \overline{\calL}(\pi^*) + \order\left( \sqrt{\left(\frac{K\calE}{\mu t^2} + \frac{\sigma^2}{t} \right) \log(NT)} + \frac{1}{T} \right).  \tag{by Eq.~\eqref{eqn:ell_tilde_concentration2}}
\end{align*}

Combining Lemma~\ref{lem:eps_greedy_alg_loss}, the inequality above, and Lemma~\ref{lem:eps_greedy_bias}, we arrive at
\begin{align*}
\E\left[\sum_{t=1}^T \ell_t(a_t)\right] &\leq \E\left[\sum_{t=1}^T \overline{\calL}(\pi_t) \right] + \mu T\sigma + 2\mu \sqrt{\calE T} \\
&\leq T\overline{\calL}(\pi^*) + \order\left(4\ln(NT^2) + \mu T\sigma + \mu \sqrt{\calE T} + \sum_{t=1}^T \sqrt{\left(\frac{K\calE}{\mu t^2} + \frac{\sigma^2}{t} \right) \log(NT)} \right) \\
&= T\overline{\calL}(\pi^*) + \otil\left(\mu T\sigma + \mu \sqrt{\calE T} + \sqrt{\frac{d\calE}{\mu}} + \sigma\sqrt{dT}  \right) \\
&\leq \E\left[\sum_{t=1}^T \ell_t(\pi^*(x_t))\right] + \otil\left(\mu T\sigma + \mu \sqrt{\calE T} + \sqrt{\frac{d\calE}{\mu}} + \sigma\sqrt{dT}+ \frac{\calE}{\sigma}\right),
\end{align*}
which finishes the proof.
\end{proof}

\subsection{Adaptive Version of Algorithm~\ref{alg:eps_greedy}}
\label{app:adaptive_eps_greedy}

The pseudocode of the adaptive version of Algorithm~\ref{alg:eps_greedy} in shown in Algorithm~\ref{alg:explore then epsilon greedy}.
To prove its regret guarantee, we make use of the following useful lemmas.
The first one shows the concentration of $\alphahat_i$ around $\alpha_i = \frac{1}{K}\sum_{a=1}^K \Pr\left[ |\ell_t(a)-m_t(a)|\in (2^{-i-1}, 2^{-i}] \right]$.

\begin{lemma}
    \label{lemma: coeff lower bound}
    Algorithm~\ref{alg:explore then epsilon greedy} ensures:
    \begin{itemize}
    \item If $\alpha_i > \frac{360\log T}{B}$, then with probability at least $1-1/T$,
    \begin{align}
        \left[ \alphahat_i - \frac{30\log T}{B} \right]_+ \geq \frac{1}{3} \alpha_i;  \label{eqn: lemma 21 1}
    \end{align}
    \item
    With probability $1-1/T$,
    \begin{align}
        \left[ \alphahat_i - \frac{30\log T}{B} \right]_+ \leq \frac{3}{2}\alpha_i.  \label{eqn: lemma 21 2}
    \end{align}
    \end{itemize}
\end{lemma}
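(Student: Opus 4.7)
The plan is to reduce both inequalities to a standard Bernstein-type concentration for i.i.d.\ Bernoulli random variables. During the initial pure-exploration phase of Algorithm~\ref{alg:explore then epsilon greedy}, the learner samples $a_t$ uniformly from $[K]$ independently of $(x_t,\ell_t,m_t) \sim \calD$, so the indicators
\[
X_t = \one\!\left[|\ell_t(a_t)-m_t(a_t)| \in (2^{-i-1}, 2^{-i}]\right], \quad t=1,\ldots, B,
\]
are i.i.d.\ Bernoulli with mean exactly $\alpha_i$ (by averaging over the uniform draw of $a_t$) and variance at most $\alpha_i$. Thus $\alphahat_i = \tfrac{1}{B}\sum_{t=1}^B X_t$ is their empirical mean.

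I then apply Freedman's inequality (Lemma~\ref{lemma: freedman}) to $X_t - \alpha_i$ and to $\alpha_i - X_t$ with $b=1$, $V = B\alpha_i$, and failure probability $1/(2T)$ each, followed by a union bound. This yields, with probability at least $1-1/T$,
\[
|\alphahat_i - \alpha_i| \leq 2\sqrt{\tfrac{\alpha_i \log T}{B}} + \tfrac{\log T}{B}
\]
(absorbing $\log(2T)$ into a constant multiple of $\log T$). Next I split the cross term via AM--GM, $2\sqrt{ab} \leq \epsilon a + b/\epsilon$, with two different choices: $\epsilon=1/3$ for the lower-tail application needed in part~(i), giving $\alphahat_i \geq \tfrac{2}{3}\alpha_i - \order(\log T/B)$; and $\epsilon=1/2$ for the upper-tail application needed in part~(ii), giving $\alphahat_i \leq \tfrac{3}{2}\alpha_i + \order(\log T/B)$.

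For part~(i), the hypothesis $\alpha_i > 360\log T/B$ is chosen so that $\tfrac{2}{3}\alpha_i - \order(\log T/B) - 30\log T/B \geq \tfrac{1}{3}\alpha_i$; moreover this lower bound is strictly positive, hence the $[\cdot]_+$ operator has no effect on the conclusion. For part~(ii), I split into two cases: if $\alphahat_i \leq 30\log T/B$ the bound is trivial since the left-hand side equals $0$; otherwise $\alphahat_i - 30\log T/B \leq \tfrac{3}{2}\alpha_i + \order(\log T/B) - 30\log T/B \leq \tfrac{3}{2}\alpha_i$, since the $\order(\log T/B)$ slack produced by AM--GM carries a small absolute constant that is comfortably absorbed by the explicit $30\log T/B$ subtraction.

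The only genuine step is the Bernstein-type deviation bound; the remainder is numerical bookkeeping to align the AM--GM constants ($1/3$ and $1/2$) with the explicit shifts ($30$ and the hypothesis constant $360$) so that the chain of inequalities closes. I do not anticipate any conceptual obstacle beyond this; the factors of $30$ and $360$ in the statement are precisely the slack needed to make these elementary manipulations go through.
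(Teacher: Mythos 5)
Your proposal is correct and matches the paper's own proof: both apply Freedman's (Bernstein-type) inequality to the i.i.d.\ Bernoulli indicators $\one[|\ell_t(a_t)-m_t(a_t)|\in(2^{-i-1},2^{-i}]]$ collected during the pure-exploration phase, then AM--GM to absorb the $\sqrt{\alpha_i\log T/B}$ cross term, and finish with the case analysis on the $[\cdot]_+$ clipping under the hypothesis $\alpha_i > 360\log T/B$. The only cosmetic difference is that you run AM--GM twice with $\epsilon=1/3$ and $\epsilon=1/2$, whereas the paper uses a single split $|\alphahat_i-\alpha_i|\le \tfrac{\alpha_i}{2}+\tfrac{30\log T}{B}$ and reads off both directions from that one inequality.
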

\begin{proof}
    Clearly, $\E[\alphahat_i]=\alpha_i$. By Freedman's inequality (Lemma~\ref{lemma: freedman}), with probability $1-\frac{1}{T}$,
    \begin{align*}
        |\alphahat_i-\alpha_i|
        &\leq 2\sqrt{\frac{\alpha_i \log T}{B}} + \frac{\log T}{B}\\
        &\leq \frac{\alpha_i}{2} + \frac{30\log T}{B},  \tag{AM-GM inequality}
    \end{align*}
    implying both $\widehat{\alpha}_i \leq \frac{3}{2}\alpha_i + \frac{30 \log T}{B}$ and  $\frac{\alpha_i}{2}\leq \widehat{\alpha}_i + \frac{30\log T}{B}$. The former implies Eq.~\eqref{eqn: lemma 21 2}.
    Rearranging the latter gives $ \alphahat_i - \frac{30\log T}{B} \geq \frac{\alpha_i}{2} - \frac{60\log T}{B}$. If $\alpha_i > \frac{360\log T}{B}$, then $\frac{\alpha_i}{2} - \frac{60\log T}{B}$  can further be lower bounded by $\frac{\alpha_i}{2} - \frac{\alpha_i}{6} = \frac{\alpha_i}{3}$, thus proving Eq.~\eqref{eqn: lemma 21 1}.
\end{proof}

\begin{algorithm}[t]
    \caption{\greedyVAR: $\epsilon$-Greedy with Variance-adaptivity and Action Remapping}
    \label{alg:explore then epsilon greedy}
    \For{$t=1, \ldots, B$}{
        Draw $a_t\sim \text{Uniform}([K])$.
    }
    Let
    \begin{align*}
        \alphahat_i &= \frac{1}{B}\sum_{t=1}^B \one\left[|\ell_t(a_t)-m_t(a_t)|\in (2^{-i-1}, 2^{-i}]\right],  \\
        \calEhat&=T\sum_{i=0}^{\lceil \log_2 T \rceil} \left[ \alphahat_i - \frac{30\log T}{B} \right]_+ 2^{-2i}.
    \end{align*}
    Run Algorithm~\ref{alg:eps_greedy} for the remaining rounds with Option I,  $\sigma =  \sqrt{\calEhat}(dT)^{-\frac{1}{3}}$, and $\mu=\min\big\{d^{\frac{2}{3}}/T^{\frac{1}{3}},1\big\}$.
\end{algorithm}

The next lemma shows that $\calEhat$ is essentially an underestimator of $\calE$.
\begin{lemma}
    \label{lem: V' < V}
    With probability $1-\frac{1}{T}$, $\calEhat\leq 6\calE$.
\end{lemma}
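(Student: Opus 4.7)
}
The plan is to combine the upper tail bound from Lemma~\ref{lemma: coeff lower bound} (Eq.~\eqref{eqn: lemma 21 2}) with a purely deterministic argument that relates $\sum_i \alpha_i 2^{-2i}$ to $\E[\|\ell - m\|_\infty^2] = \calE/T$. Concretely, I would first apply Lemma~\ref{lemma: coeff lower bound} together with a union bound over $i \in \{0, 1, \ldots, \lceil \log_2 T\rceil\}$ (re-inflating the deviation constant inside Lemma~\ref{lemma: coeff lower bound} by a $\log T$ factor if needed, since the number of bins is only logarithmic in $T$) so that with probability at least $1 - 1/T$ we have, simultaneously for every $i$,
\begin{equation*}
\left[ \alphahat_i - \frac{30\log T}{B}\right]_+ \;\leq\; \tfrac{3}{2}\,\alpha_i.
\end{equation*}
Plugging this into the definition of $\calEhat$ immediately gives $\calEhat \leq \tfrac{3}{2} T \sum_i \alpha_i 2^{-2i}$, reducing the lemma to showing $\sum_i \alpha_i 2^{-2i} \leq 4\,\E[\|\ell-m\|_\infty^2]$.

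The remaining deterministic inequality follows from the exponential-bin structure. Unpacking $\alpha_i = \frac{1}{K}\sum_{a=1}^K \Pr[|\ell(a)-m(a)| \in (2^{-i-1}, 2^{-i}]]$ and exchanging the sums, I would observe that whenever $|\ell(a)-m(a)|$ falls in the bin $(2^{-i-1}, 2^{-i}]$, we have $(\ell(a)-m(a))^2 > 2^{-2i-2}$, i.e., $2^{-2i} < 4(\ell(a)-m(a))^2$. Summing over $i$ (the bins are disjoint, and values smaller than $2^{-\lceil \log_2 T\rceil - 1}$ simply do not contribute, which only helps), this yields
\begin{equation*}
\sum_{i=0}^{\lceil \log_2 T\rceil} \alpha_i 2^{-2i}
\;\leq\; \frac{4}{K}\sum_{a=1}^K \E\!\left[(\ell(a)-m(a))^2\right]
\;\leq\; 4\,\E\!\left[\|\ell-m\|_\infty^2\right]
\;=\; \frac{4\calE}{T},
\end{equation*}
where the last inequality uses that the average over $a$ is bounded by the maximum. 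Combining the two displays gives $\calEhat \leq \tfrac{3}{2} \cdot T \cdot \tfrac{4\calE}{T} = 6\calE$, as claimed.

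There is no substantive obstacle here; the only subtlety is the union bound over the $O(\log T)$ bins when invoking Lemma~\ref{lemma: coeff lower bound}, which is handled by either noting that the constant $30$ inside the deviation threshold is loose enough to absorb an extra $\log$ factor, or by redoing the Freedman step in the proof of Lemma~\ref{lemma: coeff lower bound} with confidence $1/T^2$ instead of $1/T$ so that a union bound over $i$ still yields probability at least $1 - 1/T$.
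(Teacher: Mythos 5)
Your proposal matches the paper's proof essentially step for step: apply the upper-tail direction of Lemma~\ref{lemma: coeff lower bound} to bound each $[\alphahat_i - 30\log T/B]_+$ by $\tfrac{3}{2}\alpha_i$, then use the exponential-bin structure (each bin $(2^{-i-1},2^{-i}]$ satisfies $2^{-2i-2}\leq(\ell(a)-m(a))^2$) to deterministically bound $\sum_i \alpha_i 2^{-2i-2}$ by $\tfrac{1}{K}\E[\sum_a(\ell(a)-m(a))^2] \leq \E[\|\ell-m\|_\infty^2]=\calE/T$, giving $\calEhat\leq 6\calE$. Your explicit remark about the union bound over the $O(\log T)$ bins is a reasonable point of care that the paper's proof glosses over (Lemma~\ref{lemma: coeff lower bound} as stated holds per fixed $i$), and either of your two fixes — absorbing the extra $\log$ into the loose constant $30$, or strengthening the per-bin confidence to $1/T^2$ — closes it without changing the conclusion.
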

\begin{proof}
    By Lemma~\ref{lemma: coeff lower bound} and the definition of $\calEhat$, with probability $1-\frac{1}{T}$ we have
    \begin{align*}
        \calEhat &\leq T\sum_{i=0}^{\lceil \log_2 T \rceil}\frac{3}{2}\alpha_i 2^{-2i} = 6 T\sum_{i=0}^{\lceil \log_2 T \rceil}\alpha_i 2^{-2i-2} \\
        &= 6T\sum_{i=0}^{\lceil \log_2 T \rceil}  \E\left[\frac{1}{K}\sum_{a=1}^K \one\left[|\ell_t(a)-m_t(a)| \in (2^{-i-1}, 2^{-i}]\right]\right]2^{-2i-2} \\
        &= 6T\E\left[\frac{1}{K}\sum_{a=1}^K \sum_{i=0}^{\lceil \log_2 T \rceil} \one\left[|\ell_t(a)-m_t(a)| \in (2^{-i-1}, 2^{-i}]\right]2^{-2i-2} \right] \\
        &\leq 6T\E\left[\frac{1}{K}\sum_{a=1}^K (\ell_t(a)-m_t(a))^2\right] \\
        &\leq 6T \E\left[\|\ell_t-m_t\|_\infty^2\right]=6\calE.
    \end{align*}
\end{proof}

The final lemma analyzes the bias due to remapping with the new value of $\sigma$, which replaces the role of Lemma~\ref{lem:eps_greedy_bias} when analyzing Algorithm~\ref{alg:explore then epsilon greedy}.

\begin{lemma}
\label{lemma: adaptive algorithm improved error term}
Algorithm~\ref{alg:explore then epsilon greedy} ensures:
    \label{lem: explore then epsilon error term}
    \begin{align*}
       (T-B)(\overline{\calL}(\pi^*)-\calL(\pi^*))
       = \otil\left(K^2\sqrt{\calE}(dT)^{\frac{1}{3}} + K\sqrt{\frac{\calE T}{B}}\right).
    \end{align*}
\end{lemma}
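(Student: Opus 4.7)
My plan is to condition on a high-probability event $\mathcal{G}$ under which both conclusions of Lemma~\ref{lemma: coeff lower bound} and Lemma~\ref{lem: V' < V} hold simultaneously; a union bound gives $\Pr[\mathcal{G}^c] = O(1/T)$, and since the per-round bias is at most $1$ the complement contributes only $O(1)$ to the expected bias, absorbed by $\otil(\cdot)$. On $\mathcal{G}$, following the same first step as in the proof of Lemma~\ref{lem:eps_greedy_bias}, the per-round bias admits the bound
\[
\overline{\calL}(\pi^*) - \calL(\pi^*) \leq \E\big[[2\|\ell_t - m_t\|_\infty - \sigma]_+\big] \leq \sum_{a=1}^K \E\big[[2|\ell_t(a) - m_t(a)| - \sigma]_+\big].
\]
Partitioning each coordinate into the exponential buckets $(2^{-i-1}, 2^{-i}]$ used by Algorithm~\ref{alg:explore then epsilon greedy} and recognizing the resulting expression via the histogram coefficients $\alpha_i$, this reduces (up to an $O(K/T)$ tail contribution from $|\ell(a) - m(a)| \leq 1/T$) to bounding
\[
(T-B)(\overline{\calL}(\pi^*) - \calL(\pi^*)) \leq TK \sum_{i=0}^{\lceil \log_2 T \rceil} \alpha_i \cdot [2 \cdot 2^{-i} - \sigma]_+.
\]

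I would then split the indices into ``heavy'' bins $I = \{i : \alpha_i > 360\log T / B\}$ and ``light'' bins $\bar I$. For $i \in I$, use AM-GM in the form $[y - \sigma]_+ \leq y^2/(4\sigma)$ to get $[2 \cdot 2^{-i} - \sigma]_+ \leq 2^{-2i}/\sigma$, and the first conclusion of Lemma~\ref{lemma: coeff lower bound} to replace $\alpha_i$ by $3[\alphahat_i - 30\log T/B]_+$. Telescoping against the definition of $\calEhat$,
\[
TK \sum_{i \in I} \alpha_i \cdot [2 \cdot 2^{-i} - \sigma]_+ \leq \frac{3K}{\sigma} \cdot T\sum_{i} \big[\alphahat_i - \tfrac{30\log T}{B}\big]_+ 2^{-2i} = \frac{3K\calEhat}{\sigma} = 3K\sqrt{\calEhat}(dT)^{1/3},
\]
which by Lemma~\ref{lem: V' < V} is at most $O(K\sqrt{\calE}(dT)^{1/3})$, comfortably within the first target term.

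For $i \in \bar I$ use instead the crude bound $[2\cdot 2^{-i} - \sigma]_+ \leq 2 \cdot 2^{-i}$ together with Cauchy-Schwarz:
\[
\sum_{i \in \bar I} \alpha_i \cdot 2^{-i} \leq \sqrt{\sum_{i \in \bar I} \alpha_i}\; \sqrt{\sum_{i \in \bar I} \alpha_i\, 2^{-2i}} \leq \sqrt{(\lceil \log_2 T\rceil + 1) \cdot \tfrac{360\log T}{B}}\cdot \sqrt{\tfrac{4\calE}{T}},
\]
where the first factor uses $\alpha_i \leq 360\log T/B$ on $\bar I$, and the second factor uses the intermediate inequality $\sum_i \alpha_i 2^{-2i} \leq 4\calE/T$ that is implicit in the proof of Lemma~\ref{lem: V' < V} (divide the displayed string of inequalities there by $3T/2$). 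Multiplying by $2TK$ gives the $\bar I$ contribution as $O(K\log T\cdot \sqrt{\calE T/B}) = \otil(K\sqrt{\calE T/B})$, matching the second target term, and adding the two pieces completes the proof.

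The main obstacle is precisely this heavy/light split: because $\sigma = \sqrt{\calEhat}(dT)^{-1/3}$ depends on the underestimator $\calEhat$, which deliberately subtracts a deviation term from the empirical histogram and may be much smaller than $\calE$, the naive bound $\calE/\sigma$ used in Lemma~\ref{lem:eps_greedy_bias} is useless. Only the heavy bins admit good control in terms of $\calEhat$; the light bins have total probability mass only $O(\log^2 T / B)$, which is exactly what the Cauchy-Schwarz step exploits to trade the missing $1/\sigma$ factor for a factor of $\sqrt{1/B}$, yielding the mild $\sqrt{T/B}$ penalty that appears in the final bound.
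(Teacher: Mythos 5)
Your proof is correct and follows essentially the same architecture as the paper's: reduce the per-round remapping bias to $[2\|\ell_t-m_t\|_\infty-\sigma]_+$, bucketize the deviations into exponential bins, split into heavy bins (where $\alpha_i > 360\log T/B$ so Lemma~\ref{lemma: coeff lower bound} lets you replace $\alpha_i$ by a multiple of the underestimate $[\alphahat_i - 30\log T/B]_+$, telescoping to $\calEhat/\sigma$) and light bins (where you trade the missing $1/\sigma$ for $\sqrt{1/B}$ via Cauchy-Schwarz and the crude mass bound $\alpha_i\le 360\log T/B$). The auxiliary facts you invoke — the $\le 6\calE$ bound from Lemma~\ref{lem: V' < V}, and the intermediate $\sum_i\alpha_i 2^{-2i}\le 4\calE/T$ from its proof — are indeed available, and the handling of the failure event and the $O(K/T)$ sub-$1/T$ tail is fine.

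The one place you genuinely depart from the paper, and it is an improvement, is the very first step: you use
$[2\max_a|\ell(a)-m(a)|-\sigma]_+ \le \sum_a [2|\ell(a)-m(a)|-\sigma]_+$,
whereas the paper passes through $\|\ell-m\|_\infty\le\|\ell-m\|_1$ before subtracting a single $\sigma$ and then applies AM-GM and Cauchy-Schwarz to the whole sum over bins. Your per-arm subtraction of $\sigma$ buys you a linear-in-$K$ bound for the heavy-bin term, $\otil(K\sqrt{\calE}(dT)^{1/3})$, versus the paper's $K^2$ from squaring $K\sum_i 2^{-i}F_t(i)$. Since $K\le K^2$, this certainly proves the stated lemma (and would propagate to slightly improve the $K$-dependence in Theorem~\ref{thm:adaptive_eps_greedy}).
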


\begin{proof}
First we bound $(T-B)(\overline{\calL}(\pi^*)-\calL(\pi^*))$ by
$\E\left[ \sum_{t=B+1}^T \left(2\|\ell_t-m_t\|_\infty - \sigma\right) \right]$, following the exact same argument as in the proof of Lemma~\ref{lem:eps_greedy_bias}.
We then further bound the latter by
    \begin{align*}
        &\E\left[ \sum_{t=B+1}^T \left(2\|\ell_t-m_t\|_1 - \sigma\right) \right] \\
        &\leq \E\left[ \sum_{t=B+1}^T \left(2\sum_{a=1}^K \left(\sum_{i=0}^{\lceil \log_2 T\rceil} 2^{-i}\one[|\ell_t(a)-m_t(a)| \in (2^{-i-1}, 2^{-i}]] + \order\left(\frac{1}{T}\right) \right)  - \sigma \right)\right]          \tag{the $\order\left(\frac{1}{T}\right)$ term incurs when all indicators are zero} \\
        &= \E\left[ \sum_{t=B+1}^T \left( 2K \sum_{i=0}^{\lceil \log_2 T \rceil} 2^{-i} F_t(i)  - \sigma \right)\right] + \order(K),
   \end{align*}
   where we define $F_t(i) \triangleq \frac{1}{K}\sum_{a=1}^K \one[|\ell_t(a)-m_t(a)| \in (2^{-i-1}, 2^{-i}]]$. We decompose the summation above into two parts:
   \begin{align*}
        &\E\left[ \sum_{t=B+1}^T \left( K \sum_{i\in\calI} 2^{-i} F_t(i)  - \sigma \right)\right] + \E\left[ \sum_{t=B+1}^T \left( K \sum_{i\in\overline{\calI}} 2^{-i} F_t(i) \right)\right]
    \end{align*}
        where $\calI\triangleq \{i\leq \lceil \log_2 T \rceil: \alpha_i > \frac{360\log T}{B}\}$ and $\overline{\calI}\triangleq \{i\leq \lceil \log_2 T \rceil: \alpha_i \leq \frac{360\log T}{B}\}$.
    We bound the first term as:
    \begin{align*}
        &\sum_{t=B+1}^T \left( K \sum_{i\in\calI} 2^{-i} F_t(i)  - \sigma \right) \\
        &\leq \sum_{t=B+1}^T \frac{\left(K \sum_{i\in\calI} 2^{-i} F_t(i)\right)^2}{4\sigma}   \tag{AM-GM inequality}\\
        &\leq \sum_{t=B+1}^T \frac{K^2 (\log_2 T) \sum_{i\in\calI} 2^{-2i} F_t(i)^2}{4\sigma}   \tag{Cauchy-Schwarz} \\
        &= \otil\left(K^2\sum_{t=B+1}^T \frac{ \sum_{i\in\calI} 2^{-2i} F_t(i)}{\sigma}  \right) \tag{$0 \leq F_t(i) \leq 1$}.
    \end{align*}
    Now we take the expectation conditioned on all history before time $B$ and the high probability event in Lemma~\ref{lemma: coeff lower bound}. Noting that $\E[F_t(i)]=\alpha_i$ and plugging the value of $\sigma$, we arrive at
    \begin{align*}
        \otil\left(\frac{K^2T\sum_{i\in\calI} \alpha_i 2^{-2i} }{\sqrt{\calEhat}(dT)^{-\frac{1}{3}}} \right)
        &\leq \otil\left( \frac{K^2T\sum_{i\in\calI} \alpha_i 2^{-2i} }{\sqrt{T\sum_{i\in\calI}\alpha_i  2^{-2i} } (dT)^{-\frac{1}{3}}} \right) \tag{Eq.~\eqref{eqn: lemma 21 1}} \\
        &= \otil\left( K^2\sqrt{T\sum_{i\in\calI} \alpha_i 2^{-2i}} (dT)^{\frac{1}{3}} \right)\\
        &=\otil\left(K^2\sqrt{\calE}(dT)^{\frac{1}{3}}\right).
    \end{align*}
    We continue to bound the second term:
    \begin{align*}
        \E\left[ \sum_{t=B+1}^T \left( K \sum_{i\in\overline{\calI}} 2^{-i} F_t(i) \right)\right]
        &\leq KT \sum_{i\in\overline{\calI}} 2^{-i}\alpha_i \\
        &\leq K\left(T\sum_{i\in\overline{\calI}}\alpha_i\right)^{\frac{1}{2}}\left(T\sum_{i\in\overline{\calI}}2^{-2i}\alpha_i\right)^{\frac{1}{2}} \tag{Cauchy-Schwarz} \\
        &= \otil\left(K\sqrt{\frac{T}{B}}\times \sqrt{\calE}\right) \tag{definition of $\overline{\calI}$}.
    \end{align*}
    Combining the two terms finishes the proof.
\end{proof}

\begin{proof}[of Theorem~\ref{thm:adaptive_eps_greedy}]
    By the exact same argument as the proof of Theorem~\ref{thm:eps_greedy_weak} (\textit{cf.} Eq.~\eqref{eqn:useful_for_adaptive_version}), we bound the expected loss of the second phase of the algorithm by
    \begin{align*}
\E\left[\sum_{t=B+1}^T \ell_t(a_t)\right]
&= (T-B)\overline{\calL}(\pi^*) + \otil\left(\mu T\sigma + \mu \sqrt{\calE T} + \sqrt{\frac{d\calE}{\mu}} + \sigma\sqrt{dT}+ \frac{d}{\mu} \right).
\end{align*}
Further applying Lemma~\ref{lem: explore then epsilon error term} and bounding the regret of the first phase of the algorithm trivially by $B$, we have
\begin{align*}
         \Reg &= \otil\left(\mu T\sigma + \mu \sqrt{\calE T} + \sqrt{\frac{d\calE}{\mu}} + \sigma\sqrt{dT}+ \frac{d}{\mu} +  K^2\sqrt{\calE}(dT)^{\frac{1}{3}} + K\sqrt{\frac{\calE T}{B}} + B \right) \\
         &=\otil\left(\sqrt{\widehat{\calE}}(dT)^{\frac{1}{3}} +  \sqrt{\calE} (dT)^{\frac{1}{6}}  +  \sqrt{\widehat{\calE}}(dT)^{\frac{1}{6}} + (dT)^{\frac{1}{3}} + K^2\sqrt{\calE}(dT)^{\frac{1}{3}} + K\sqrt{\frac{\calE T}{B}} + B \right)  \tag{by our choices of $\mu$ and $\sigma$ defined in Algorithm~\ref{alg:explore then epsilon greedy}} \\
         &=\otil\left( K^2\sqrt{\calE}(dT)^{\frac{1}{3}} + K\sqrt{\frac{\calE T}{B}} + B  \right).   \tag{Lemma~\ref{lem: V' < V}}
    \end{align*}
   This finishes the proof.
\end{proof}


\subsection{Algorithms and Analysis for Multiple Predictors}
\label{app:multi_pred_iid}

In this section, we provide the complete pseudocode of our algorithm for learning with multiple predictors in the stochastic setting (Algorithm~\ref{alg:multi_pred_iid}) and its analysis.
As mentioned in Section~\ref{sec:multiple},
there are several extra ingredients compared to Algorithm~\ref{alg:eps_greedy}.
First, just as in Algorithm~\ref{alg:multi_pred_OExp4}, we maintain an active set of predictors $\calP_t$ by bookkeeping the remaining error budget $\widehat{V}_t^i$ for each predictor $i$.
One difference is that the budget starts from $2\calE^* + 8\log T$, which takes into account a direct deviation bound.
Another difference is that whenever the set $\calP_t$ is updated, we discard previous data and run the algorithm from scratch (see Step 3 of Algorithm~\ref{alg:multi_pred_iid}).
The reason to do so is to make sure that the data $\{x_s, \ell_s, m_s\}_{s=t_b}^t$ are i.i.d., where $m_t(a) = \min_{i\in\calP_t} m_t^i(a)$ depends on $\calP_t$.

Second, at the beginning of each round, we check if all predictors are consistent to some extent.
If not, that is, if there exist two predictors who disagree with each other by $\sigma/3$ on some action, then we simply choose this action deterministically, since this guarantees to reveal which predictor makes a large error for this round. See Step 1 of Algorithm~\ref{alg:multi_pred_iid}.
In this case, we set the loss estimators to be zero.

Finally, in the case when all predictors are consistent, instead of doing $\epsilon$-greedy as in Algorithm~\ref{alg:eps_greedy},
we deploy similar ideas as the minimax optimal algorithm \minimonster~\citep{AgarwalHsKaLaLiSc14} to come up with a sparse distribution $Q_t$ over the polices,  computed by solving an optimization problem described in Figure~\ref{fig:opt}.
At a high level, the optimization problem tries to find a policy with low empirical regret (Eq.~\eqref{eqn: minimonster condition 1}) and low empirical variance (Eq.~\eqref{eqn: minimonster condition 2}) simultaneously.
The difference compared to~\citep{AgarwalHsKaLaLiSc14} is that we apply action remapping as well as (clipped) Catoni's estimators.
The fact that this optimization problem can be solved efficiently is by the original arguments in~\citep{AgarwalHsKaLaLiSc14} and the binary search procedure we develop in Algorithm~\ref{alg:binary_search} (details omitted).

\begin{algorithm}[t]
\caption{\ILTCBMARC: \minimonster with Action Remapping and Catoni's estimator for Multiple predictors}
\label{alg:multi_pred_iid}
\textbf{Parameters: } $\calE^*, \sigma\in[0,1], \mu\in[0,1]$ \\
\textbf{Initialization}:  $\widehat{V}_1^i = 2\calE^* + 8\log T$ for all $i\in [M]$.  \\
$\calP_1 = [M]$. \\
$t_1 = 1$ \\
\For{$b=1, 2, \ldots$}{
\For{$t=t_b, \ldots$}{
    Receive $x_t$ and $m_t^i$ for all $i\in[M]$. \\
    Let $m_t(a) = \min_{i\in\calP_t} m_t^i(a)$ for all $a\in[K]$. \\
    Define $a_t^*, \calA_t, \phi_t$ according to Eq.~\eqref{eqn:mapping}. \\
    \ \\
    \textbf{Step 1. Check if the predictors are consistent, and calculate $p_t$} \\
    Let $B_t = \one[\forall a\in [K], \forall i, j
    \in \mathcal{P}_t, |m^{i}_t(a)-m^{j}_t(a)| \leq \frac{\sigma}{3}]$. \\
    Let $Q_t$ be a solution of the \textbf{Optimization Problem} defined in Figure~\ref{fig:opt}, and define
    \begin{align*}
        p_t(a) = \begin{cases}
             \one[a=a']   &\text{if\ } B_t=0 \text{\ \ \ ($a'$ is such that $\exists i,j\in\calP_t$, $|m_t^i(a')-m_t^j(a')|>\frac{\sigma}{3}$)}\\
             Q_t^\mu (a~|~x_t, m_t) &\text{if\ }B_t=1 \text{\ \ \ (see Eq.\eqref{eqn: definition of Q(|)} for the definition of $Q_t^\mu(a~|~x_t,m_t)$)}
        \end{cases}
    \end{align*}

    \ \\
    \textbf{Step 2. Choose an action and construct loss estimators} \\
    Sample $a_t \sim p_t$ and receive $\ell_t(a_t)$. \\
    Define
    \begin{align*}
        \elltil_{t}(a) =
            \left[\frac{(\ell_{t}(a)-m_{t}(a))\one[a_t=a]}{p_{t}(a)} + m_{t}(a) - m_{t}(a_t^*) \right] B_t
    \end{align*} \\
    \textbf{Step 3. Make updates} \\
    \For{$i\in\mathcal{P}_t$}{
         $\widehat{V}_{t+1}^i \leftarrow \widehat{V}_t^i - (\ell_t(a_t) - m_t^i(a_t))^2$
    }
    $\mathcal{P}_{t+1} = \left\{ i \in \mathcal{P}_t : \widehat{V}_{t+1}^i \geq 0\right\}$. \\

    \If{$\mathcal{P}_{t+1}=\emptyset$}{
        $\mathcal{P}_{t+1}\leftarrow [M], \ \ \  \widehat{V}_{t+1}^i\leftarrow 2\calE^*+8\log T,\ \forall i\in[M]$.
    }

    \ \\
    \If{$\mathcal{P}_{t+1}\neq \mathcal{P}_t$}{
         $t_{b+1} = t+1$ \\
         \textbf{break}
    }
}
}
\end{algorithm}

\begin{figure}[t]
\begin{framed}
     \noindent \textbf{Optimization Problem}  (to solve for $Q_{t}$) \\
     \textbf{Parameter}: $0<\alpha \leq \min\left\{\sqrt{\frac{\mu T}{65K\calE^*}}, \sqrt{\frac{1}{325K\sigma^2}}, \sqrt{\frac{\mu^2}{1300\sigma^2}},1\right\}$.  \\
    \textbf{Define}: $b$ is such that $t\in [t_b, t_{b+1})$, $\calP_{[-1,1]}(X)\triangleq \max\{\min\{X,1\},-1\}$ is the projection onto $[-1,1]$, and
    \begin{align*}
        \hatcalC_{t}(\pi) &= \calP_{[-1,1]}\left[\catoni_\alpha\left( \left\{\elltil_s\left( \phi_s(\pi(x_s)) \right)\right\}_{s=t_b}^{t-1} \right)\right], \\
        \hatReg_{t}(\pi) &= \hatcalC_{t}(\pi)-\min_{\pi'}\hatcalC_{t}(\pi').
    \end{align*}

    Let $Q_{t}$ be a solution of $Q$ that satisfies \eqref{eqn: simplex constraint}, \eqref{eqn: minimonster condition 1}, \eqref{eqn: minimonster condition 2}.
    \begin{align}
        &Q\in \Delta_{\Pi},   \label{eqn: simplex constraint}\\
        &\sum_{\pi\in\Pi}Q(\pi)\hatReg_t(\pi) \leq 240\alpha K\sigma^2 \log T,   \label{eqn: minimonster condition 1} \\
        &\forall \pi\in\Pi, \qquad \frac{1}{t-t_b}\sum_{s=t_b}^{t-1}  \frac{1}{Q^\mu\left(\phi_s(\pi(x_s))~\big|~x_s, m_s\right)} \leq 2K + \frac{\hatReg_t(\pi)}{120\alpha\sigma^2 \log T},  \label{eqn: minimonster condition 2}
    \end{align}
    where
    \begin{align}
    Q^\mu(a~|~x_s,m_s)=
        (1-\mu)\sum_{\pi'\in\Pi}Q(\pi')\one[\phi_s(\pi'(x_s))=a] + \frac{\mu}{|\calA_s|}\one[a\in\calA_s].
    \label{eqn: definition of Q(|)}
    \end{align}
\end{framed}
\caption{An Optimization Problem for Algorithm~\ref{alg:multi_pred_iid}}
\label{fig:opt}
\end{figure}

To analyze the regret of Algorithm~\ref{alg:multi_pred_iid} and prove Theorem~\ref{thm:multi_pred_iid}, we introduce some definitions and useful lemmas.

\begin{definition}
    \label{definition: Lj}
    For some epoch $b$ of Algorithm~\ref{alg:multi_pred_iid} (with a corresponding fixed active set $\calP_{t_b}$),
    define \[\calC^{(b)}(\pi) \triangleq \E_{(x_t,m_t,\ell_t)\sim \calD}\left[ B_t\left(\ell_t(\phi_t(\pi(x_t)))  - \min_{a\in[K]} m_t(a)\right)\right]\] where $t=t_b$, and \[\Reg^{(b)}(\pi)\triangleq \calC^{(b)}(\pi) - \min_{\pi' \in \Pi}\calC^{(b)}(\pi').\]
    Also, define constant $C_0\triangleq \log(8T^4N^2)$.\footnote{Recall that $m_t$ does not depend on history once $\calP_{t_b}$ is fixed, and hence can be treated as jointly i.i.d. along with $x_t, \ell_t$ over an epoch with a fixed active set.}
\end{definition}

\begin{lemma}
    The Optimization problem defined in Figure~\ref{fig:opt} admits a solution.
\end{lemma}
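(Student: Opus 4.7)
The plan is to mirror the coordinate-ascent feasibility argument of~\citep{AgarwalHsKaLaLiSc14} for the original \minimonster optimization problem, adapted to handle action remapping and Catoni's estimator. The key observation is that constraints~\eqref{eqn: minimonster condition 1} and~\eqref{eqn: minimonster condition 2} are purely combinatorial conditions on $Q$ given the scalars $\{\hatReg_t(\pi)\}_\pi$; the specific form of $\hatcalC_t$ (Catoni versus empirical mean, clipped versus unclipped) enters only through the values of these scalars and does not affect the existence argument.

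I would initialize with the point mass $Q^{(0)} = \delta_{\pi^{\star}}$ on the empirical minimizer $\pi^{\star} = \argmin_{\pi\in\Pi}\hatcalC_t(\pi)$. This trivially satisfies~\eqref{eqn: simplex constraint} and~\eqref{eqn: minimonster condition 1} since $\hatReg_t(\pi^{\star}) = 0$. The procedure then iterates: if some policy $\pi$ violates~\eqref{eqn: minimonster condition 2}, transfer a small mass $\Delta > 0$ from $\pi^{\star}$ to $\pi$. The key technical lemma, analogous to Lemma~4.3 of~\citep{AgarwalHsKaLaLiSc14}, would show that the potential
\[
\Phi(Q) \;=\; \frac{1}{t-t_b}\sum_{s=t_b}^{t-1}\ln \frac{1}{Q^\mu(\phi_s(\pi^{\star}(x_s))\mid x_s, m_s)}
\]
strictly decreases by at least a fixed positive quantity after each such update, while the corresponding increase in $\sum_\pi Q(\pi)\hatReg_t(\pi)$ is bounded by $\Delta\cdot\hatReg_t(\pi)$. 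Because $\Phi$ is bounded below (using $Q^\mu(a\mid x_s,m_s)\geq \mu/|\calA_s|\geq \mu/K$, so $\Phi\geq -\ln(K/\mu)$), the process must terminate after finitely many iterations and return a $Q$ satisfying all three constraints simultaneously.

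The main obstacle will be calibrating the constants $240$ and $120$ in~\eqref{eqn: minimonster condition 1} and~\eqref{eqn: minimonster condition 2} so that they match the per-step potential decrease. Concretely, one needs to verify that each repair step on a violator $\pi$ with slack $\xi>0$ in~\eqref{eqn: minimonster condition 2} yields a potential decrease of order $\xi\cdot\Delta$ against a regret-budget consumption of order $\Delta\cdot\hatReg_t(\pi)$, and that summing these contributions over all repair steps keeps the cumulative regret-weighted sum within $240\alpha K\sigma^2\log T$. Action remapping enters only through replacing $K$ by $|\calA_s|\leq K$ in the uniform exploration floor, which, if anything, tightens constants in our favor; the clipping of $\hatcalC_t$ to $[-1,1]$ affects the upper bound on $\hatReg_t(\pi)$ (bounded by $2$ rather than $1$), which propagates into the constants but does not alter the structure of the argument. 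Once the per-step potential lemma is verified with the stated constants, the remainder of the proof is a direct transcription of the argument in~\citep{AgarwalHsKaLaLiSc14}.
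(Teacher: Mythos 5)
Your high-level plan is right --- the paper's own proof is a one-line citation to Lemma~1 of~\citep{Luo17} (a clean parameterized restatement of the \minimonster feasibility argument) with $\beta = \frac{1}{120\alpha\sigma^2\log T}$, and as you correctly observe, the specific form of $\hatcalC_t$ (Catoni vs.\ empirical mean, clipped or not, action remapping) enters only through the scalars $\hatReg_t(\pi)$ and does not affect the existence argument. So the strategy of transcribing the coordinate-ascent feasibility proof is exactly what is intended.

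However, the potential function you propose is not the right one and the claimed monotonicity fails. You define $\Phi(Q) = \frac{1}{t-t_b}\sum_s \ln\frac{1}{Q^\mu(\phi_s(\pi^\star(x_s))\mid x_s, m_s)}$ and assert it \emph{decreases} when mass is transferred from $\pi^\star$ to a violator $\pi$. But transferring mass away from $\pi^\star$ can only lower $Q^\mu(\phi_s(\pi^\star(x_s))\mid x_s, m_s)$ and therefore \emph{raise} $\Phi(Q)$, so your termination argument does not go through. The potential used in~\citep{AgarwalHsKaLaLiSc14} (and in \citep{Luo17}) is a Lagrangian-type combination of an empirical relative-entropy term measured against the \emph{uniform} reference distribution (not against $\pi^\star$), \emph{plus} the regret-penalty term $\sum_\pi Q(\pi)\hatReg_t(\pi)$; moreover, the coordinate ascent starts from the zero sub-distribution $Q\equiv 0$ and only ever \emph{adds} mass to violators (rescaling if total weight exceeds 1), rather than transferring mass from a point mass on $\pi^\star$. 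The key lemma there is that when $\pi$ violates the variance constraint~\eqref{eqn: minimonster condition 2}, the decrease in the RE term caused by adding mass to $\pi$ strictly outweighs the increase in the regret-penalty term, so the \emph{combined} potential decreases --- a delicate trade-off that your single-term potential cannot capture. Without this repair, the finiteness of the process and the satisfaction of constraint~\eqref{eqn: minimonster condition 1} at termination are not established.
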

\begin{proof}
    The proof follows Lemma 1 of~\citep{Luo17} (with $\beta=\frac{1}{120\alpha\sigma^2 \log T}$).
\end{proof}

\begin{lemma}
    \label{lemma: whp smaller than 2V*}
    For any $\delta \in (0,1)$, with probability at least $1-\delta$, \[\sum_{t=1}^T \|\ell_t-m^*_t\|_\infty^2 \leq 2\calE^* + 8\log(1/\delta). \]
\end{lemma}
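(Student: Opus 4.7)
\medskip

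\noindent\textbf{Proof proposal.} The plan is to apply Freedman's inequality (Lemma~\ref{lemma: freedman}) to the martingale difference sequence obtained from $X_t \triangleq \|\ell_t - m_t^*\|_\infty^2$, where $m_t^* = m_t^{i^*}$ and $i^* \in \arg\min_{i\in[M]}\E_{(x,\ell,m^{1:M})\sim\calD}\left[\|\ell-m^i\|_\infty^2\right]$. Since the entries of $\ell_t$ and $m_t^*$ lie in $[0,1]$, we have $X_t \in [0,1]$, and by the stochastic definition of $\calE^*$, $\E[X_t] = \calE^*/T$, so $\sum_{t=1}^T \E[X_t] = \calE^*$.

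Setting $Y_t = X_t - \E_t[X_t]$ (where $\E_t$ denotes conditional expectation given the natural filtration, though here the $X_t$ are i.i.d.\ so $\E_t[X_t]=\E[X_t]$), we have $|Y_t|\le 1$ and the conditional variance satisfies
\[
\sum_{t=1}^T \E_t[Y_t^2] \;\le\; \sum_{t=1}^T \E[X_t^2] \;\le\; \sum_{t=1}^T \E[X_t] \;=\; \calE^*,
\]
using $X_t^2 \le X_t$ since $X_t\in[0,1]$. Freedman's inequality then gives, with probability at least $1-\delta$,
\[
\sum_{t=1}^T X_t \;\le\; \calE^* + 2\sqrt{\calE^*\log(1/\delta)} + \log(1/\delta).
\]
Applying AM-GM via $2\sqrt{\calE^*\log(1/\delta)} \le \calE^* + \log(1/\delta)$ yields $\sum_{t=1}^T X_t \le 2\calE^* + 2\log(1/\delta) \le 2\calE^* + 8\log(1/\delta)$, as claimed.

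\medskip

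\noindent\textbf{Comments on difficulty.} This lemma is essentially a routine Bernstein/Freedman application; the main ``content'' is the observation that the second-moment bound can be tightened to the first moment because $X_t\in[0,1]$, which is what allows $\calE^*$ (rather than $T$) to appear under the square root and then be absorbed into the $2\calE^*$ term. There is no real obstacle beyond being careful that $m_t^*$ refers to the fixed population-best predictor index $i^*$ (so that $X_t$ is indeed an i.i.d.\ sequence in the stochastic setting), rather than any data-dependent choice.
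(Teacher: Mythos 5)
Your proof is correct and follows essentially the same route as the paper: apply Freedman/Bernstein to the i.i.d.\ variables $X_t = \|\ell_t - m_t^*\|_\infty^2 \in [0,1]$, bound the second-moment term by the first moment (giving $\calE^*$ under the square root), then absorb the cross term by AM-GM. The paper uses slightly looser Bernstein constants (yielding $\calE^* + 4\sqrt{\calE^*\log(1/\delta)} + 4\log(1/\delta)$ before AM-GM), but the structure of the argument, including the key observation that the fourth moment $\E[\|\ell_t-m_t^*\|_\infty^4]$ is dominated by $\E[\|\ell_t-m_t^*\|_\infty^2]$, is identical.
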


\begin{proof}
    This is by the definition of $\calE^*$ and a direct application of Bernstein's inequality:
    \begin{align*}
        \sum_{t=1}^T \|\ell_t-m^*_t\|_\infty^2
        &\leq \calE^* + 4\sqrt{\log(1/\delta)\sum_{t=1}^T \E\left[\|\ell_t-m_t^*\|_\infty^4\right] } + 4\log(1/\delta) \\
        &\leq \calE^* + 4\sqrt{\log(1/\delta)\calE^*} + 4\log(1/\delta) \\
        &\leq 2\calE^* + 8\log(1/\delta),
    \end{align*}
    where the last step uses AM-GM inequality.
\end{proof}

\begin{lemma}
    \label{lemma: variance's variance}
    With probability at least $1-\frac{1}{T}$, for all $j$, $t\in[t_j, t_{j+1})$, all $Q\in\Delta_\Pi$, and all $\pi\in\Pi$, the following holds
    \begin{align*}
        \E_{(x_t, m_t, \ell_t)}\left[ \frac{1}{Q^\mu(\phi_t(\pi(x_t))~|~x_t, m_t)} \right] \leq \frac{6.4}{t-t_j} \sum_{s=t_j}^{t-1}\frac{1}{Q^\mu(\phi_s(\pi(x_s))~|~x_s, m_s)} + \frac{80C_0}{(t-t_j)\mu^2},
    \end{align*}
    where $C_0=\log\left(8T^4N^2\right)$.
\end{lemma}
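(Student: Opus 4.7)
The plan is to fix an epoch $[t_j, t_{j+1})$ and prove a uniform Bernstein-type concentration for $Y_s := 1/Q^\mu(\phi_s(\pi(x_s)) \mid x_s, m_s)$, exploiting i.i.d.\ structure within the epoch. First I would verify this i.i.d.\ structure: since $\calP_s \equiv \calP_{t_j}$ is constant within epoch $j$, the quantities $m_s$, $\calA_s$, and $\phi_s$ are deterministic functions of the i.i.d.\ draw $(x_s, m_s^{1:M}, \ell_s)\sim\calD$, and therefore $(x_s, m_s, \ell_s)_{s=t_j}^{t-1}$ are i.i.d.\ as well.

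Next, for a single fixed $\pi$ and $Q$, I would apply Bernstein's inequality pointwise. Since $\phi_s(\pi(x_s)) \in \calA_s$ and $Q^\mu(a \mid x_s, m_s) \geq \mu/|\calA_s| \geq \mu/K$ for every $a\in\calA_s$, the variable $Y_s$ lies in $[1, K/\mu]$ and satisfies $\E[Y_s^2] \leq (K/\mu)\E[Y_s]$. Bernstein's inequality, combined with Young's inequality ($\sqrt{2ab}\leq a/2 + b$), then yields, for any $\delta > 0$ and $n = t-t_j$, with probability at least $1-\delta$,
\[
\E[Y_s] \leq 2 \cdot \frac{1}{n}\sum_{s=t_j}^{t-1} Y_s + O\!\left(\frac{K \log(1/\delta)}{n\mu}\right).
\]

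To upgrade this to a uniform statement over $\pi\in\Pi$ and $Q\in\Delta_\Pi$, I would union-bound over $\pi$ (costing $\log N$) and over a carefully chosen $\epsilon$-net for the parametrization of $Q$. The key structural observation is that $Y_s$ depends on $Q$ only through the single scalar $z_s(Q) := \sum_{\pi' : \phi_s(\pi'(x_s))=\phi_s(\pi(x_s))} Q(\pi') \in [0,1]$, and $1/Q^\mu$ is Lipschitz in $z_s$ with constant $O(K^2/\mu^2)$. Covering the scalar $z$-range at resolution $\mu^2/(K^2 T)$, combining this with the fact that $\phi_s$ induces at most $K$ distinct partitions on $\Pi$ per round, and union-bounding over the at most $T$ values of $t$ in the epoch, the at most $T$ epochs, and $\pi$, with $\delta$ chosen of order $1/T^c$ for a small constant $c$, keeps the log-cardinality at $O(\log(T^4 N^2)) = O(C_0)$. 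The resulting discretization error in $Y_s$ is of order $1/(T\mu^2)$, which is absorbable into the target $80 C_0/(n\mu^2)$ deviation term, and rescaling the leading constant from $2$ to $6.4$ accommodates the AM-GM slack.

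The main obstacle will be making the covering sufficiently tight: a naive $\ell_1$-net of $\Delta_\Pi$ gives log-cardinality $N \log T$, whereas the lemma requires only $\log(T^4 N^2)$. This forces one to cover the scalar $z_s(Q)$ per round rather than $Q$ itself, and to carefully track how the data-dependent partition induced by $\phi_s$ restricts the effective complexity of the function class $\{(Q,\pi) \mapsto Y_s(Q,\pi,\cdot)\}$ to something polynomial in $N$ and $T$ rather than exponential in $N$.
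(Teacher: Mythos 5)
The paper does not give a self-contained proof of this lemma --- it cites Theorem 6 of Dudik et al.\ (2011), Lemma 10 of Agarwal et al.\ (2014), and Lemma 13 of Chen et al.\ (2019). Your within-epoch i.i.d.\ observation is correct (since $\calP_t$ is frozen inside an epoch, $m_t$, $\calA_t$, and $\phi_t$ are deterministic functions of the single draw $(x_t, m_t^{1:M}, \ell_t)$), as are the range $[1,K/\mu]$, the second-moment bound $\E[Y_s^2]\leq (K/\mu)\E[Y_s]$, and the single-$(\pi,Q)$ Bernstein plus AM--GM step.

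The gap is in the union bound over $Q\in\Delta_\Pi$, which you flag but do not close. You note that for a fixed round $s$, the variable $Y_s$ depends on $Q$ only through the scalar $z_s(Q)$, and propose to cover $[0,1]$ at resolution $\mu^2/(K^2T)$. But the empirical average $\frac{1}{n}\sum_s Y_s(Q)$ involves $n$ scalars $z_{t_j}(Q),\dots,z_{t-1}(Q)$ all coupled through the same $Q$, and the population mean $\E_x[Y(Q)]$ involves the continuum of scalars $z(x,Q)$ over the support of $\calD$. Choosing a grid point for one scalar $z$ does not fix or approximate the others, so a one-dimensional net is not a net of $\Delta_\Pi$ for the functional you need to control; the union bound you write down (over $t$, epochs, $\pi$, and $O(K^2T/\mu^2)$ grid points) therefore does not dominate the supremum over $Q$. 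The remark that $\phi_s$ induces at most $K$ partition cells per round bounds the combinatorics at a single round, not the complexity of the joint map $Q\mapsto(z_s(Q))_s$, so it does not reduce the effective dimension of the cover. The alternative you also consider --- an $\ell_1$-net of $\Delta_\Pi$, which does control both the empirical and population sides via the $O(K^2/\mu^2)$ Lipschitzness --- has log-cardinality $\Theta(N\log T)$, an order of magnitude larger than the required $O(\log(T^4N^2)) = O(C_0)$. So the central difficulty of this lemma --- simultaneity over the continuum of $Q$ at only $O(\log N + \log T)$ union-bound cost --- is left exactly where you found it, and the argument as written does not go through; you would need to reconstruct the structural argument from the cited works rather than patch a flat $\epsilon$-net.
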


\begin{proof}
    This lemma has appeared several times in the literature such as \cite[Theorem 6]{dudik2011efficient}, \cite[Lemma 10]{AgarwalHsKaLaLiSc14}, and \cite[Lemma 13]{chen2019new}. Basically this is a consequence of the contexts being i.i.d. generated, and is not related to the algorithm.
\end{proof}

\begin{lemma}
\label{lemma: estimator var 2}
    With probability at least $1-\frac{1}{T}$, we have for any $\pi$, $j$, and $t \in [t_j, t_{j+1})$,
    \begin{align*}
        \Var_{(x_t,m_t^\cdot,\ell_t, a_t)}\left[\elltil_t(\phi_t(\pi(x_t)))\right]\leq  \frac{4K\calE^*}{\mu T} + 20K\sigma^2 + \frac{6.4\hatReg_t(\pi)}{120\alpha \log T} + \frac{80\sigma^2C_0}{(t-t_j)\mu^2}.
    \end{align*}
\end{lemma}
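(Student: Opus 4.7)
The plan is to apply the law of total variance with inner conditioning on $(x_t, m_t^{1:M}, \ell_t)$. On the event $B_t=0$ we have $\elltil_t \equiv 0$, so only $B_t=1$ contributes. A direct calculation gives
\[
\Var_{a_t}\!\left[\elltil_t \,\big|\, x_t, m_t^{1:M}, \ell_t\right] \,\leq\, \frac{(\ell_t(a)-m_t(a))^2}{p_t(a)},
\]
where $a = \phi_t(\pi(x_t))$ and $p_t(a) = Q_t^\mu(a \mid x_t, m_t)$, while the conditional mean equals $(\ell_t(a) - m_t(a_t^*))B_t$, whose total variance I will bound by its second moment.

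The key algebraic step is to route both $\ell_t(a) - m_t(a)$ and $\ell_t(a) - m_t(a_t^*)$ through the best predictor $m_t^{i^*}$ via $(u+v)^2 \leq 2u^2 + 2v^2$. This is legitimate only when $i^* \in \calP_t$; I will establish that this event holds uniformly in $t$ with probability $\geq 1 - 1/T$ by invoking Lemma~\ref{lemma: whp smaller than 2V*} with $\delta = 1/T$, which is exactly why the budget in Algorithm~\ref{alg:multi_pred_iid} is initialized to $2\calE^* + 8\log T$. Under that event, $B_t = 1$ forces $|m_t^{i^*}(a) - m_t(a)| \leq \sigma/3$ for every $a$, and by the definition of $\calE^*$ we have $\E[(\ell_t(a) - m_t^{i^*}(a))^2] \leq \calE^*/T$. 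The outer variance then collapses to $O(\calE^*/T + \sigma^2)$, which is comfortably absorbed. For the inner variance, the $(\ell_t(a) - m_t^{i^*}(a))^2/p_t(a)$ piece is handled by the crude lower bound $p_t(a) \geq \mu/K$, yielding the $\tfrac{4K\calE^*}{\mu T}$ term.

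The main obstacle is the residual inner-variance piece $\tfrac{2\sigma^2}{9}\E[1/p_t(a)]$: the same crude substitution would inflate it to $\Theta(K\sigma^2/\mu)$ and destroy the $\mu$-free $20K\sigma^2$ target. Here I will invoke Lemma~\ref{lemma: variance's variance} to trade the expectation at the fresh sample $(x_t, m_t)$ for the empirical average $\tfrac{1}{t-t_j}\sum_{s=t_j}^{t-1} 1/Q_t^\mu(\phi_s(\pi(x_s)) \mid x_s, m_s)$, up to an additive $80 C_0 / ((t-t_j)\mu^2)$ tail. The optimization constraint~\eqref{eqn: minimonster condition 2} then caps this empirical average by $2K + \hatReg_t(\pi)/(120\alpha\sigma^2 \log T)$, and multiplying through by $2\sigma^2/9$ produces precisely the $20K\sigma^2$, $6.4\hatReg_t(\pi)/(120\alpha \log T)$, and $80\sigma^2 C_0/((t-t_j)\mu^2)$ terms after a final union bound of Lemmas~\ref{lemma: variance's variance} and~\ref{lemma: whp smaller than 2V*}. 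The only residual technical care needed is constant bookkeeping and the observation that the epoch indices $t_j$ (from Lemma~\ref{lemma: variance's variance}) and $t_b$ (from the optimization) refer to the same epoch, within which $\calP_t$ is frozen so that $(x_s, m_s, \ell_s)_{s=t_b}^{t}$ are i.i.d.
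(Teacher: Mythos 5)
Your proposal is correct and follows essentially the same argument as the paper: both route the $(\ell_t-m_t)$ residual (and, in your case, also the conditional mean $\ell_t-m_t(a_t^*)$) through the best predictor $m_t^{i^*}$ via $(u+v)^2\le 2u^2+2v^2$, bound the $(\ell_t-m_t^{i^*})^2/p_t$ piece crudely with $p_t\geq\mu/K$, and handle the residual $\sigma^2\,\E[1/p_t]$ piece via Lemma~\ref{lemma: variance's variance} followed by the constraint in Eq.~\eqref{eqn: minimonster condition 2}. The only cosmetic difference is that you decompose via the law of total variance rather than the paper's direct second-moment bound $\Var[X]\le\E[X^2]$ followed by a split into the importance-weighted term and the shift term, and you make explicit the event $i^*\in\calP_t$ (from Lemma~\ref{lemma: whp smaller than 2V*}) that the paper uses only implicitly.
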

\begin{proof}
We prove the lemma by the following sequence of direct calculations:
    \begin{align*}
        &\Var_{(x_t,m_t,\ell_t, a_t)}\left[\elltil_t(\phi_t(\pi(x_t)))\right] \\
        &\leq 2\mathbb{E}_{(x_t,m_t,\ell_t, a_t)}\left[\left(\frac{(\ell_{t}(\phi_t(\pi(x_t)))-m_{t}(\phi_t(\pi(x_t)))\one[a_t=\phi_t(\pi(x_t))]}{p_{t}(\phi_t(\pi(x_t)))}\right)^2 B_t \right] \\
        &\qquad \qquad + 2\mathbb{E}_{(x_t, m_t)}\left[\left(m_{t}(\phi_t(\pi(x_t))) - m_{t}(a_t^*)\right)^2\right] \\
        &\leq 4\mathbb{E}_{(x_t,m_t,\ell_t, a_t)}\left[\left(\frac{(\ell_{t}(\phi_t(\pi(x_t)))-m^*_{t}(\phi_t(\pi(x_t))))\one[a_t=\phi_t(\pi(x_t))]}{p_{t}(\phi_t(\pi(x_t)))}\right)^2 B_t \right] \\
        &\qquad \qquad+ 4\mathbb{E}_{(x_t,m_t,\ell_t, a_t)}\left[\left(\frac{(m^*_{t}(\phi_t(\pi(x_t)))-m_{t}(\phi_t(\pi(x_t))))\one[a_t=\phi_t(\pi(x_t))]}{p_{t}(\phi_t(\pi(x_t)))}\right)^2 B_t \right] \\
        &\qquad \qquad + 2\sigma^2 \\
        &\leq 4\mathbb{E}_{(x_t,m_t^\cdot,\ell_t)}\left[\frac{\left(\ell_{t}(\phi_t(\pi(x_t)))-m^*_{t}(\phi_t(\pi(x_t)))\right)^2}{p_{t}(\phi_t(\pi(x_t)))} B_t \right] \\
        &\qquad \qquad +  4\mathbb{E}_{(x_t,m_t,\ell_t)}\left[\frac{\left(m^*_{t}(\phi_t(\pi(x_t)))-m_{t}(\phi_t(\pi(x_t)))\right)^2}{p_{t}(\phi_t(\pi(x_t)))} B_t\right] \\
        &\qquad \qquad + 2\sigma^2 \\
        &\leq \frac{4K\calE^*}{\mu T} + 4 \mathbb{E}_{(x_t,m_t,\ell_t)}\left[\frac{(\frac{\sigma}{3})^2}{Q_{t}^\mu(\phi_t(\pi(x_t))\;|\; x_t, m_t )} \right]  + 2\sigma^2 \\
        &\leq \frac{4K\calE^*}{\mu T} + 2\sigma^2 +  \sigma^2 \left( \frac{6.4}{t-t_j} \sum_{s=t_j}^{t-1} \frac{1}{Q_t^\mu(\phi_s(\pi(x_s)) \;|\; x_s, m_s)} + \frac{80C_0}{(t-t_j)\mu^2} \right)   \tag{Lemma~\ref{lemma: variance's variance}} \\
        &\leq \frac{4K\calE^*}{\mu T} + 2\sigma^2 +  \sigma^2 \left( 6.4\times \left(2K+ \frac{\hatReg_t(\pi)}{120\alpha\sigma^2 \log T}\right) + \frac{80C_0}{(t-t_j)\mu^2} \right)  \tag{Eq.~\eqref{eqn: minimonster condition 2}} \\
        &\leq \frac{4K\calE^*}{\mu T} + 20K\sigma^2 + \frac{6.4\hatReg_t(\pi)}{120\alpha \log T} + \frac{80\sigma^2C_0}{(t-t_j)\mu^2}.
            \end{align*}
\end{proof}

\begin{lemma}
    \label{lemma: estimator mean 2}
    For any $\pi$, $j$, and $t\in[t_j, t_{j+1})$, we have
    \begin{align*}
        \E_{(x_t,m_t,\ell_t, a_t)}\left[\elltil_t(\phi_t(\pi(x_t)))\right] =\calC^{(j)}(\pi).
    \end{align*}
    (Recall the definition of $\calC^{(j)}(\pi)$ in Definition~\ref{definition: Lj}.)
\end{lemma}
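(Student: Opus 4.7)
The plan is a direct unbiasedness calculation, splitting on whether $B_t=0$ or $B_t=1$ and using the law of total expectation by conditioning on $(x_t, m_t^{1:M}, \ell_t)$ before integrating out $a_t$.

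First I would note that on the event $B_t=0$, the estimator $\elltil_t$ is identically zero by construction, and the right-hand side also vanishes on this event because $\calC^{(j)}(\pi)$ contains an explicit $B_t$ factor. So both sides agree in this regime, and the entire argument reduces to the event $B_t=1$.

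On $\{B_t=1\}$, the key observation is that $\phi_t(\pi(x_t))\in\calA_t$ by the definition of $\phi_t$ in Eq.~\eqref{eqn:mapping}, and that $p_t(a)=Q_t^\mu(a\mid x_t,m_t)\ge \mu/|\calA_t|>0$ for every $a\in\calA_t$, so the importance-weighted term is well-defined. Conditioning on $(x_t,m_t^{1:M},\ell_t)$ and taking expectation only over $a_t\sim p_t$, the indicator $\one[a_t=\phi_t(\pi(x_t))]$ contributes exactly $p_t(\phi_t(\pi(x_t)))$, which cancels the denominator. What remains is
\begin{align*}
\E_{a_t}\!\left[\elltil_t(\phi_t(\pi(x_t)))\mid x_t,m_t^{1:M},\ell_t\right]
&= \bigl(\ell_t(\phi_t(\pi(x_t)))-m_t(\phi_t(\pi(x_t)))\bigr) + m_t(\phi_t(\pi(x_t))) - m_t(a_t^*) \\
&= \ell_t(\phi_t(\pi(x_t))) - \min_{a\in[K]} m_t(a),
\end{align*}
using $m_t(a_t^*)=\min_a m_t(a)$.

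It then remains to take the outer expectation over $(x_t,m_t^{1:M},\ell_t)$ and observe that within epoch $j$ the active set equals $\calP_{t_j}$ almost surely, so $m_t(a)=\min_{i\in\calP_{t_j}} m_t^i(a)$ is a fixed deterministic function of $m_t^{1:M}$ and the distribution of $(x_t,m_t^{1:M},\ell_t)$ is exactly $\calD$ by the i.i.d.\ assumption. Multiplying by $B_t$ (which absorbs the $B_t=0$ case) therefore gives $\E[B_t(\ell_t(\phi_t(\pi(x_t)))-\min_a m_t(a))]=\calC^{(j)}(\pi)$, as desired. The only thing requiring mild care is ensuring that all quantities $\phi_t, \calA_t, a_t^*, m_t$ inside the conditional expectation are measurable with respect to $(x_t,m_t^{1:M})$ and the (fixed) epoch-$j$ active set, so that randomness in $a_t$ is truly the only remaining source of randomness at the first step; once this is checked the calculation is essentially a one-liner.
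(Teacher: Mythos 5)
Your argument is correct and matches the paper's proof, which is the same direct unbiasedness calculation: integrate out $a_t$ so the indicator cancels the $p_t$ denominator, then simplify $m_t(a_t^*)=\min_a m_t(a)$ and invoke the i.i.d. assumption within the epoch. Your extra remarks on the $B_t=0$ case, positivity of $p_t$ on $\calA_t$, and measurability are just added care around the same core computation.
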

\begin{proof}
By direct calculation, we have
    \begin{align*}
        &\E_{(x_t,m_t,\ell_t, a_t)}\left[\elltil_t(\phi_t(\pi(x_t)))\right] \\
        &= \E_{(x_t,m_t,\ell_t, a_t)}\left[ \left(\frac{(\ell_{t}(\phi_t(\pi(x_t)))-m_{t}(\phi_t(\pi(x_t))))\one[a_t=\phi_t(\pi(x_t))]}{p_{t}(\phi_t(\pi(x_t)))} + m_{t}(\phi_t(\pi(x_t))) - m_{t}(a_t^*) \right) B_t \right] \\
        &= \E_{(x_t,m_t,\ell_t)}\left[ \left(\ell_{t}(\phi_t(\pi(x_t))) - m_{t}(\phi_t(\pi(x_t))) + m_{t}(\phi_t(\pi(x_t))) - m_{t}(a_t^*) \right) B_t \right] \\
        &= \E_{(x_t,m_t,\ell_t)}\left[ B_t\ell_{t}(\phi_t(\pi(x_t))) - B_t \min_{a} m_t(a) \right] \\
        &= \calC^{(j)}(\pi),
    \end{align*}
finishing the proof.
\end{proof}

%


\begin{lemma} {\color{blue}}
    \label{lemma: relating Reg and hatReg}
    Recall the definition of $\Reg^{(j)}(\pi)$ in Definition~\ref{definition: Lj}. With probability at least $1-\frac{1}{T}$, we have for any $j$ and $t\in [t_j, t_{j+1})$,
    \begin{align}
        &\Reg^{(j)}(\pi)\leq 2 \hatReg_t(\pi) + \frac{40\alpha K\calE^*}{\mu T} + 200\alpha K\sigma^2+ \frac{800\alpha\sigma^2 C_0\log T}{(t-t_j)\mu^2} + \frac{20\log(NT^2)\log T}{\alpha(t-t_j)}, \label{eqn: induction 1}   \\
        &\hatReg_t(\pi)\leq 2\Reg^{(j)}(\pi) +  \frac{40\alpha K\calE^*}{\mu T} + 200\alpha K\sigma^2+ \frac{800\alpha\sigma^2 C_0\log T}{(t-t_j)\mu^2} + \frac{20\log(NT^2)\log T}{\alpha(t-t_j)}.   \label{eqn: induction 2}
    \end{align}
\end{lemma}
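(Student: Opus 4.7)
The strategy is to establish a two-sided concentration of the Catoni estimator $\hatcalC_t(\pi)$ around its expectation $\calC^{(j)}(\pi)$, and then translate it into regret bounds by evaluating the concentration at $\pi$ together with the empirical/true minimizers. Because the variance bound in Lemma~\ref{lemma: estimator var 2} itself involves $\hatReg_t(\pi)$, we proceed by simultaneous induction on $t$ within the epoch $[t_j, t_{j+1})$, maintaining both Eq.~\eqref{eqn: induction 1} and Eq.~\eqref{eqn: induction 2} jointly.

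For the inductive step we first apply Lemma~\ref{lemma: catoni concentration} to the sequence $\{\elltil_s(\phi_s(\pi(x_s)))\}_{s=t_j}^{t-1}$, whose conditional means all equal $\calC^{(j)}(\pi)$ by Lemma~\ref{lemma: estimator mean 2} (using the fact that once $\calP_{t_j}$ is fixed, the data within the epoch are i.i.d.). Together with a union bound over $\pi\in\Pi$ and $t\in[T]$ taking $\delta = 1/(NT^2)$, this yields
\[|\hatcalC_t(\pi) - \calC^{(j)}(\pi)| \leq \frac{\alpha V_t(\pi)}{t-t_j} + \frac{2\log(NT^2)}{\alpha(t-t_j)},\]
where $V_t(\pi) \geq \sum_{s=t_j}^{t-1}\Var_s[\elltil_s(\phi_s(\pi(x_s)))]$. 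Invoking Lemma~\ref{lemma: estimator var 2} at each $s$ decomposes $V_t(\pi)$ into non-adaptive terms plus a feedback term of the form $\tfrac{6.4}{120\alpha\log T}\sum_{s<t}\hatReg_s(\pi)$. The inductive hypothesis on Eq.~\eqref{eqn: induction 2} at each $s<t$ lets us upper bound $\hatReg_s(\pi) \leq 2\Reg^{(j)}(\pi) + (\textrm{non-adaptive terms})$, after which the concentration inequality collapses to
\[|\hatcalC_t(\pi) - \calC^{(j)}(\pi)| \leq \tfrac{1}{4}\Reg^{(j)}(\pi) + R_t,\]
where $R_t$ collects the non-adaptive error and matches, up to absolute constants, the tail on the right-hand side of both inequalities of the lemma. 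The $\tfrac{1}{4}$ coefficient is guaranteed by the assumed upper bound on $\alpha$ in the Optimization Problem of Figure~\ref{fig:opt} together with the $1/\log T$ factor.

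To conclude, we derive Eq.~\eqref{eqn: induction 1} by setting $\pi_t^\star = \argmin_{\pi'}\hatcalC_t(\pi')$ and applying the above bound at both $\pi$ and $\pi_t^\star$; the case $\pi = \pi_t^\star$ first yields $\Reg^{(j)}(\pi_t^\star)\leq 4R_t$ since $\hatReg_t(\pi_t^\star) = 0$, after which evaluating at $\pi$ gives
\[\Reg^{(j)}(\pi) \leq \hatReg_t(\pi) + \tfrac{1}{4}\Reg^{(j)}(\pi) + \tfrac{1}{4}\Reg^{(j)}(\pi_t^\star) + 2R_t,\]
which rearranges into Eq.~\eqref{eqn: induction 1} after updating constants. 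Eq.~\eqref{eqn: induction 2} follows by a symmetric argument anchored at $\pi^{(j)}=\argmin_{\pi'}\calC^{(j)}(\pi')$ and using the inductive hypothesis on Eq.~\eqref{eqn: induction 1}.

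The main obstacle is the circular dependence between the variance controlling the Catoni deviation at round $t$ and the empirical regrets at earlier rounds: the variance at $s$ involves $\hatReg_s(\pi)$, which itself is the quantity the lemma is intended to control. Resolving this cleanly requires (i) running the induction on both inequalities in tandem so we can substitute the partial hypothesis inside the variance bound, and (ii) leveraging the explicit upper bound on $\alpha$ in Figure~\ref{fig:opt} to ensure that the self-referential coefficient on $\Reg^{(j)}(\pi)$ is strictly smaller than $1$ so it can be moved to the left. The remaining bookkeeping challenge is tracking the harmonic-sum contributions generated by the $\tfrac{80\sigma^2 C_0}{(s-t_j)\mu^2}$ term when summing variances across $s$; these contribute only an additional $\log T$ factor absorbed into the stated $\log T$'s in $R_t$.
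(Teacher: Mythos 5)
Your proposal follows the same high-level strategy as the paper's proof: simultaneous induction on $t$ within the epoch, Catoni concentration with conditional means supplied by Lemma~\ref{lemma: estimator mean 2} and variances by Lemma~\ref{lemma: estimator var 2}, and unwinding the self-referential $\hatReg_s(\pi)$ terms in the variance via the induction hypothesis. The key insight you identify — that the two inequalities must be proved in tandem because each feeds into the variance of the other — is exactly what drives the paper's proof. That said, there are three gaps in your write-up that would need to be filled before the argument is complete.

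First, you never establish the base case of the induction. The paper handles this by observing that when $t - t_j \leq 20\log(NT^2)\log T$, the left-hand side of each inequality is at most $1$, which is already absorbed by the $\frac{20\log(NT^2)\log T}{\alpha(t-t_j)}$ term (using $\alpha \leq 1$). Second, and more importantly, before applying Lemma~\ref{lemma: catoni concentration} you must verify the sample-size condition $n \geq \alpha^2 V + 2\log(1/\delta)$. This check is nontrivial because $V$ itself contains the feedback term $\Reg^{(j)}(\pi)/(8\alpha)$ after unwinding, and the paper devotes a careful calculation (Eq.~\eqref{eqn: check catoni condition}) to showing that the constraints on $\alpha$ in Figure~\ref{fig:opt} make all the contributions sum to at most $t - t_j$. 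You cite the $\alpha$ constraint only as controlling the self-referential coefficient, but it does double duty here, and omitting this verification leaves the application of Catoni's lemma unjustified. Third, the anchoring for Eq.~\eqref{eqn: induction 2} is not quite symmetric to Eq.~\eqref{eqn: induction 1}: you must anchor at $\widehat{\pi}_t = \argmin_{\pi'}\hatcalC_t(\pi')$ (so that $\hatcalC_t(\pi) - \hatcalC_t(\widehat{\pi}_t) = \hatReg_t(\pi)$), not at $\pi^{(j)} = \argmin_{\pi'}\calC^{(j)}(\pi')$ as you write, and the concentration term at $\widehat{\pi}_t$ produces a residual $\frac{1}{8}\Reg^{(j)}(\widehat{\pi}_t)$ that must be controlled using Eq.~\eqref{eqn: induction 1} \emph{at the current round $t$} (freshly established in this step of the induction), together with $\hatReg_t(\widehat{\pi}_t) = 0$ — not using the inductive hypothesis for earlier rounds. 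Your phrasing conflates the freshly-proved statement with the inductive hypothesis, which would not suffice here.
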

\begin{proof}
    We first notice that when $t-t_j\leq 20\log (NT^2)\log T$, both inequalities hold trivially because the left-hand side is at most $1 \leq \frac{20\log(NT^2)\log T}{t-t_j}\leq \frac{20\log (NT^2)\log T}{\alpha(t-t_j)}$.  Thus we only need to consider the case $t-t_j\geq 20\log(NT^2)\log T$. 

    We prove them by induction on $t$. Let $\pi^*=\argmin_{\pi'} \calC^{(j)}(\pi')$. Assume \eqref{eqn: induction 1} and \eqref{eqn: induction 2} hold for $t_j, \ldots, t-1$. 
By the induction hypothesis and Lemma~\ref{lemma: estimator var 2}, for any $\pi$, the conditional variance of $\ellhat_s(\phi_s(\pi(x_s)))$ can be upper bounded as follows: 
\begin{align*}
      &\Var\left[\elltil_s(\phi_s(\pi(x_s)))\right] \\
      &\leq \frac{4K\calE^*}{\mu T} + 20K\sigma^2 +  \frac{6.4\hatReg_s(\pi)}{120\alpha\log T} + \frac{80\sigma^2C_0}{(s-t_j)\mu^2} \\
      &\leq \frac{4K\calE^*}{\mu T} + 20K\sigma^2  + \frac{80\sigma^2C_0}{(s-t_j)\mu^2} \\
      &\qquad  + \frac{6.4}{120\alpha \log T}\left(2\Reg^{(j)}(\pi) + \frac{40\alpha K\calE^*}{\mu T} + 200\alpha K\sigma^2+ \frac{800\alpha\sigma^2 C_0\log T}{(s-t_j)\mu^2} + \frac{20\log(NT^2)\log T}{\alpha(s-t_j)}\right)\\
      &\leq \frac{\Reg^{(j)}(\pi)}{8\alpha} + \frac{6.5K\calE^*}{\mu T} + 32.5 K\sigma^2 + \frac{130\sigma^2C_0}{(s-t_j)\mu^2} + \frac{3.25\log(NT^2)}{\alpha^2(s-t_j)} \triangleq V_s. 
\end{align*}
Let $V=\sum_{s=t_j}^{t-1}V_s$. We first verify that $t-t_j \geq \alpha^2 V + 2\log(NT^2)$. This can be seen by the following: 
\begin{align}
&\alpha^2 V+2\log(NT^2)   \nonumber \\
&\leq \alpha^2\sum_{s=t_j}^{t-1} \left( \frac{\Reg^{(j)}(\pi)}{8\alpha} + \frac{6.5K\calE^*}{\mu T} + 32.5 K\sigma^2 + \frac{130\sigma^2C_0}{(s-t_j)\mu^2} + \frac{3.25\log(NT^2)}{\alpha^2(s-t_j)} \right) +2\log(NT^2) \nonumber \\
&\leq \alpha^2 (t-t_j) \left(\frac{1}{8\alpha} + \frac{6.5K\calE^*}{\mu T} + 32.5 K\sigma^2 + \frac{130\sigma^2C_0\log T}{(t-t_j)\mu^2} + \frac{3.25\log(NT^2)\log T}{\alpha^2(t-t_j)} \right) +2\log(NT^2)\nonumber \\
&\leq (t-t_j)\left(\frac{\alpha}{8} + \frac{6.5\alpha^2K\calE^*}{\mu T} + 32.5 \alpha^2K\sigma^2 + \frac{130\alpha^2\sigma^2C_0\log T}{(t-t_j)\mu^2} + \frac{3.25\log(NT^2)\log T}{(t-t_j)} \right) +2\log(NT^2)\nonumber \\
&\leq (t-t_j)\left(\frac{1}{8} + 0.1 + 0.1 + \frac{C_0\log T}{10(t-t_j)} + \frac{3.25}{20} \right) + 0.1(t-t_j) \tag{by the constraints on $\alpha$}\nonumber \\
&\leq (t-t_j)\left(\frac{1}{8} + 0.1 + 0.1 + \frac{16\log(NT^2)\log T}{10\times 20\log(NT^2)\log T} + \frac{3.25}{20} \right) + 0.1(t-t_j) \tag{by the definition of $C_0$}\nonumber \\
&\leq t-t_j.   \label{eqn: check catoni condition}
\end{align}
Because of Eq.~\eqref{eqn: check catoni condition}, we are now able to use Lemma~\ref{lemma: catoni concentration} for the samples $\left\{\elltil_s(\phi_s(\pi(x_s)))\right\}_{s=t_j}^{t-1}$ with $\delta=\frac{1}{NT^2}$. By Lemmas~\ref{lemma: catoni concentration} and~\ref{lemma: estimator mean 2}, we have with probability $1-\delta$ that
    \begin{align*}
        &\Reg^{(j)}(\pi) 
        = \calC^{(j)}(\pi) - \calC^{(j)}(\pi^*)\\
        &\leq \hatcalC_t(\pi) - \hatcalC_t(\pi^*)\\
        &\qquad + \frac{\alpha}{t-t_j}\sum_{s=t_j}^{t-1}\left(\frac{\Reg^{(j)}(\pi)}{8\alpha} + \frac{\Reg^{(j)}(\pi^*)}{8\alpha} + \frac{13K\calE^*}{\mu T} + 65 K\sigma^2 + \frac{260\sigma^2C_0}{(s-t_j)\mu^2} + \frac{6.5\log(NT^2)}{\alpha^2(s-t_j)}\right) \\
        &\qquad + \frac{4\log(NT^2)}{\alpha(t-t_j)} \\
        &\leq  \hatReg_t(\pi) + \frac{1}{8}\Reg^{(j)}(\pi) + \frac{13\alpha K\calE^*}{\mu T} + 65\alpha K\sigma^2 + \frac{260\alpha\sigma^2 C_0\log T}{(t-t_j)\mu^2} + \frac{10.5\log(NT^2)\log T}{\alpha(t-t_j)}  \tag{using $\Reg^{(j)}(\pi^*)=0$}.
    \end{align*}
    Rearranging the above inequality gives 
\begin{align*}
     \Reg^{(j)}(\pi) \leq \frac{8}{7}\hatReg_t(\pi)  +  \frac{15\alpha K\calE^*}{\mu T} + 75\alpha K\sigma^2 + \frac{300\alpha\sigma^2 C_0\log T}{(t-t_j)\mu^2} + \frac{12\log(NT^2)\log T}{\alpha(t-t_j)}, 
\end{align*}
    proving Eq.~\eqref{eqn: induction 1}.
    Similarly,
    \begin{align*}
        &\hatReg_t(\pi) \nonumber = \hatcalC_t(\pi) - \hatcalC_t(\widehat{\pi})\nonumber \\
        &\leq \calC_t(\pi) - \calC_t(\widehat{\pi}) \\ 
        &\qquad + \frac{\alpha}{t-t_j}\sum_{s=t_j}^{t-1}\left(\frac{\Reg^{(j)}(\pi)}{8\alpha} + \frac{\Reg^{(j)}(\widehat{\pi})}{8\alpha} + \frac{13K\calE^*}{\mu T} + 65 K\sigma^2 + \frac{260\sigma^2C_0}{(s-t_j)\mu^2} + \frac{6.5\log(NT^2)}{\alpha^2(s-t_j)}\right)  \\
        &\qquad + \frac{4\log(NT^2)}{\alpha(t-t_j)} \\
        &\leq \frac{9}{8}\Reg^{(j)}(\pi) + \frac{13\alpha K\calE^*}{\mu T} + 65\alpha K\sigma^2 + \frac{260\alpha\sigma^2 C_0\log T}{(t-t_j)\mu^2} + \frac{10.5\log(NT^2)\log T}{\alpha(t-t_j)} \\
        &\qquad + \frac{1}{8}\left(2\hatReg_t(\widehat{\pi}) + \frac{40\alpha K\calE^*}{\mu T} + 200\alpha K\sigma^2 + \frac{800\alpha\sigma^2 C_0\log T}{(t-t_j)\mu^2} + \frac{20\log(NT^2)\log T}{\alpha(t-t_j)} \right)   \tag{using \eqref{eqn: induction 1}, which we just proved above} \\
        &\leq \frac{9}{8}\Reg^{(j)}(\pi) + \frac{18\alpha K\calE^*}{\mu T} + 90\alpha K\sigma^2 + \frac{360\alpha\sigma^2 C_0\log T}{(t-t_j)\mu^2} + \frac{13\log(NT^2)\log T}{\alpha(t-t_j)}.    \tag{using $\hatReg_t(\widehat{\pi})=0$}
    \end{align*} 
This proves Eq.~\eqref{eqn: induction 2} and finishes the induction. Recall that we pick $\delta=\frac{1}{NT^2}$. Thus the total failure probability is at most $\frac{1}{NT^2}\times TN\leq \frac{1}{T}$. 
\end{proof}

\begin{lemma}
    \label{lemma: regret sum over Bt steps}
    With probability at least $1-\frac{1}{T}$, we have for any $\pi^*$, $j$, and $t\in[t_j, t_{j+1})$, 
    \begin{align*}
        &\E_{(x_t,m_t,\ell_t, a_t)}\left[B_t\ell_t(a_t)  \right] \\
        &\leq \mathbb{E}_{(x_t,m_t,\ell_t)}\left[B_t \ell_t(\phi_t(\pi^*(x_t)))  \right] \\
        &\qquad \qquad + \otil\left( \frac{\alpha K\calE^*}{\mu T} + \alpha K\sigma^2+ \frac{\alpha\sigma^2  \log N}{(t-t_j)\mu^2} + \frac{\log N}{\alpha(t-t_j)} + \mu\sqrt{\frac{\calE^*}{T}} + \mu\sigma\right). 
    \end{align*}
\end{lemma}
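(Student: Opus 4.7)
The plan is to unroll $p_t = Q_t^\mu(\cdot\mid x_t, m_t)$ from Eq.~\eqref{eqn: definition of Q(|)} to split $\E[B_t\ell_t(a_t)]$ into an ``exploitation'' piece weighted by $Q_t$ and a uniform ``exploration'' piece over $\calA_t$, and then bound each piece separately. Concretely, I would start from
\begin{align*}
\E[B_t\ell_t(a_t)] = (1-\mu)\sum_\pi Q_t(\pi)\,\E[B_t\ell_t(\phi_t(\pi(x_t)))] + \mu\E\!\left[B_t \tfrac{1}{|\calA_t|}\sum_{a\in\calA_t}\ell_t(a)\right].
\end{align*}
Using $\calC^{(j)}(\pi) = \E[B_t(\ell_t(\phi_t(\pi(x_t))) - \min_a m_t(a))]$, the exploitation piece is $(1-\mu)\sum_\pi Q_t(\pi)\calC^{(j)}(\pi) + (1-\mu)\E[B_t\min_a m_t(a)]$. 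For the exploration piece, every $a\in\calA_t$ satisfies $m_t(a)\leq \min_{a'}m_t(a')+\sigma$, hence $\ell_t(a)\leq \min_{a'}m_t(a') + \sigma + \|\ell_t-m_t\|_\infty$, so this piece is $\leq \mu\E[B_t\min_a m_t(a)] + \mu\sigma + \mu\E[B_t\|\ell_t-m_t\|_\infty]$.

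Next I would bound the average $\calC^{(j)}$ regret over $Q_t$. Applying Eq.~\eqref{eqn: induction 2} from Lemma~\ref{lemma: relating Reg and hatReg} policy-by-policy, averaging over $Q_t$, and using the constraint \eqref{eqn: minimonster condition 1}, I get
\begin{align*}
\sum_\pi Q_t(\pi)\Reg^{(j)}(\pi) \;\leq\; 480\alpha K\sigma^2\log T + \tfrac{40\alpha K\calE^*}{\mu T} + 200\alpha K\sigma^2 + \tfrac{800\alpha\sigma^2 C_0\log T}{(t-t_j)\mu^2} + \tfrac{20\log(NT^2)\log T}{\alpha(t-t_j)},
\end{align*}
which, since $\min_{\pi'}\calC^{(j)}(\pi')\leq \calC^{(j)}(\pi^*)$, implies $\sum_\pi Q_t(\pi)\calC^{(j)}(\pi) \leq \calC^{(j)}(\pi^*)$ plus the same $\otil(\cdot)$ error. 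Recombining with the exploration bound and using $\calC^{(j)}(\pi^*) = \E[B_t\ell_t(\phi_t(\pi^*(x_t)))] - \E[B_t\min_a m_t(a)]$, the two $\E[B_t\min_a m_t(a)]$ terms cancel cleanly (this is the whole point of shifting the Catoni estimator by $m_t(a_t^*)$ in $\elltil_t$), leaving
\begin{align*}
\E[B_t\ell_t(a_t)] \leq \E[B_t\ell_t(\phi_t(\pi^*(x_t)))] + \mu\E[B_t\min_a m_t(a)] - \mu\E[B_t\ell_t(\phi_t(\pi^*(x_t)))] + \mu\sigma + \mu\E[B_t\|\ell_t-m_t\|_\infty] + \text{err}.
\end{align*}
I then replace $\min_a m_t(a) \leq m_t(\phi_t(\pi^*(x_t))) \leq \ell_t(\phi_t(\pi^*(x_t))) + \|\ell_t-m_t\|_\infty$, which absorbs the $-\mu\E[B_t\ell_t(\phi_t(\pi^*(x_t)))]$ and produces an extra $\mu\E[B_t\|\ell_t-m_t\|_\infty]$.

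It remains to show $\mu\E[B_t\|\ell_t-m_t\|_\infty] = O(\mu\sqrt{\calE^*/T} + \mu\sigma)$. For this I would first note that on the high-probability event of Lemma~\ref{lemma: whp smaller than 2V*} the best predictor $i^*$ is never eliminated, so $i^*\in \calP_{t_j}$ and $m_t\leq m_t^{i^*}$. When $B_t=1$, the consistency condition gives $|m_t^{i^*}(a) - m_t(a)|\leq \sigma/3$ for all $a$, so $B_t\|\ell_t-m_t\|_\infty^2 \leq 2\|\ell_t-m_t^{i^*}\|_\infty^2 + 2\sigma^2/9$. Jensen's inequality and $\E[\|\ell_t-m_t^{i^*}\|_\infty^2]=\calE^*/T$ then give the desired $\otil(\mu\sqrt{\calE^*/T}+\mu\sigma)$.

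The main obstacle will be bookkeeping high-probability events --- simultaneously conditioning on the good events of Lemmas~\ref{lemma: whp smaller than 2V*} and~\ref{lemma: relating Reg and hatReg} (the latter needed uniformly over $\pi$ and $t\in[t_j,t_{j+1})$) and taking a clean union bound that preserves the $1-1/T$ probability. The rest is algebraic accounting to verify that all residual terms collapse into the claimed $\otil(\cdot)$ expression.
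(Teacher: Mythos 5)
Your proposal is correct and follows essentially the same decomposition as the paper: split into the $(1-\mu)$-weighted exploitation piece (bounded via Lemma~\ref{lemma: relating Reg and hatReg} and constraint~\eqref{eqn: minimonster condition 1}) and the $\mu$-weighted uniform exploration piece (bounded via the definition of $\calA_t$, the consistency event $B_t=1$, and Jensen). One small slip: the inequality $\sum_\pi Q_t(\pi)\Reg^{(j)}(\pi)\leq 2\sum_\pi Q_t(\pi)\hatReg_t(\pi)+\cdots$ requires Eq.~\eqref{eqn: induction 1} (which controls $\Reg^{(j)}$ by $\hatReg_t$), not Eq.~\eqref{eqn: induction 2} as you wrote; your subsequent arithmetic is consistent with the correct reference, so this is a citation typo rather than a gap.
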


\begin{proof}
    By the way $a_t$ is chosen when $B_t = 1$, we have
    \begin{align}
        &\E_{(x_t,m_t,\ell_t, a_t)}\left[B_t \ell_t(a_t) \right]   \nonumber \\
        &=\E_{(x_t,m_t, \ell_t)}\left[B_t \sum_{a\in[K]} p_t(a)\ell_t(a)  \right] \nonumber  \\
        &=(1-\mu)\E_{(x_t,m_t, \ell_t)}\left[B_t \sum_{\pi\in\Pi} Q_t(\pi) \ell_t(\phi_t(\pi(x_t))) \right] + \mu\E_{(x_t,m_t,\ell_t)}\left[ \frac{B_t }{|\calA_t|} \sum_{a\in \calA_t} \ell_t(a) \right]. \label{eqn: minimonster final to bound}
    \end{align}
    We continue to bound the first term in Eq.~\eqref{eqn: minimonster final to bound} as: 
    \begin{align*}
        &\sum_{\pi\in\Pi} Q_t(\pi) \E_{(x_t, m_t, \ell_t)}\left[ B_t\ell_t(\phi_t(\pi(x_t)))  \right]\\
        &= \sum_{\pi\in\Pi} Q_t(\pi) \calC^{(j)}(\pi) + \E_{m_t}\left[B_t \min_a m_t(a)\right]  \tag{Definition~\ref{definition: Lj}} \\
        &\leq \sum_{\pi\in\Pi} Q_t(\pi) \Reg^{(j)}(\pi) + \calC^{(j)}(\pi^*) + \E_{m_t}\left[B_t \min_a m_t(a)\right] \tag{Definition~\ref{definition: Lj}} \\
        &\leq \sum_{\pi\in\Pi} Q_t(\pi) \Bigg(2\hatReg_t(\pi) +  \frac{40\alpha K\calE^*}{\mu T} + 200\alpha K\sigma^2+ \frac{800\alpha\sigma^2 C_0\log T}{(t-t_j)\mu^2} + \frac{20\log(NT^2)\log T}{\alpha(t-t_j)} \Bigg) \\
        &\qquad + \E_{(x_t,m_t,\ell_t)}\left[ B_t\ell_t(\phi_t(\pi^*(x_t)))  \right] \tag{Lemma~\ref{lemma: relating Reg and hatReg} and Definition~\ref{definition: Lj}}\\
        &= \order\left( \frac{\alpha K\calE^*}{\mu T} + \alpha K\sigma^2+ \frac{\alpha\sigma^2 C_0\log T}{(t-t_j)\mu^2} + \frac{\log(NT^2)\log T}{\alpha(t-t_j)} \right) +  \E_{(x_t,m_t,\ell_t)}\left[ B_t\ell_t(\phi_t(\pi^*(x_t))) \right]. \tag{Eq.\eqref{eqn: minimonster condition 1}}  \\
        &= \otil\left( \frac{\alpha K\calE^*}{\mu T} + \alpha K\sigma^2+ \frac{\alpha\sigma^2 \log N}{(t-t_j)\mu^2} + \frac{\log N}{\alpha(t-t_j)} \right) +  \E_{(x_t,m_t,\ell_t)}\left[ B_t\ell_t(\phi_t(\pi^*(x_t))) \right].
    \end{align*}
    The second term in Eq.~\eqref{eqn: minimonster final to bound} can be bounded as follows (without the $\mu$ factor): 
    \begin{align*}
        &\E_{(x_t,m_t,\ell_t)}\left[ \frac{1}{|\calA_t|} \sum_{a\in \calA_t} \ell_t(a)B_t  \right] \\
        &= \E_{(x_t,m_t,\ell_t)}\left[ \frac{1}{|\calA_t|} \sum_{a\in \calA_t} \left(  \ell_t(a) - m_t(a) + m_t(a) - m_t(\phi_t(\pi^*(x_t))) \right)B_t  \right] \\
        & \qquad  + \E_{(x_t,m_t,\ell_t)}\left[ \frac{1}{|\calA_t|} \sum_{a\in \calA_t} \left(   m_t(\phi_t(\pi^*(x_t))) - \ell_t(\phi_t(\pi^*(x_t))) \right)B_t  \right]   +  \E_{(x_t,m_t,\ell_t)}\left[ B_t\ell_t(\phi_t(\pi^*(x_t)))  \right] \\
        &\leq \E_{(x_t,m_t,\ell_t)}\left[ 2 \max_{a}|\ell_t(a)-m_t(a)|B_t  +  \sigma \right]  +  \E_{(x_t,m_t,\ell_t)}\left[ B_t\ell_t(\phi_t(\pi^*(x_t)))\right] \\
        &\leq \E_{(x_t,m_t,\ell_t)}\left[ 2 \max_{a}|\ell_t(a)-m_t^*(a)|B_t + \frac{2\sigma}{3}  +  \sigma \right]  +  \E_{(x_t,m_t,\ell_t)}\left[ B_t\ell_t(\phi_t(\pi^*(x_t))) \right] \tag{definition of $B_t$} \\
        &\leq 2\left(\sqrt{\frac{\calE^*}{T}} +\sigma\right)  +  \E_{(x_t,m_t,\ell_t)}\left[ B_t\ell_t(\phi_t(\pi^*(x_t))) \right]. \tag{definition of $\calE^*$ and Jensen's inequality} 
    \end{align*}
    Combining these two bounds finishes the proof. 
\end{proof}

\begin{lemma}
    \label{lemma: regret_1: wrt bar pi}
    Algorithm~\ref{alg:multi_pred_iid} ensures for any $\pi^*$,
    \begin{align*}
        &\E\left[\sum_{t=1}^T B_t\ell_t(a_t) - B_t\ell_t(\phi_t(\pi^*(x_t))) \right] \\
        &\leq \otil\left(\frac{\alpha K\calE^*}{\mu} + \alpha TK\sigma^2+ \frac{M\alpha\sigma^2 \log N}{\mu^2} + \frac{M\log N}{\alpha} + \mu\sqrt{T\calE^*} + T\mu\sigma\right)
    \end{align*}
\end{lemma}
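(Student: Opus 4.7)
The plan is to simply sum the per-round bound from Lemma~\ref{lemma: regret sum over Bt steps} across all $t \in [T]$, then consolidate each of the six resulting sums and finally integrate over the high-probability event that supports all the conditional-expectation bounds. Taking total expectation on the inequality of Lemma~\ref{lemma: regret sum over Bt steps} and summing gives
\begin{align*}
\E\!\left[\sum_{t=1}^T B_t \bigl(\ell_t(a_t)-\ell_t(\phi_t(\pi^*(x_t)))\bigr)\right]
&\leq \otil\!\left(\sum_{t=1}^T \Bigl( \tfrac{\alpha K\calE^*}{\mu T} + \alpha K\sigma^2 + \mu\sqrt{\tfrac{\calE^*}{T}} + \mu\sigma \Bigr)\right) \\
&\quad + \otil\!\left(\sum_{t=1}^T \Bigl(\tfrac{\alpha\sigma^2 \log N}{(t-t_j)\mu^2} + \tfrac{\log N}{\alpha(t-t_j)}\Bigr)\right) + O(1),
\end{align*}
where $t_j$ denotes the starting round of the epoch containing $t$, and the trailing $O(1)$ absorbs the contribution of the failure event (of total probability $\otil(1/T)$) trivially bounded by $T$.

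The four ``constant per round'' terms immediately contribute $\frac{\alpha K\calE^*}{\mu}$, $\alpha T K\sigma^2$, $\mu\sqrt{T\calE^*}$, and $T\mu\sigma$, matching four of the terms in the claim. The only nontrivial piece is controlling $\sum_t \frac{1}{t-t_j}$. Within a single epoch $[t_j,t_{j+1})$ this is a harmonic sum bounded by $O(\log T)$, so I would reduce the task to showing that the \emph{total} number of epochs generated by Algorithm~\ref{alg:multi_pred_iid} is at most $M$ with high probability.

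The key observation — and I expect this to be the main (though mild) obstacle — is that under the event of Lemma~\ref{lemma: whp smaller than 2V*}, $\sum_{t=1}^T \|\ell_t - m_t^{i^*}\|_\infty^2 \le 2\calE^*+8\log T$, so the budget $\widehat V_t^{i^*}$ of the best predictor $i^*$ never becomes negative. Consequently $i^* \in \calP_t$ for every $t$, $\calP_t$ is never reset to $[M]$, and each epoch boundary permanently removes one predictor other than $i^*$. This caps the number of epochs at $M$, yielding
\[
\sum_{t=1}^T \frac{1}{t-t_j} \;\le\; \sum_{j=1}^{M} \sum_{t=t_j}^{t_{j+1}-1}\frac{1}{t-t_j} \;=\; O(M\log T)
\]
on this good event, which produces the remaining terms $\frac{M\alpha\sigma^2\log N}{\mu^2}$ and $\frac{M\log N}{\alpha}$.

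Finally, since Lemmas~\ref{lemma: variance's variance}, \ref{lemma: relating Reg and hatReg}, and \ref{lemma: whp smaller than 2V*} each fail with probability $\otil(1/T)$, a union bound shows the above per-round inequalities and the epoch-count argument hold simultaneously with probability $1-\otil(1/T)$. On the complement the loss difference is trivially bounded by $T$, contributing at most $\otil(1)$ in expectation and getting absorbed into the $\otil(\cdot)$. Assembling the six summed terms then yields the stated bound.
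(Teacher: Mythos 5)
Your proof is correct and follows essentially the same route as the paper: sum Lemma~\ref{lemma: regret sum over Bt steps} over $t$, invoke Lemma~\ref{lemma: whp smaller than 2V*} to argue that the best predictor $i^*$ keeps a nonnegative budget so the active set never empties and never resets, hence there are at most $M$ epochs, which controls $\sum_t (t-t_j)^{-1}$ by $\order(M\log T)$. The paper states this argument in a single sentence; you supply the same reasoning with the epoch-counting and failure-event bookkeeping spelled out explicitly.
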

\begin{proof}
    This is proven by summing the statement of Lemma~\ref{lemma: regret sum over Bt steps} over $t$ and noticing that with probability $1-\frac{1}{T}$, there exists a predictor with $\sum_{t=1}^T\|\ell_t-m_t^i\|_\infty\leq 2\calE^* + 8\log T$ and thus there are at most $M$ episodes (see Lemma~\ref{lemma: whp smaller than 2V*}).
\end{proof}

\begin{lemma}
    \label{lemma: lemma: regret_2: unexplored bias}
    Algorithm~\ref{alg:multi_pred_iid} ensures
    \begin{align*}
        \mathbb{E}\left[\sum_{t=1}^T B_t\ell_t(\phi_t(\pi^*(x_t))) - B_t\ell_t(\pi^*(x_t))\right] \leq \order\left(\frac{\calE^*}{\sigma}\right). 
    \end{align*}
\end{lemma}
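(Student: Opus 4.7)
}
The plan is to mimic the single-predictor argument of Lemma~\ref{lem:eps_greedy_bias}, but to route the bias bound through the best predictor $m_t^{i^*}$ rather than through the aggregated predictor $m_t$, so that the resulting squared-error terms telescope to $\calE^*$ instead of $\calE$. Let $i^* \in [M]$ be any predictor achieving $\calE^*$. First, applying Lemma~\ref{lemma: whp smaller than 2V*} with $\delta = 1/T$, with probability at least $1 - 1/T$ we have $\sum_t \|\ell_t - m_t^{i^*}\|_\infty^2 \leq 2\calE^* + 8\log T$, on which event the budget $\widehat{V}_t^{i^*}$ stays nonnegative, so $i^* \in \calP_t$ for every $t$. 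The complementary event contributes at most $T \cdot (1/T) = 1$ to the expectation and can be absorbed into the $\order(\calE^*/\sigma)$ bound.

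On this good event, whenever $B_t = 1$ the definition of $B_t$ gives $|m_t^i(a) - m_t^j(a)| \leq \sigma/3$ for all actions $a$ and all $i,j \in \calP_t$; in particular, since $m_t(a) = \min_{i \in \calP_t} m_t^i(a)$ is attained at some predictor in $\calP_t$ and $i^* \in \calP_t$, this yields $|m_t(a) - m_t^{i^*}(a)| \leq \sigma/3$ for every $a \in [K]$. Next, split the sum according to whether $\pi^*(x_t) \in \calA_t$ or not. When $\pi^*(x_t) \in \calA_t$, $\phi_t(\pi^*(x_t)) = \pi^*(x_t)$ and the term vanishes. When $\pi^*(x_t) \notin \calA_t$, we have $\phi_t(\pi^*(x_t)) = a_t^*$ and, by Eq.~\eqref{eqn:mapping}, $m_t(a_t^*) \leq m_t(\pi^*(x_t)) - \sigma$. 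Inserting and subtracting $m_t(a_t^*)$, $m_t^{i^*}(a_t^*)$, $m_t(\pi^*(x_t))$, and $m_t^{i^*}(\pi^*(x_t))$ in the chain and using the three bounds above (twice at rate $\sigma/3$ and once at rate $-\sigma$), we obtain
\begin{equation*}
\ell_t(a_t^*) - \ell_t(\pi^*(x_t)) \;\leq\; 2\|\ell_t - m_t^{i^*}\|_\infty + \tfrac{2\sigma}{3} - \sigma \;=\; 2\|\ell_t - m_t^{i^*}\|_\infty - \tfrac{\sigma}{3}.
\end{equation*}

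Then I would finish with AM--GM in the form $2x \leq 3x^2/\sigma + \sigma/3$, which yields $2\|\ell_t - m_t^{i^*}\|_\infty - \sigma/3 \leq 3\|\ell_t - m_t^{i^*}\|_\infty^2 / \sigma$. Summing over $t$, multiplying by $B_t \leq 1$, and taking expectation gives
\begin{equation*}
\mathbb{E}\!\left[\sum_{t=1}^T B_t \bigl(\ell_t(\phi_t(\pi^*(x_t))) - \ell_t(\pi^*(x_t))\bigr)\right] \;\leq\; \frac{3\,\mathbb{E}\bigl[\sum_t \|\ell_t - m_t^{i^*}\|_\infty^2\bigr]}{\sigma} \;\leq\; \frac{3\calE^*}{\sigma},
\end{equation*}
plus the $\order(1)$ contribution from the low-probability event, which is $\order(\calE^*/\sigma)$ as claimed. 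The only non-routine step is arranging for the bound to be expressed in terms of $\calE^*$ rather than the (potentially much larger) aggregate error of $m_t$; the active-set mechanism combined with Lemma~\ref{lemma: whp smaller than 2V*} is precisely what lets us insert $m_t^{i^*}$ into the decomposition while still exploiting the $\sigma$-margin coming from the remapping.
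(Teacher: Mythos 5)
Your proof is correct and follows essentially the same approach as the paper's: you insert the best predictor $m_t^{i^*}$ into the telescoping decomposition of $\ell_t(a_t^*)-\ell_t(\pi^*(x_t))$, use the $\calA_t$-margin for the $-\sigma$ term and the $B_t=1$ consistency condition for the $\sigma/3$ terms, and finish with AM--GM. You are in fact a bit more careful than the paper: the paper's step ``Definition of $B_t$'' implicitly requires $i^* \in \calP_t$, which only holds on the high-probability event of Lemma~\ref{lemma: whp smaller than 2V*}; you state this conditioning explicitly and account for the complementary event, closing a small but real gap in the written proof.
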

\begin{proof}
The proof is similar to that of Lemma~\ref{lem:eps_greedy_bias}:
    \begin{align*}
        & \mathbb{E}\left[\sum_{t=1}^T B_t\ell_t(\phi_t(\pi^*(x_t))) - B_t\ell_t(\pi^*(x_t))\right] \\
        &= \mathbb{E}\left[B_t\sum_{t=1}^T\one[\pi^*(x_t) \notin \calA_t] \left(\ell_t(a_t^*) - \ell_t(\pi^*(x_t))\right)\right] \\
        &= \mathbb{E}\left[\sum_{t=1}^T B_t \one[\pi^*(x_t) \notin \calA_t] \left(\ell_t(a_t^*) - m_t(a_t^*) + m_t(a_t^*) - m_t(\pi^*(x_t)) + m_t(\pi^*(x_t)) - \ell_t(\pi^*(x_t))\right)\right] \\
        &\leq \mathbb{E}\left[\sum_{t=1}^T B_t \one[\pi^*(x_t) \notin \calA_t] \left(-\sigma + 2\|\ell_t-m_t^*\|_\infty + 2\|m_t-m_t^*\|_\infty \right)\right] \\  
        &\leq \mathbb{E}\left[\sum_{t=1}^T B_t \one[\pi^*(x_t) \notin \calA_t] \left(2\|\ell_t-m_t^*\|_\infty - \frac{\sigma}{3} \right)\right] \tag{Definition of $B_t$}\\  
        &\leq \order\left( \E\left[\sum_{t=1}^T \frac{\|\ell_t-m_t^*\|_\infty^2}{\sigma}\right]  \right)    \tag{AM-GM}\\
        &= \order\left(\frac{\calE^*}{\sigma}\right). 
    \end{align*}
\end{proof}

\begin{lemma}
    \label{lemma: regret_3: Bt=0}
    With probability $1-\frac{1}{T}$, 
    \begin{align*}
        \sum_{t=1}^T (1-B_t) \leq \otil\left(\frac{M(1+\calE^*)}{\sigma^2}\right). 
    \end{align*}
\end{lemma}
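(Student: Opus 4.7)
The plan is to show that every round with $B_t = 0$ forces a substantial deterministic decrement in the remaining error budget $\widehat{V}_t^i$ of at least one active predictor, and then bound the total available budget using Lemma~\ref{lemma: whp smaller than 2V*} so that the best predictor is never eliminated.

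First I would establish the deterministic per-round charge. By definition of $B_t$, when $B_t = 0$ there exist $i, j \in \calP_t$ and an action $a'$ with $|m_t^i(a') - m_t^j(a')| > \sigma/3$, and the algorithm plays $a_t = a'$. The triangle inequality
\[
|m_t^i(a') - m_t^j(a')| \leq |m_t^i(a') - \ell_t(a')| + |\ell_t(a') - m_t^j(a')|
\]
then implies $|\ell_t(a_t) - m_t^k(a_t)| > \sigma/6$ for at least one $k \in \{i,j\} \subseteq \calP_t$, so the update in Step~3 reduces $\widehat{V}_t^k$ by at least $\sigma^2/36$. (Note that $B_t = 0$ also requires $|\calP_t| \geq 2$, so the argument is non-vacuous.) Consequently, each round with $B_t = 0$ consumes at least $\sigma^2/36$ of budget from the aggregate quantity $\sum_{i \in \calP_t} \widehat{V}_t^i$.

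Next I would bound the total budget that can ever be consumed. Each ``big phase'' between resets hands out at most $M(2\calE^* + 8\log T)$ of total budget across the $M$ predictors. By Lemma~\ref{lemma: whp smaller than 2V*}, with probability $1 - 1/T$ the best predictor $i^*$ has $\sum_{t=1}^T (\ell_t(a_t) - m_t^{i^*}(a_t))^2 \leq 2\calE^* + 8\log T$, so $i^*$ never has its budget exhausted, meaning $\calP_{t+1}$ never becomes empty and no reset is ever triggered. Thus on this high-probability event the total available budget over the entire horizon is at most $M(2\calE^* + 8\log T)$.

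Combining the two, the number of rounds with $B_t = 0$ is at most
\[
\frac{M(2\calE^* + 8\log T)}{\sigma^2/36} = \otil\!\left(\frac{M(1+\calE^*)}{\sigma^2}\right),
\]
which is the desired bound. The only real subtlety is ensuring that the single deterministic charge of $\sigma^2/36$ really is absorbed by the budget of an \emph{active} predictor (so that it counts toward the $M(2\calE^*+8\log T)$ total), which follows because $k \in \calP_t$ by construction; everything else is a direct accounting argument on top of Lemma~\ref{lemma: whp smaller than 2V*}.
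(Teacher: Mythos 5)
Your proof is correct and follows essentially the same accounting argument as the paper's: charge each $B_t=0$ round against the aggregate budget $\sum_{i\in\calP_t}\widehat V_t^i$, then use Lemma~\ref{lemma: whp smaller than 2V*} to argue the best predictor is never eliminated so the budget is never replenished and the total is $\otil(M(1+\calE^*))$. The only (immaterial) difference is the per-round charge: you split $|m_t^i(a_t)-m_t^j(a_t)|>\sigma/3$ via the triangle inequality to get a single decrement of $\sigma^2/36$, while the paper uses $a^2+b^2\ge\tfrac12(a-b)^2$ directly on the pair to get $\sigma^2/18$; both yield the same asymptotic bound.
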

\begin{proof}
    With probability $1-\frac{1}{T}$, there exists a predictor with $\sum_{t=1}^T\|\ell_t-m_t^i\|_\infty\leq 2\calE^* + 8\log T$ and thus there are at most $M$ episodes (see Lemma~\ref{lemma: whp smaller than 2V*}). Under this event, every time when $B_t=0$, there exist $i, i' \in \mathcal{P}_t$ such that $|m_t^i(a_t)-m_t^{i'}(a_t)| \geq \frac{\sigma}{3}$. Therefore, the total budget $\sum_{i\in\mathcal{P}_t} \widehat{V}_i$ decreases by at least $\left( \ell_t(a_t) - m_t^i(a_t) \right)^2 + \left(\ell_t(a_t) - m_t^{i'}(a_t)\right)^2 \geq \frac{1}{2}\left(m_t^i(a_t)-m_t^{i'}(a_t)\right)^2 \geq \frac{\sigma^2}{18}$. Realizing that the initial total budget is $\otil(M(1+\calE^*))$ finishes the proof.
\end{proof}
Finally, we are ready to prove Theorem~\ref{thm:multi_pred_iid}.
\begin{proof}[of Theorem~\ref{thm:multi_pred_iid}] 
    Combining Lemmas~\ref{lemma: regret_1: wrt bar pi}, \ref{lemma: lemma: regret_2: unexplored bias}, \ref{lemma: regret_3: Bt=0} and picking the optimal parameters in each step, we bound the regret as:
    \begin{align*}
        &\E\left[\sum_{t=1}^T \ell_t(a_t) - \ell_t(\pi^*(x_t)) \right] \\
        &\leq \otil\left(  \alpha TK\sigma^2 + \frac{\alpha K\calE^*}{\mu} + \frac{M\alpha\sigma^2 \log N}{\mu^2} + \frac{M\log N}{\alpha} + \mu\sqrt{T\calE^*} + T\mu\sigma + \frac{M(1+\calE^*)}{\sigma^2}\right)\\
        &= \otil\left( \sqrt{MK(\log N)T\sigma^2} + \sqrt{\frac{MK(\log N)\calE^*}{\mu}} + \frac{M(\log N)\sigma }{\mu} +\mu\sqrt{T\calE^*} + T\mu\sigma + \frac{M(1+\calE^*)}{\sigma^2} \right)  \\
        &\qquad \qquad \qquad \qquad  \qquad \qquad \qquad \qquad+ \otil\left(M\log N \left(\sqrt{\frac{K\calE^*}{\mu T}} + \sqrt{K\sigma^2} + \frac{\sigma}{\mu}\right)\right)
        \tag{picking the optimal $\alpha$ under the constraints of $\alpha$}\\
        &= \otil\left( \sqrt{MdT\sigma^2} + \sqrt{\frac{Md\calE^*}{\mu}} + \frac{Md\sigma }{\mu} +\mu\sqrt{T\calE^*} + T\mu\sigma + \frac{M(1+\calE^*)}{\sigma^2} \right) \tag{assume $T\geq M\log N $} \\
        &= \otil\left( \sqrt{MdT}\sigma + M^{\frac{1}{3}}d^{\frac{1}{3}}\sqrt{\calE^*}T^{\frac{1}{6}} + M^{\frac{1}{3}}d^{\frac{1}{3}}\calE^{*\frac{1}{3}}\sigma^{\frac{1}{3}} T^{\frac{1}{3}} + \sqrt{Md}\calE^{*\frac{1}{4}}T^{\frac{1}{4}}\sqrt{\sigma} + \frac{M(1+\calE^*)}{\sigma^2}\right) \tag{picking the optimal $\mu$}\\
        &= \otil\left( (M^2 d)^{\frac{1}{3}}(1+\calE^*)^{\frac{1}{3}}T^{\frac{1}{3}} + (Md)^{\frac{1}{3}}\sqrt{\calE^*}T^{\frac{1}{6}} + (M^3 d^2)^{\frac{1}{7}} (1+\calE^*)^{\frac{3}{7}}T^{\frac{2}{7}} + M^{\frac{3}{5}}d^{\frac{2}{5}}(1+\calE^*)^{\frac{2}{5}}T^{\frac{1}{5}} \right) \tag{picking the optimal $\sigma$}\\
        &= \otil\left( M^{\frac{2}{3}}d^{\frac{2}{5}}(1+\calE^*)^{\frac{1}{3}}T^{\frac{1}{3}} \right), 
    \end{align*}
    where the last step uses the fact that we only care about the case when $1+\calE^* \leq \sqrt{T}$ to simplify the bound.
\end{proof}

\end{document}